\documentclass{article} 
\usepackage{iclr2027_conference,times}

\usepackage{amsmath,amsfonts,bm}

\def\eqref#1{equation~\ref{#1}}

\def\1{\bm{1}}

\DeclareMathAlphabet{\mathsfit}{\encodingdefault}{\sfdefault}{m}{sl}
\SetMathAlphabet{\mathsfit}{bold}{\encodingdefault}{\sfdefault}{bx}{n}

\usepackage{hyperref}

\hypersetup{
	colorlinks=true,
	citecolor=blue!65!black!55,
	linkcolor=blue!55!black!65,
	urlcolor=purple!50!black!80,
}

\usepackage{url}

\usepackage{microtype}
\usepackage{graphicx}
\usepackage{subcaption}
\usepackage{booktabs}
\usepackage{breqn}
\usepackage{algorithm}
\usepackage{algorithmic}
\usepackage[table]{xcolor}
\usepackage{multirow}
\usepackage{textcomp}
\usepackage{amsmath}
\usepackage{amssymb}
\usepackage{mathtools}
\usepackage{amsthm}
\usepackage[capitalize,noabbrev]{cleveref}
\usepackage{colortbl} 

\theoremstyle{plain}
\newtheorem{theorem}{Theorem}[section]
\newtheorem{proposition}[theorem]{Proposition}
\newtheorem{lemma}[theorem]{Lemma}
\newtheorem{corollary}[theorem]{Corollary}
\theoremstyle{definition}
\newtheorem{definition}[theorem]{Definition}
\newtheorem{assumption}[theorem]{Assumption}
\theoremstyle{remark}

\title{UTA: Universal Training Adaptive Learning Rate Decay}

\author{Anda Tang \\
the State Key Lab of General AI\\
School of Intelligence Science and Technology\\
Peking University\\
Beijing, China\\
\texttt{tanganda@pku.edu.cn} \\
\And
Junjie Wang \\
the State Key Lab of General AI\\
School of Intelligence Science and Technology\\
Peking University\\
Beijing, China\\
\texttt{wangjunjie25@stu.pku.edu.cn} \\
\And
Xuanlin Yang\\
the State Key Lab of General AI\\
School of Intelligence Science and Technology\\
Peking University\\
Beijing, China\\
\texttt{2401112159@stu.pku.edu.cn}
\And
Cong Fang \\
the State Key Lab of General AI\\
School of Intelligence Science and Technology\\
Peking University\\
Beijing, China\\
\texttt{fangcong@pku.edu.cn}
\And
Zhouchen Lin \thanks{Corresponding author.} \\
the Pazhou Laboratory (Huangpu)\\
the State Key Lab of General AI\\
School of Intelligence Science and Technology\\
the Institute for Artificial Intelligence\\
Peking University\\
Beijing, China\\
\texttt{zlin@pku.edu.cn}
}

\iclrfinalcopy 
\begin{document}

\maketitle

\begin{abstract}
	Learning rate decay (LRD) is critical for training high-performance models, yet most designs remain empirically driven. Although the Universal Budget Aware (UBA) LRD offers a theoretical framework, its reliance on SGD dynamics limits its applicability to modern Adam-type optimizers and leaves its key `optimization difficulty parameter' opaque. To bridge this gap, we propose the Universal Training Adaptive (UTA) LRD.  
	Unlike UBA's fixed offline trajectory anchored solely to iterations, UTA treats LRD as a closed-loop shape-and-scale problem. By continuous-time formulation, UTA is training-state adaptive with segment-wise optimal shape. This adaptivity links external iterations with internal training states, enabling a near-optimal path in both progression shape and learning rate (LR) scale, a property  unattainable by other LRDs, including UBA. 
	Empirically, we demonstrate consistent improvements across language models (LLaMA 60M to 1B)  and vision models (ResNet50, ViT-B) on ImageNet with plug-and-play ease and negligible overhead. Extensive ablation studies further confirm UTA's robustness across dimensions, including peak LR, final decay ratio, its parameter $\varphi_0$, component efficacy and generalization to fine-tuning scenarios, underscoring its practical implementability.
\end{abstract}

\section{Introduction}

The learning rate decay (LRD) plays  a fundamental role in
training high-performance deep learning models \citep{tang2026stepsize,tissue2025scaling,li2025functional}.
Its design directly governs the convergence behavior, generalization capability, and final performance of models across a wide spectrum of tasks and architectures. 
Notably, the influence of LRDs becomes even more pronounced in resource-constrained or time-sensitive scenarios, where the number of training iterations is limited \citep{hagele2024scaling,tang2026stepsize}. 
These advantages underscore the practical importance of LRDs not only as a tuning parameter but as an essential lever for sustainable and scalable model development.

Research in this area has evolved along two main lines. 
A line of work \citep{he2016deep,smith2017cyclical,smith2019super,li2019budgeted,chen2022rex,pan2022extremebert,zhai2022scaling,hagele2024scaling,hu2024minicpm} focus on designing LRDs primarily through empirically driven approaches, effectively enhancing accuracy.
Another line of research is theoretically driven, mainly seeking to establish frameworks and scaling laws for LRDs to provide theoretical explanation for phenomena \citep{pan2022eigencurve,defazio2023optimal,tissue2025scaling,li2025functional,luo2025multi,tang2026stepsize}. 
Notably, Tang et al. propose a theoretically grounded LRD called UBA  \citep{tang2026stepsize} to narrow this gap. This framework adopts a min‑max formulation that accounts for robustness to variations in loss landscape curvature, thereby guaranteeing performance under worst‑case conditions. The UBA LRD mitigates heuristic design while maintaining strong and consistent performance across diverse tasks, architectures, model scales, and training budgets. Consequently, it removes the costly trial‑and‑error process typically required to select an appropriate LRD for a given network and training setup.

Despite its merits, UBA remains confined to its design framework which is analyzed in the context of Stochastic Gradient Descent (SGD). It raises questions about its generalization to adaptive optimizers such as Adam \citep{kingma2014adam}. 
Moreover, although UBA establishes a connection to optimization difficulty via the local Hessian, its pivotal parameter $\varphi$ is itself invariant to the internal dynamics of networks. Thus it lacks real‑time feedback from the training state.
This disconnect between the predetermined LRD and the dynamic training trajectory represents a key limitation in current practice.
These limitations prompt a natural question: 
\par
{\centering  \textit{Can we explain the pivotal parameter $\varphi$ in universal budget aware LRD, further closing the gap between principle and practice?} \par
} 
In this work, we take a step toward this goal by extending the theoretical foundation of UBA to the Adam optimizer, aiming to combine theoretical soundness with practical, state-aware adaptability.

To achieve this, we propose the \textbf{U}niversal \textbf{T}raining LR under \textbf{A}daptive moment estimation (UTA) LRD.
First, it is theoretically grounded, built upon a principled robust optimization framework.
Moreover, it is dynamically training adaptive, adjusting stepwise learning rates (LRs) based on the real-time optimizer states during training. 
This stands in contrast to LRDs that predefine a fixed curve for LR variation. 
Additionally, it is inherently optimizer-agnostic. Although this paper focuses on its implementation with the Adam, the underlying framework generalizes to other optimizers.

To solve this extended optimization problem, we introduce a segment-wise continuous analysis. This approach offers three advantages. First, it yields the optimal shape of LRD curve. Second, it naturally gives rise to the concept, intrinsic time, defined as the cumulative LR effort. 
This quantity establishes a mapping between the external world of physical iteration count and an internal world of model optimization, serving as a more faithful measure of training progress.
The continuous analysis further describes the energy evolution of optimization progress along intrinsic time, which gives the dynamic setting of the proposed LRD. Consequently, the derived LRD is near‑optimal in shape and energy‑calibrated in scale.
Practically, the proposed LRD is plug‑and‑play, requiring minimal modification to existing model architectures or training pipelines, thus enabling easy adoption across diverse deep‑learning frameworks.

Empirically, UTA delivers consistent improvements across both language (LLaMA-60M to 1B) and vision (ResNet50, ViT-B on ImageNet) tasks. Moreover, comprehensive ablation studies substantiate its practical advantages: (i) robustness to  hyper-parameters including peak LR, final decay ratio, and $\varphi_0$; (ii) the effect of its state-aware mechanism; and (iii) effective generalization to parameter-efficient fine-tuning scenarios.

\textbf{Main contributions}:
\begin{itemize}
	\item We propose UTA, a novel generalized LRD framework, which extends the principled UBA framework from SGD to adaptive optimizers, such as AdamW, thereby overcoming a key limitation of prior work that restricts its applicability. 
	\item We introduce the concept of intrinsic time as a unified measure of training progress, establishing a mapping from external iteration counts to internal optimization dynamics. This perspective not only reveals the fundamental nature of UBA's pivotal parameter $\varphi$ by identifying it as the scale of intrinsic time, but also provides a principled foundation for dynamically determining the LRs.
	\item We construct a theoretically grounded mechanism that combines a segment-wise analysis and an intrinsic-time-based energy evolution, coupling the optimal LRD shape with real-time optimization states. 
	\item We conduct extensive experiments across language (LLaMA-60M to 1B) and vision (ResNet50, ViT-B on ImageNet) tasks, along with comprehensive ablations on hyper-parameter sensitivity, dynamic component efficacy, automatic LR methods, and fine-tuning transferability. These results confirm UTA's consistent performance gains, robustness, and plug-and-play deployability with negligible overhead.
\end{itemize}
The remainder of this paper is organized as follows. Section \ref{Preliminary1} is the preliminary. Section \ref{Methodology} presents the methodology.  
Section \ref{Experiments} reports experimental results. Section \ref{conclusion} concludes with a discussion and directions for future work.
A comprehensive review of related work is provided in Appendix \ref{related-work-appendix}.
Theoretical properties is provided in Appendix~\ref{Theoretical Analysis}.

\section{Preliminary}\label{Preliminary1}
\paragraph{Robust optimization of learning rate.}
Let $f(W,\xi)$ denote a neural network parameterized by parameters $W\in \mathbb{R}^N$ on the data $\xi$. $\xi$ denotes the data sampling on the dataset $D$,  and let $\mathcal{F}$ be a function class that incorporates the variation in multiple factors, such as the network structures and data sets. Let $\mathcal{L}$ be the loss function, $T$ be a maximum number of training iterations, and $t$ be the current learning step. 
UBA \citep{tang2026stepsize} establishes a robust optimization framework for learning rate (ROLR) under SGD, formulated as:
\begin{equation}\label{eq1UBA}
	\begin{aligned}
		\min_{\eta_1  , \eta_2  ,..., \eta_{T-1}} & \max_{f\in \mathcal{F} }  \quad \mathcal{L}(f(W_T,\xi)),\\
		s.t. \quad &W_{t+1}=W_{t}-\eta_t\nabla \mathcal{L}(f(W_{t},\xi)),  \\
	\end{aligned}
\end{equation} 
where $t=0,1,2,\cdots,T-1$.
This min‑max formulation guarantees performance in worst‑case scenarios. 
However, this formulation constrains the feasible region to the standard gradient update process, limiting its direct compatibility with adaptive optimizers like Adam.

\paragraph{UBA-equivalent family.} This family, as defined in Definition \ref{definition:UBA-family}, encapsulates the optimal LRDs for gradient descent under a robust analysis \citep{tang2026stepsize}. Specifically, considering the $k$-th segment $ \left[ T_k, T_{k+1} \right]$  with length $\Delta T_k$, the optimal learning rate at the $t$-th step is given by the following instance of the UBA-equivalent family $\eta_{t}:=g_{a,b}(\varphi,\theta_{t,k})$, where $\theta_{t,k} :=\left( \left[ 2\left(t-T_{k}\right) -1\right] \pi/2\left(T_{k+1}-T_{k}\right)\right) +\left(k-1\right)\pi ,
$ 
the range parameters $a = \eta_{\min}^{(k)}$ and $b = \eta_{\max}^{(k)}$ correspond to the minimal and the maximal step sizes, respectively. $(k-1)\pi$ introduces a phase shift across segments. The parameter $\varphi$ controls the curvature of the LRD, related to optimization difficulty by the Hessian of local minima. However, $\varphi$ is static and expertise prior-driven. 
\begin{definition}[UBA-equivalent family]\label{definition:UBA-family}
	For a curvature parameter $\varphi>0$, range parameters $b,a\geq 0$, a step index $t$ and a period parameter $k$, the \emph{UBA-equivalent family} is defined as 
	\begin{equation}\label{eq:uba-normalized-shape}
		g_{a,b}(\varphi,\theta_{t,k})
		:=b+a	\frac{2\left[1 + \cos(\theta_{t,k})\right]}
		{2\varphi + (2-\varphi)\left[1 + \cos(\theta_{t,k})\right]}.
	\end{equation}
\end{definition}

\section{Methodology}\label{Methodology}

\subsection{Problem statement}\label{section31}


We extend the ROLR \eqref{eq1UBA} to a generalized update rule governed by optimizer-specific dynamics. 
The global min-max view is
\begin{equation}\label{min-max-adam0}
	\begin{aligned}
		&\min_{\eta_1  , \eta_2  ,..., \eta_{T-1}}\;\max_{f\in\mathcal F}\ \mathcal L \left(f(W_T,\xi)\right)\\
		& \quad\text{s.t.}\quad  
		W_{t+1} = W_t - \eta_t \cdot P(t) \nabla\mathcal L(W(t),\xi),
	\end{aligned}
\end{equation}
where $P(t)$ captures the optimizer-specific state dynamics, such as scaling or preconditioning, and may be chosen to represent different optimization algorithms within the same generalized framework. 
For instance, SGD corresponds to $P(t) = I$.
Adam is recovered by defining $P(t)$ such that $P(t) \nabla \mathcal{L}(W(t)) = m_t / (\sqrt{v_t} + \epsilon)$, where $m_t$ and $v_t$ are the first and second moment estimates following Adam update rules.
Other adaptive methods like  AdaGrad or Conda \citep{wang2025s} can be represented analogously through appropriate definitions of $P(t)$ based on their state variables.
This paper primarily focuses on the case where $P(t)$ implements the Adam dynamics (See details in Appendix \ref{Problem Statement-appendix}).

\subsection{Segment-wise continuous analysis}

There is a challenge in solving the problem \eqref{min-max-adam0} due to the exponential moving averages nature of the optimization process, where the loss landscape and gradient statistics of Adam evolve over time. 

To capture this nature yet retain tractability, we analyze training dynamics through their continuous-time limit rather than discrete updates following prior work \citep{li2017stochastic,li2019stochastic,li2020reconciling,li2021validity,li2025functional}. 
This perspective offers crucial advantages. First, we can make small segment-wise approximation of Adam by preconditioned GD.
In detail, we partition the problem into $K$ segments $[T_k,T_{k+1}) $, assuming local approximations for the curvature in each segment.

\paragraph{Segment-wise decomposition.}
We partition the training horizon $[0,\mathcal T]$ into $K$ segments
$
0 = T_0 < T_1 < \cdots < T_K = \mathcal T,
$
and focus on the $k$-th segment $[T_k,T_{k+1}]$. 
For ease of theoretical analysis, we transition from the discrete interval $[T_k,T_{k+1}]$ to a continuous representation. 
The local dynamics could be approximately analyzed, allowing us to model Adam as a preconditioned gradient flow with bounded approximation error (will be further explained in Paragraph Optimal Shape of Problem Solution\ref{Optimal Shape of Problem Solution}). Consequently, the adaptive Adam updates can be analyzed through the simpler lens of preconditioned gradient descent within each short segment.

\begin{definition}[intrinsic time]\label{def:U}
	For a segment $[T_k,T_{k+1})$, define 
	$
	U_k := \int_{T_k}^{T_{k+1}} \eta(t)\, \text{d} t ,
	$
	where $\eta(\tau)$ denotes the continuous-time learning rate.
\end{definition}

\paragraph{Intrinsic time.}
We introduce a key reparameterization of the training timeline via a continuous change of variables. 
The quantity $U_k$ has a natural interpretation as a \emph{time reparameterization}. 
Consider the mapping $t=\bar{T}(\tau):=\int_{T_k}^{\tau} \eta(z)\, \text{d} z$,  which transforms physical time (iteration index $\tau$) into an intrinsic time $t$ that reflects the internal optimization progress of model. Thus, $U_k$ represents the total intrinsic duration of the segment, providing a more accurate measure of training advancement than the raw iteration count.

\paragraph{Optimal shape of problem solution.}\label{Optimal Shape of Problem Solution}
We  characterize the optimal \emph{shape} of the stepsizes within each segment of Problem~\ref{min-max-adam0}.
Lemma~\ref{lem:segment-radius-continuous} controls the segment radius in terms
of the intrinsic time $U_k$. Lemma~\ref{lem:m-g} shows that on such short segments Adam bias-corrected momentum $\hat m_t$ tracks the true gradient with a small error.
These results validate a piecewise preconditioned gradient approximation and enable
Theorem~\ref{thm:uba-optimal-shape}, which identifies the UBA-equivalent family as the optimal segment-wise stepsize shape.

\begin{theorem}[Optimal UBA-equivalent shape on a segment]\label{thm:uba-optimal-shape}
	Consider the min-max problem~\ref{min-max-adam0} on segment $k$, and assume that the step sizes are constrained to lie in the interval $[\eta_{\min}^{(k)},\eta_{\max}^{(k)}]$. There exists an optimal shape of step sizes that minimizes the worst-case loss. This optimal shape belongs to the UBA-equivalent family for some curvature parameter $\varphi_k>0$. 
\end{theorem}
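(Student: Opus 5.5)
The plan is to reduce the segment-$k$ problem to the SGD min-max problem~\eqref{eq1UBA} that UBA \citep{tang2026stepsize} already solves, and then import its solution.

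\textbf{Step 1: reduce Adam to preconditioned gradient descent.} On a short segment $[T_k,T_{k+1}]$, Lemma~\ref{lem:segment-radius-continuous} bounds the displacement of the parameters by a constant times $U_k$. Lemma~\ref{lem:m-g} then shows that the bias-corrected momentum $\hat m_t$ differs from $\nabla\mathcal L(W_t)$ by an error that is small in $U_k$. Over the same radius the second-moment estimate $v_t$ changes only slightly, so I will freeze it. This gives a fixed preconditioner $P_k\approx \mathrm{diag}(\sqrt{v_{T_k}}+\epsilon)^{-1}$. Up to an error controlled by the two lemmas, the update becomes $W_{t+1}=W_t-\eta_t P_k\nabla\mathcal L(W_t)$.

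\textbf{Step 2: reduce to the UBA setting.} I change variables by $\tilde W=P_k^{-1/2}W$. The dynamics become plain gradient descent on $\tilde{\mathcal L}(\tilde W)=\mathcal L(P_k^{1/2}\tilde W)$, whose local Hessian is $P_k^{1/2}HP_k^{1/2}$. Taking the worst case over $f\in\mathcal F$ then amounts to taking the worst case over the spectrum of this preconditioned Hessian within a range $[\mu_k,L_k]$ determined by the function class. I then apply the usual local quadratic model: each eigen-direction with eigenvalue $\lambda$ contracts by the factor $\prod_t(1-\eta_t\lambda)$, or by $\exp(-\lambda U_k)$ in the continuous limit. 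The segment problem therefore becomes
\[
\min_{\eta\in[\eta^{(k)}_{\min},\eta^{(k)}_{\max}]}\ \max_{\lambda\in[\mu_k,L_k]}\ \prod_t(1-\eta_t\lambda)^2\, c_\lambda .
\]
This is exactly the SGD robust problem analysed by UBA, with the curvature range replaced by the preconditioned one.

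\textbf{Step 3: existence and identification of the optimizer.} Existence follows from compactness: the feasible set $[\eta^{(k)}_{\min},\eta^{(k)}_{\max}]^{\Delta T_k}$ is compact, and the worst-case objective is a maximum of continuous functions, hence continuous. To identify the minimizer, I invoke the UBA characterization from \citep{tang2026stepsize}. Its minimax (Chebyshev-type) optimality conditions yield step sizes whose reciprocals are interpolated between the range endpoints through a cosine profile. Rewriting that profile in the form of Definition~\ref{definition:UBA-family} gives $g_{a,b}(\varphi_k,\theta_{t,k})$, with $\varphi_k>0$ a function of the condition ratio $L_k/\mu_k$ and with $a,b$ fixed by $\eta^{(k)}_{\max},\eta^{(k)}_{\min}$. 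The phase shift $(k-1)\pi$ is just the UBA indexing across segments. Finally, the approximation errors from Step 1 are uniform in $\eta$, so the argmin of the true problem is attained within the same family up to an error that vanishes as $U_k\to0$.

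\textbf{Main obstacle.} I expect the hardest part to be Step 1 together with the final error transfer. Freezing $v_t$ and replacing $\hat m_t$ by the gradient requires the segment to be short compared with the EMA horizons $1/(1-\beta_1)$ and $1/(1-\beta_2)$. Exact optimality of the UBA shape holds only for the surrogate problem, so the statement must be read as \emph{approximately} optimal for the original problem. I would first try to make this rigorous by showing that the surrogate objective and the true objective differ uniformly by $O(U_k^2)$, and then arguing that the surrogate optimizer is therefore near-optimal for the true problem. If that uniform bound fails, I would restrict $\mathcal F$ to locally quadratic losses, where the reduction in Steps 1 and 2 becomes exact.
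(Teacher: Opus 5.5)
Your proposal follows essentially the same route as the paper's proof. Lemma~\ref{lem:m-g} and the local quadratic model reduce the segment to preconditioned gradient descent with a frozen preconditioner, an eigen-expansion turns the worst case over $f\in\mathcal F$ into $\max_{\lambda}\prod_t(1-\eta_t\lambda)^2$, and the UBA solution of that problem is then imported. Your symmetric change of variables (working with $P_k^{1/2}H_kP_k^{1/2}$ rather than the paper's non-symmetric $P_tH_k$, whose eigenvectors are not orthogonal as the paper's expansion tacitly assumes) and your compactness argument for existence tighten steps the paper leaves implicit.
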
\vspace{-0.3em}
The proof can be found in Appendix \ref{theorem-UBA-optimal-shape-appendix}.

For segment $k$, Theorem~\ref{thm:uba-optimal-shape} specifies that the \emph{shape} of the optimal step sizes is a member of UBA-equivalent Family. However, the precise shape of the LRD remains unknown since the key factor $\varphi$ is still free. 
To close this gap, in the next subsection, we determine the overall scale by introducing the intrinsic time, i.e., $U_k := \int_{T_k}^{T_{k+1}} \eta(\tau)\, \text{d} \tau$, to specify $\varphi$ and completing the optimal LRD.

\subsection{Regularized energy evolution via intrinsic time}\label{Regularized Energy Evolution}
This section establishes a theoretical control for scales of segment-wise LRD. 
We first model the training dynamics within each segment as a stochastic dynamical system following \citep{li2020reconciling,li2025functional}. From this stochastic differential equations (SDEs) formulation, 
we derive a bias-variance-style upper bound on the expected energy evolution (Theorem \ref{thm:bias-variance-intrinsic}). Minimizing this bound naturally leads to regularized problems:  the $\ell_1$-regularized formulation (Corollary \ref{cor:l1-Uk}) under a bounded LR constraint; the $\ell_2$-regularized surrogate (Corollary \ref{cor:l2-near-constant}) under a near-constant step-size approximation. 
Both problems admit explicit-form solutions, which yields the optimal intrinsic time  $U_k^{\ast}$. This solution inspires us to design the quantity that governs the scale of the UBA-shaped LR throughout the segment (see Appendix \ref{appendix-Regularized-Energy-Evolution} for details).


Denote $ \nabla\mathcal L(W_t)=\mathbb E_{\xi_t} \nabla\mathcal L(W_t,\xi_t)    $, we have $\nabla\mathcal L(W_t,\xi_t) = \nabla\mathcal L (W_t) + \delta_t $, where the gradient noise $\delta_t$ satisfies $\mathbb E (\delta_t) =0 $.
Since the segment is short, the stochastic preconditioned gradient flow can be modeled in accordance with SGD noise \citep{li2025functional} scaling by $\text{d}W(t)= -\eta (t)\,P(t)\nabla \mathcal L(W(t))\,\text{d}t +\;\eta (t)\,P(t)\Sigma^{1/2}\,\text{d}B(t),$
where $W(t)$ is the weight parameter, 
$\Sigma\succeq 0$ is the covariance of the gradient noise and $B(t)$ is a standard Brownian motion (see Appendix \ref{appendix-Regularized-Energy-Evolution} for details).

Under a quadratic expansion measurement, we define the energy function between the $t$-th step weights and local minimum within segment 
$	E(t):=\frac{1}{2}\|e(t)\|^2$, $e(t):=  H_k^{1/2} \left( W(t)-\bar W^{(k)}\right)     H_k^{1/2}
$ , where $H_k  $ is the abbreviation for $H_f^{(k)}$, the Hessian matrix at local minima (the precise construction is detailed in Appendix \ref{appendix-Regularized-Energy-Evolution}).

With the above setup, we obtain the following bias-variance-style bound expressed in terms of the intrinsic time.
\begin{theorem}
	\label{thm:bias-variance-intrinsic}
	Consider the $k$-th segment $[T_k,T_{k+1}]$ and the preconditioned stochastic
	gradient flow \eqref{eq:sde-seg-appendix}, 
	the expected energy at the segment end satisfies the bias--variance bound
	\begin{equation}\label{eq:bv-general}
		\begin{aligned}
			\mathbb E[E(T_{k+1})]&
			\;\le\;
			\mathbb E[E(T_k)]\,e^{-2\nu_\ell^{(k)}U_k}
			+\;
			\frac12\alpha\int_{T_k}^{T_{k+1}}\eta^2(t)\,\text{d}t,
		\end{aligned}
	\end{equation}
	where $\alpha := \sup_{t\in[T_k,T_{k+1}]}\operatorname{tr}\!\big(H_k^{1/2}P(t)\Sigma P(t)H_k^{1/2}\big)$ is the noise intensity bound, and $\lambda_{\min}(M(t))\ge \nu_\ell^{(k)} >0, M(t):=H_k^{1/2}P(t)H_k^{1/2}$ is the spectral lower bound.
\end{theorem}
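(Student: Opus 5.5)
The argument is an Itô computation on the energy $E(t)$, followed by a Grönwall comparison carried out in the intrinsic-time variable. Inside segment $k$, we use the local quadratic model around $\bar W^{(k)}$, so that $\nabla\mathcal L(W(t))\approx H_k\,(W(t)-\bar W^{(k)})$. We then work with the whitened error, written as the vector $e(t)=H_k^{1/2}(W(t)-\bar W^{(k)})$; the two-sided matrix notation in the paper is read as this vectorized version. The SDE then gives
\[
\text{d}e(t) = -\eta(t)\,H_k^{1/2}P(t)H_k^{1/2}\,e(t)\,\text{d}t + \eta(t)\,H_k^{1/2}P(t)\Sigma^{1/2}\,\text{d}B(t),
\]
where the drift matrix is $M(t)=H_k^{1/2}P(t)H_k^{1/2}$.

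\textbf{Step 1 (Itô's formula).} Apply Itô's formula to $E=\frac12\|e\|^2$:
\[
\text{d}E = -\eta(t)\,e^\top M(t) e\,\text{d}t + \tfrac12\eta^2(t)\operatorname{tr}\!\big(H_k^{1/2}P\Sigma P H_k^{1/2}\big)\text{d}t + \text{(martingale)}.
\]
Taking expectations removes the martingale term; this needs a square-integrability check, which is routine on a bounded segment with bounded $P$ and $\eta$. We then use two bounds. The spectral bound $e^\top M e\ge \nu_\ell^{(k)}\|e\|^2 = 2\nu_\ell^{(k)}E$ requires $M$ symmetric; $P(t)$ is diagonal positive for Adam, so $M$ is symmetric PSD. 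The trace is bounded by $\alpha$. Together these give the differential inequality
\[
\tfrac{\text{d}}{\text{d}t}\mathbb E[E(t)] \le -2\nu_\ell^{(k)}\eta(t)\,\mathbb E[E(t)] + \tfrac12\alpha\,\eta^2(t).
\]

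\textbf{Step 2 (Grönwall).} Multiply by the integrating factor $\exp\!\big(2\nu_\ell^{(k)}\int_{T_k}^t\eta\big)$, which is exactly an exponential of the partial intrinsic time $\bar T(t)$. Integrating from $T_k$ to $T_{k+1}$ gives
\[
\mathbb E[E(T_{k+1})]\le \mathbb E[E(T_k)]e^{-2\nu_\ell^{(k)}U_k} + \tfrac12\alpha\int_{T_k}^{T_{k+1}} \eta^2(s)\,e^{-2\nu_\ell^{(k)}(\bar T(T_{k+1})-\bar T(s))}\,\text{d}s.
\]
Since $\eta\ge0$, the exponent in the integrand is nonpositive, so the weight is at most $1$. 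This yields the stated variance term $\frac12\alpha\int\eta^2$.

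\textbf{Main obstacle.} The delicate point is justifying the quadratic and linearized drift. This needs the segment to be short, so that $W(t)$ stays in a neighborhood where $\nabla\mathcal L$ is well approximated by $H_k(W-\bar W^{(k)})$, and Adam's $\hat m_t/\sqrt{v_t}$ is well approximated by $P(t)\nabla\mathcal L$. For this I would invoke Lemma~\ref{lem:segment-radius-continuous} (segment radius controlled by $U_k$) and Lemma~\ref{lem:m-g} (momentum tracks the gradient), and treat the resulting errors as absorbed into the model SDE \eqref{eq:sde-seg-appendix}, on which the theorem is stated.

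A second subtlety is symmetry. If $P(t)$ were a non-symmetric preconditioner, we would need $\lambda_{\min}$ of the symmetric part of $M(t)$ instead. I would state the bound under symmetric $P$, which covers Adam's diagonal preconditioner, and note this explicitly.
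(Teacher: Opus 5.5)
Your proposal is correct and follows the same route as the paper: It\^o's formula on $E=\tfrac12\|e\|^2$, the spectral lower bound on $M(t)$ and the trace bound by $\alpha$, then an integrating factor, and finally bounding the leftover exponential weight by one. Your integrating factor $\exp\bigl(2\nu_\ell^{(k)}\int_{T_k}^{t}\eta\bigr)$ is the right one, whereas the paper's $\mu(t)=\exp\bigl(\nu_\ell^{(k)}\int_{T_k}^{t}\eta\bigr)$, carried through consistently, only gives $e^{-\nu_\ell^{(k)}U_k}$ in the bias term; your version is what actually yields the stated $e^{-2\nu_\ell^{(k)}U_k}$.
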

The proof can be found in Appendix \ref{appendix-Regularized-Energy-Evolution}.

\begin{corollary}[$\ell_1$-regularized intrinsic-time problem]
	\label{cor:l1-Uk}
	Under Theorem~\ref{thm:bias-variance-intrinsic} and the constraint	$0\le \eta(t)\le \eta_{\max}$ on $[T_k,T_{k+1}]$, the intrinsic-time selection problem
	reduces to the $\ell_1$-regularized problem, 
	\begin{equation}\label{eq:l1-problem}
		\min_{U\in[0,\eta_{\max}\Delta T_k]}
		\Big\{
		\mathbb E[E(T_k)]e^{-2\nu_\ell^{(k)}U} + \frac12\alpha\eta_{\max}U
		\Big\}.
	\end{equation}
	Its minimizer admits the closed form  $\min \left(   \max \left(0, \frac{1}{2\nu_\ell^{(k)}}
	\ln\!\left(\frac{4\nu_\ell^{(k)}\mathbb E[E(T_k)]}{\alpha\eta_{\max}}\right) \right),  \eta_{\max}\Delta T_k  \right))$.

\end{corollary}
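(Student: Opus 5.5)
The plan has two parts. First, reduce the bias--variance bound of Theorem~\ref{thm:bias-variance-intrinsic} to a function of the single scalar $U_k$. Then minimize that scalar function over the feasible interval.

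For the reduction, the bias term $\mathbb E[E(T_k)]e^{-2\nu_\ell^{(k)}U_k}$ already depends on the schedule only through $U_k$. For the variance term, I use the pointwise constraint $0\le\eta(t)\le\eta_{\max}$, which gives $\eta^2(t)\le \eta_{\max}\eta(t)$. Integrating over the segment yields
$\int_{T_k}^{T_{k+1}}\eta^2(t)\,\text{d}t\le \eta_{\max}U_k$.
Substituting this into \eqref{eq:bv-general} gives the upper bound
$\Phi(U):=\mathbb E[E(T_k)]e^{-2\nu_\ell^{(k)}U}+\tfrac12\alpha\eta_{\max}U$.
The constraint also forces $U_k=\int\eta\in[0,\eta_{\max}\Delta T_k]$. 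Conversely, every $U$ in this interval is attained, for example by a constant rate $\eta\equiv U/\Delta T_k\le\eta_{\max}$. So minimizing the surrogate bound over admissible schedules is exactly problem \eqref{eq:l1-problem}. The penalty is linear in the nonnegative quantity $U=\|\eta\|_{L^1}$, which is why it is the ``$\ell_1$-regularized'' problem. I will state explicitly that the reduction is to minimizing this upper bound, not the true energy.

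For the minimization, $\Phi$ is smooth and strictly convex on $\mathbb R$ whenever $\mathbb E[E(T_k)]>0$, since $\Phi''(U)=4(\nu_\ell^{(k)})^2\mathbb E[E(T_k)]e^{-2\nu_\ell^{(k)}U}>0$. Setting
$\Phi'(U)=-2\nu_\ell^{(k)}\mathbb E[E(T_k)]e^{-2\nu_\ell^{(k)}U}+\tfrac12\alpha\eta_{\max}=0$
gives the unconstrained minimizer
$\tilde U=\frac{1}{2\nu_\ell^{(k)}}\ln\!\big(\frac{4\nu_\ell^{(k)}\mathbb E[E(T_k)]}{\alpha\eta_{\max}}\big)$.
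For a strictly convex function of one variable, the minimizer over a closed interval $[0,\eta_{\max}\Delta T_k]$ is the projection of $\tilde U$ onto that interval. This is because $\Phi$ is decreasing left of $\tilde U$ and increasing right of it. The projection is precisely $\min(\max(0,\tilde U),\eta_{\max}\Delta T_k)$, the claimed closed form.

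The step needing most care is the degenerate and edge cases rather than the calculus. If $\mathbb E[E(T_k)]=0$, then $\Phi$ is increasing and the minimizer is $0$, matching the formula with $\ln 0=-\infty$ clipped to $0$. If $\alpha=0$, the argument of the logarithm is infinite, and the formula correctly returns the upper endpoint. I also need $\alpha$ and $\nu_\ell^{(k)}$ to be the segment-uniform constants from Theorem~\ref{thm:bias-variance-intrinsic}, independent of the chosen $\eta$, so that the reduction is legitimate. If $P(t)$ depends on the trajectory, I will treat these as given uniform bounds, as in the theorem statement.
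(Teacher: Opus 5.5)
Your proposal is correct and follows the paper's proof essentially step for step: you bound $\int_{T_k}^{T_{k+1}}\eta^2(t)\,\text{d}t\le\eta_{\max}U_k$ via $\eta^2\le\eta_{\max}\eta$, substitute this into the bias--variance bound, solve the first-order condition, and clamp the result to $[0,\eta_{\max}\Delta T_k]$. Three parts of your argument are rigorous additions that the paper leaves implicit: the strict-convexity justification that clamping gives the constrained minimizer, the check that every $U$ in the interval is attainable, and the degenerate-case analysis.
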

The proof can be found in Appendix \ref{appendix-Regularized-Energy-Evolution}.

\begin{corollary}[$\ell_2$ surrogate via near-constant stepsizes]
	\label{cor:l2-near-constant}
	Under Theorem~\ref{thm:bias-variance-intrinsic} and continuity of UBA-shape function, the intrinsic-time selection problem can be approximated as $\ell_2$-regularized problem, 
	\begin{equation}\label{eq:l2-obj-const}
		\min_{U_k\in[0,\eta_{\max}\Delta T_k]}
		\mathbb E[E(T_k)]e^{-2\nu_\ell^{(k)}U_k}
		+ \frac12\alpha\frac{U_k^2}{\Delta T_k},
	\end{equation}
	The minimizer is given by the Lambert-$W$ \citep{corless1996lambert} expression $\min \left(   \max \left(0, \bar{U} \right), \eta_{\max}\Delta T_k  \right)$, 
	where $\bar{U}:= \frac{1}{2\nu_\ell^{(k)}}\,W\!\left(
	\frac{2(\nu_\ell^{(k)})^2 \mathbb E[E(T_k)] \Delta T_k}{\alpha}
	\right)$.
\end{corollary}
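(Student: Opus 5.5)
The plan is in two parts: first pass from the bound of Theorem~\ref{thm:bias-variance-intrinsic} to the $\ell_2$ objective \eqref{eq:l2-obj-const}, then minimize that scalar objective explicitly.

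\textbf{Reduction to \eqref{eq:l2-obj-const}.} The bias term in \eqref{eq:bv-general} depends on $\eta$ only through $U_k=\int_{T_k}^{T_{k+1}}\eta(t)\,\text{d}t$. The variance term involves $\int\eta^2$. By Cauchy--Schwarz,
\[
\int_{T_k}^{T_{k+1}}\eta^2(t)\,\text{d}t\ \ge\ \frac{U_k^2}{\Delta T_k},
\]
with equality exactly when $\eta$ is constant on the segment. On a short segment the UBA-shaped step size from Theorem~\ref{thm:uba-optimal-shape} is continuous, and its oscillation is small. We therefore write $\eta(t)=U_k/\Delta T_k+r(t)$ with $\int r=0$, which gives
\[
\int\eta^2=\frac{U_k^2}{\Delta T_k}+\int r^2 .
\]
The remainder $\int r^2$ is of second order in the oscillation of the shape. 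Under the near-constant approximation we drop it, so the surrogate objective for the bound becomes \eqref{eq:l2-obj-const}. The constraint $0\le\eta\le\eta_{\max}$ turns into $U_k\in[0,\eta_{\max}\Delta T_k]$.

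\textbf{Explicit minimization.} Set $A:=\mathbb E[E(T_k)]$ and $\nu:=\nu_\ell^{(k)}$, and let
\[
\phi(U)=A e^{-2\nu U}+\frac{\alpha U^2}{2\Delta T_k}.
\]
Then
\[
\phi''(U)=4\nu^2Ae^{-2\nu U}+\frac{\alpha}{\Delta T_k}>0,
\]
so $\phi$ is strictly convex on $\mathbb R$. Hence the constrained minimizer is the projection onto $[0,\eta_{\max}\Delta T_k]$ of the unconstrained stationary point. This projection is the $\min(\max(\cdot))$ form in the statement. The stationarity condition $\phi'(U)=0$ reads
\[
2\nu A e^{-2\nu U}=\frac{\alpha U}{\Delta T_k}.
\]
Put $x=2\nu U$. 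The condition becomes
\[
x e^{x}=\frac{4\nu^2 A\Delta T_k}{\alpha}=:z .
\]
Since $z\ge0$, the principal branch gives the unique real solution $x=W(z)$, and so $\bar U=W(z)/(2\nu)$.

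\textbf{Constant check.} Recomputing with the stated normalization, the argument of $W$ should be $4\nu^2A\Delta T_k/\alpha$, whereas the statement writes $2(\nu_\ell^{(k)})^2A\Delta T_k/\alpha$. A factor of $2$ could be hidden in a convention such as $E=\tfrac12\|e\|^2$, or the statement may contain a typo. I would flag this and check it when writing the proof.

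The convexity and Lambert-$W$ steps are routine. The main obstacle is justifying the replacement of $\int\eta^2$ by $U_k^2/\Delta T_k$, since this step is an approximation rather than an inequality. Cauchy--Schwarz only gives a lower bound, so in general $\phi$ is a lower surrogate for the true bound, not an upper bound on it. To control the error, I would bound the deviation $\int r^2$ by $\Delta T_k\,(\eta_{\max}^{(k)}-\eta_{\min}^{(k)})^2$ and argue that this quantity vanishes as the segments are refined, using the continuity of the UBA shape.
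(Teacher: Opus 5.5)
Your proposal follows the same route as the paper's proof, with your Cauchy--Schwarz decomposition and strict-convexity argument making explicit what the paper only asserts: replace $\int_{T_k}^{T_{k+1}}\eta^2(t)\,\text{d}t$ by $U_k^2/\Delta T_k$ using near-constancy of the UBA shape on a short segment, solve the first-order condition with Lambert-$W$, and clamp. Your constant check is correct and needs no hedging: the paper gets $2(\nu_\ell^{(k)})^2$ only through an arithmetic slip when it multiplies the first-order condition by $2\nu_\ell^{(k)}e^{2\nu_\ell^{(k)}U_k}$ (the right-hand side is $4(\nu_\ell^{(k)})^2\mathbb E[E(T_k)]$, not $2(\nu_\ell^{(k)})^2\mathbb E[E(T_k)]$), and since the objective is written directly in terms of $\mathbb E[E(T_k)]$, no normalization of $E$ can absorb the factor, so the argument of $W$ should be $4(\nu_\ell^{(k)})^2\mathbb E[E(T_k)]\Delta T_k/\alpha$ (the stated value would need the regularizer $\alpha U_k^2/\Delta T_k$ without the $\tfrac12$), a constant-factor change that leaves the dominant ratio $\nu_\ell^{(k)}\mathbb E[E(T_k)]/\alpha$ used downstream intact.
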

The proof can be found in Appendix \ref{appendix-Regularized-Energy-Evolution}.

\paragraph{A computable plug-in estimate of $U_k^\ast$.}
Both closed-form solutions for $U_k^\ast$ in
Corollaries~\ref{cor:l1-Uk} and~\ref{cor:l2-near-constant} are dominated by the
ratio
\(
\frac{\nu_\ell^{(k)} \mathbb E[E(T_k)]}{\alpha}
\).
We retain this dominant part to preserve the leading-order asymptotic behavior
of the theoretical solutions while relying only on quantities observable during
training.

Since \(\nu_\ell^{(k)}\), \(\mathbb E[E(T_k)]\), and \(\alpha\) are not directly
observable, we adopt a plug-in estimate with measurable surrogates. The energy
term is approximated by the current training loss,
\(\mathbb E[E(T_k)]\approx \mathcal L(W_{T_k})\), following the local quadratic
definition of \(E\). The spectral lower bound is approximated by the Adam
preconditioned gradient,
\(
\nu_\ell^{(k)} T_k \approx
\frac{2E(T_k) }{\|\hat{m}_t/(\sqrt{\hat{v}_t}+\epsilon)\|^2}
\).
The noise intensity \(\alpha\) is treated as a tunable hyper-parameter. 
A direct substitution yields an inverse-quadratic dependence on the
preconditioned update norm, which becomes numerically unstable when
\(\|\cdot\|\) is small. We therefore replace \(\|\cdot\|^{-2}\) with the surrogate \((1+\|\cdot\|)^{-1}\), which preserves monotonicity
and asymptotic behavior while remaining bounded in \((0,1]\). This gives the
stabilized plug-in estimate
\begin{equation}\label{varphi_0}
	U_t^{\ast}
	=
	1/{
		\left(1 + \sqrt{\varphi_0/2}\,
		\left\|\frac{\hat{m}_t}{\sqrt{\hat{v}_t} + \epsilon}\right\| \right)
	},
\end{equation}
where \(\varphi_0\) is a hyper-parameter and
\(
U_k^{\ast}
= \inf_{t\in [T_k,T_{k+1})}
1/{
	(1 + \sqrt{\varphi_0/2}\,
	\left\|\frac{\hat{m}_t}{\sqrt{\hat{v}_t} + \epsilon}\right\|)
}
\).
This estimate preserves the dominant asymptotic behavior of the theoretical
solutions while depending only on quantities available in standard adaptive
optimizers (see Appendix
\ref{appendix-Regularized-Energy-Evolution-computable plug-in estimate} for
details).

	\subsection{Derived LRD}
	The optimal solution of problem \eqref{min-max-adam0} is given by a member of  UBA-equivalent family (Definition~\ref{definition:UBA-family}) where $\varphi_k$ is unknown. In this subsection, we complete the UTA design by \emph{matching} the energy-optimal intrinsic time $U_t^\ast$ \eqref{varphi_0}, with the intrinsic time implied by the UBA-equivalent shape , i.e., $U_{\varphi_k}=\int_{T_k}^{T_{k+1}} g_{a,b}\left(\varphi,\theta_{t,k}\right) \text{d}t$. Concretely, for the  desired parameterization of the shape, such as fixing $b=\eta_{\min}^{(k)}$ and $a=\left(\eta_{\max}^{(k)}- \eta_{\min}^{(k)} \right)$, solving for $\varphi_t$ under iteration-wise extension, we obtain the scalar equation $
	U_{\varphi_t}
	= U_t^\ast$ (the details of integral are in Appendix~\ref{Derived-LRD-appendix}). Then we have  
	$
		\varphi_t
		= \varphi_0 \left\|\frac{\hat{m}_t}{\sqrt{\hat{v}_t} + \epsilon}\right\|^2
	$.
	This yields a per-segment effective LRD that is (i) near-optimal in shape for ~\ref{min-max-adam0}, by Theorem~\ref{thm:uba-optimal-shape}, and (ii) energy-calibrated in scale, by subsection~\ref{Regularized Energy Evolution}. Furthermore, we extend the LRD on entire horizon and keep a controlled approximation for the original dynamics (see Appendix \ref{Global LRD by segment-wise concatenation}).
	The derived LRD on the entire horizon $[0,T]$ admits the explicit form
	\begin{equation}\label{UTA-final-function}
		\begin{aligned}
			\eta(t)
			= g_{a,b}(\varphi_t,\theta(t)),
		\end{aligned}
	\end{equation}
	where $a=\eta_{\min}$, $b=\eta_{\max} - \eta_{\min}$, 
	$
	\theta(t)
	:= \frac{\pi \cdot t  }{T},
	$
	the curvature parameter	
	$
	\varphi_t
	= \varphi_0 \left\|\frac{\hat{m}_t}{\sqrt{\hat{v}_t} + \epsilon}\right\|^2
	$.

	\subsection{Complexity analysis}
	
	The computational overhead of UTA is negligible in practice. Each update involves merely 8‑16 scalar arithmetic operations and a single norm computation. The cost is orders of magnitude lower than the FLOPs required for standard forward and backward propagation in modern neural networks.

	\begin{figure}[htbp]
		\centering
		\begin{subfigure}[b]{0.245\textwidth}
			\centering
			\includegraphics[width=\textwidth]{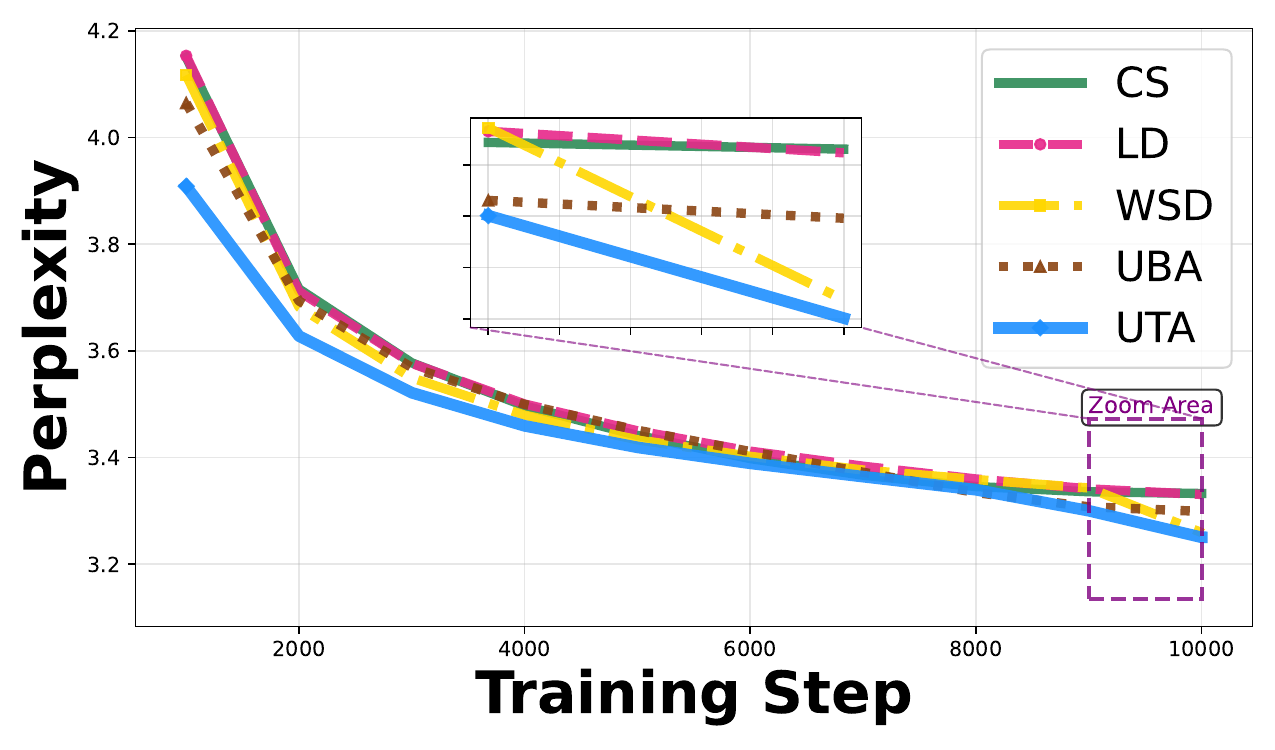}
			\caption{LLaMA-60M.}
			\label{Performance on Language Model-LLaMA60M}
		\end{subfigure}
		\begin{subfigure}[b]{0.245\textwidth}
			\centering
			\includegraphics[width=\textwidth]{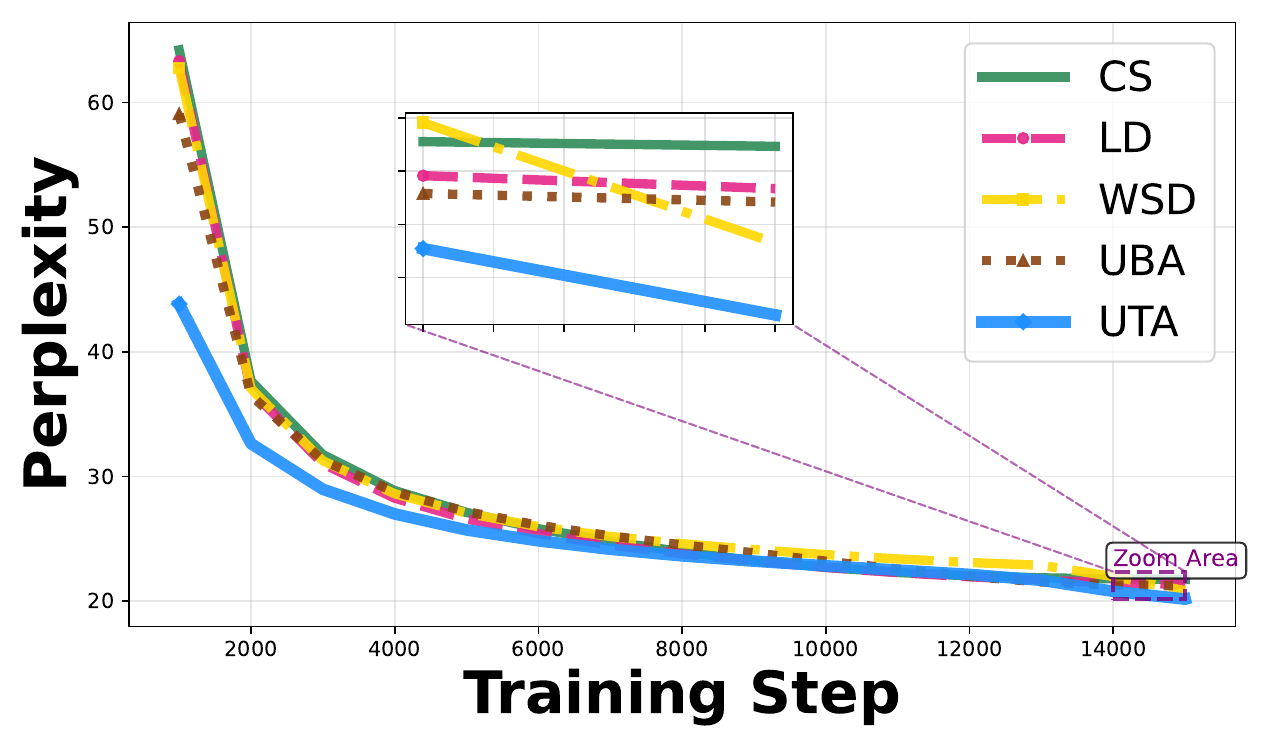}
			\caption{LLaMA-130M.}
			\label{Performance on Language Model-LLaMA130M}
		\end{subfigure}
		\begin{subfigure}[b]{0.245\textwidth}
			\centering
			\includegraphics[width=\textwidth]{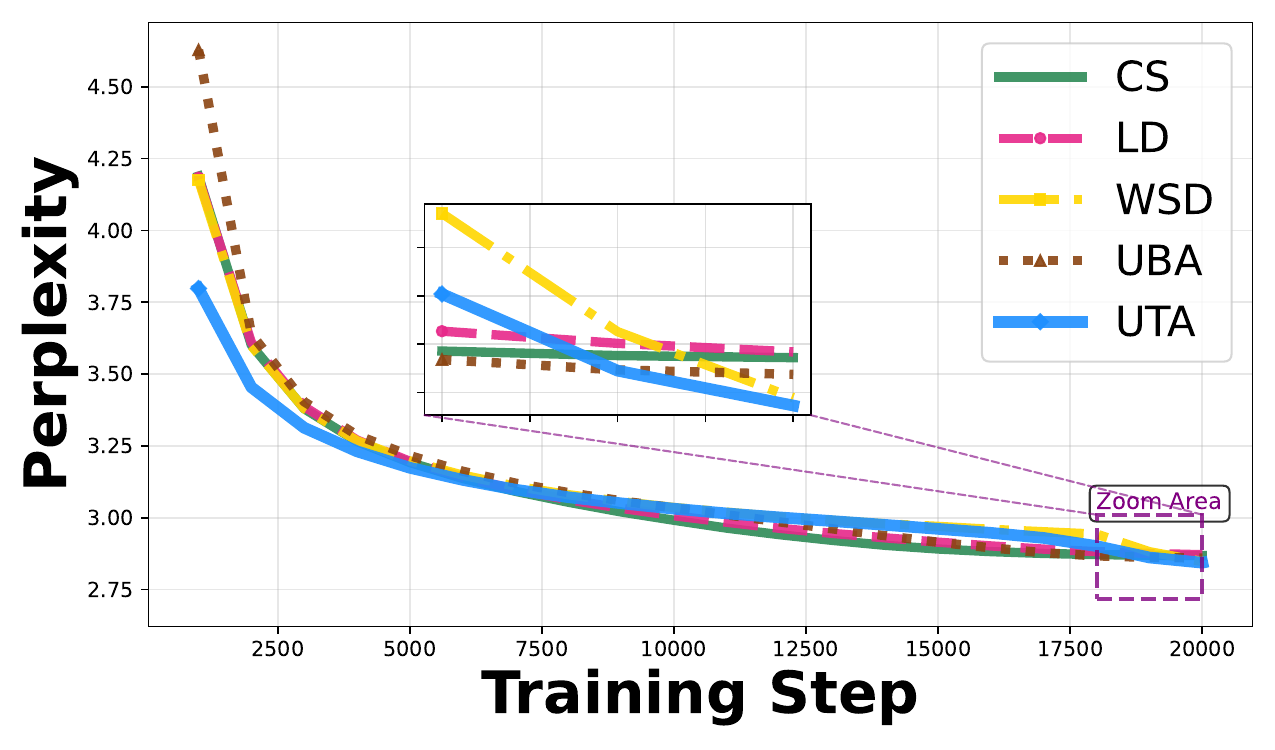}
			\caption{LLaMA-350M.}
			\label{Performance on Language Model-LLaMA350M}
		\end{subfigure}
		\begin{subfigure}[b]{0.245\textwidth}
			\centering
			\includegraphics[width=\textwidth]{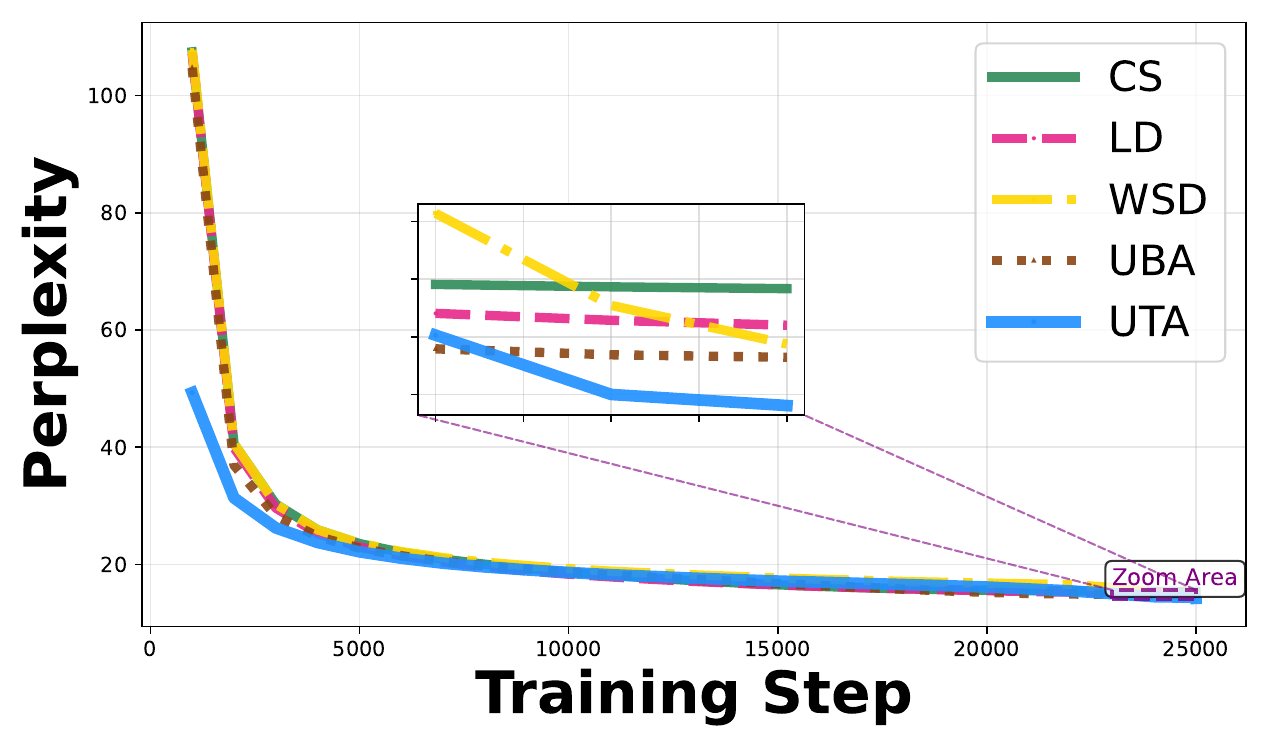}
			\caption{LLaMA-1B.}
			\label{Performance on Language Model-LLaMA1B}
		\end{subfigure}
		\begin{subfigure}[b]{0.245\textwidth}
			\centering
			\includegraphics[width=\textwidth]{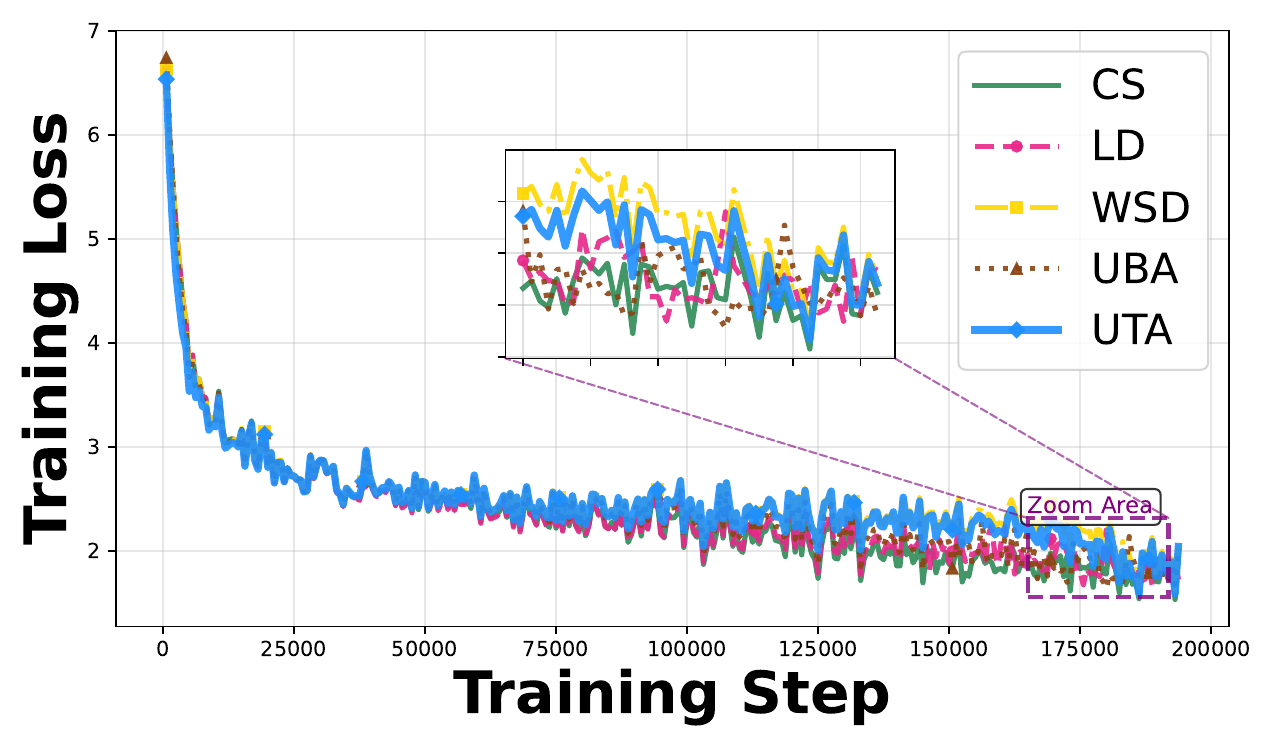}
			\caption{ImageNet-ResNet50}
			\label{fig:sub1}
		\end{subfigure}
		\begin{subfigure}[b]{0.245\textwidth}
			\centering
			\includegraphics[width=\textwidth]{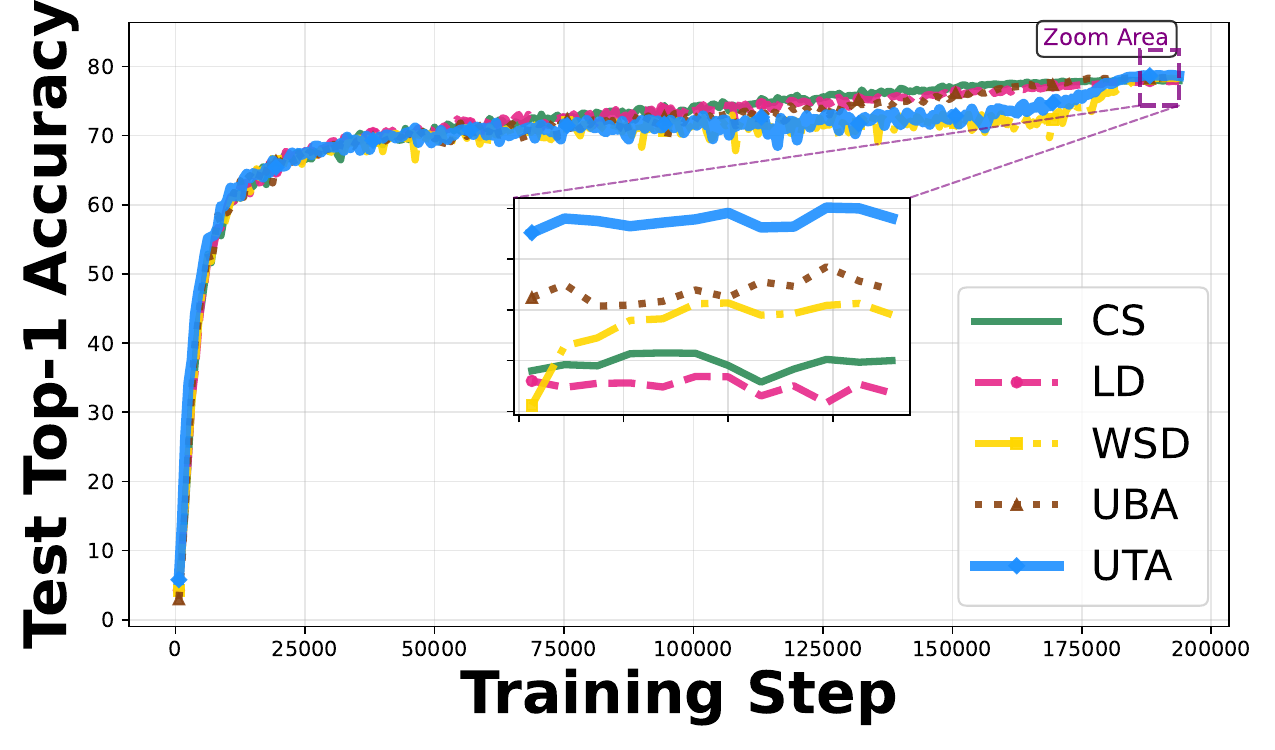}
			\caption{ImageNet-ResNet50.}
			\label{fig:sub2}
		\end{subfigure}
		\begin{subfigure}[b]{0.245\textwidth}
			\centering
			\includegraphics[width=\textwidth]{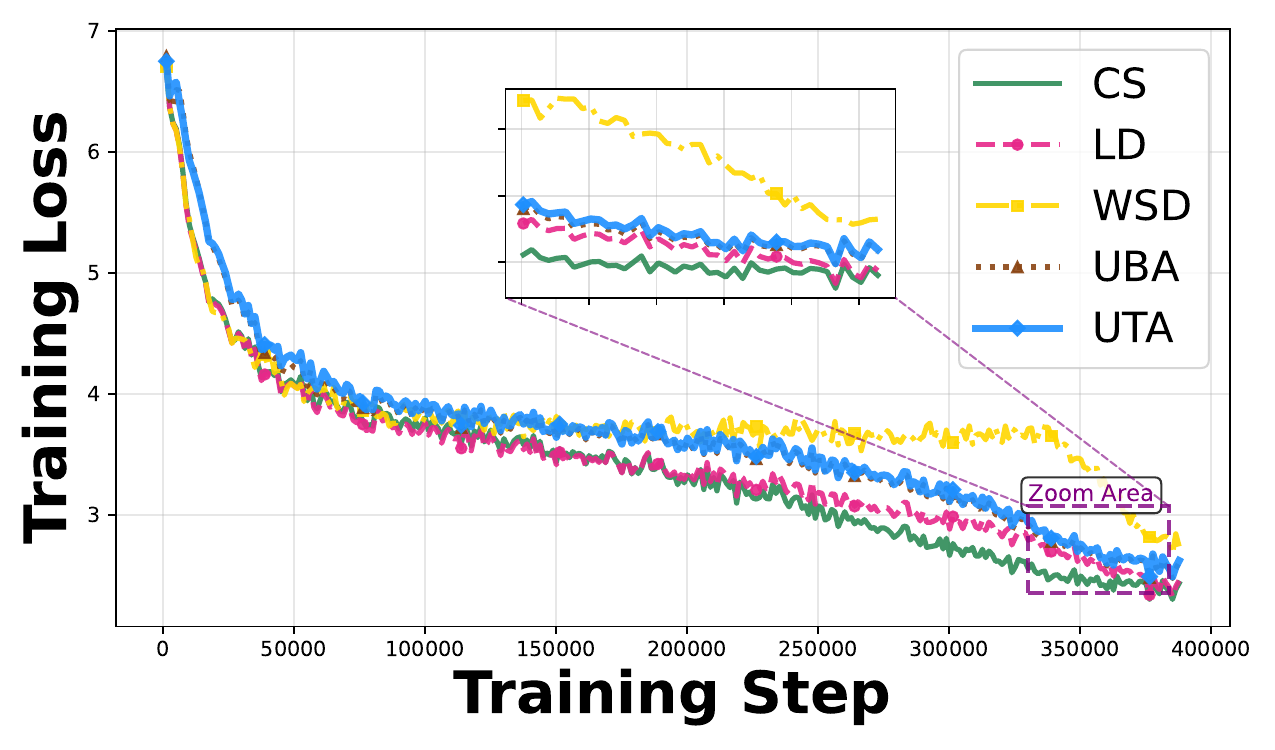}
			\caption{ImageNet-ViT-B.}
			\label{fig:sub3}
		\end{subfigure}
		\begin{subfigure}[b]{0.245\textwidth}
			\centering
			\includegraphics[width=\textwidth]{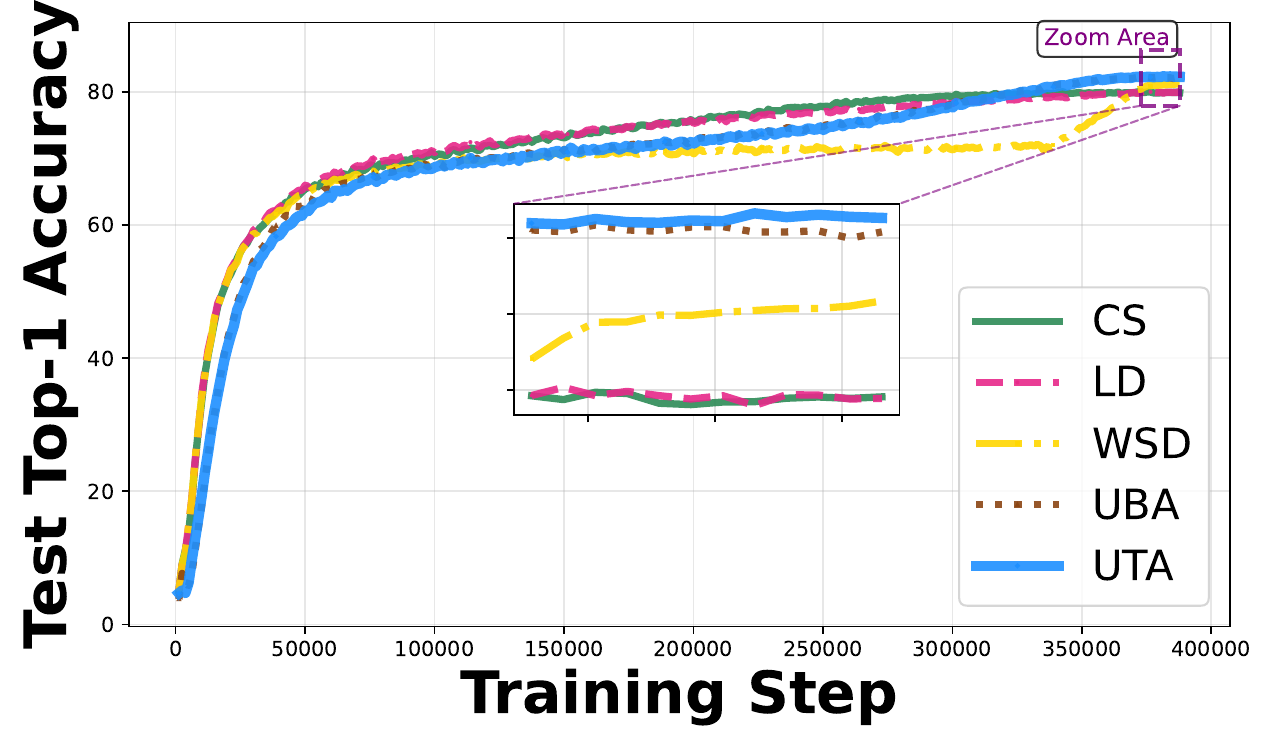}
			\caption{ImageNet-ViT-B.}
			\label{fig:sub4}
		\end{subfigure}
		\caption{Training curves of LLaMA models, ResNet-50 and ViT-B. }
		\label{perplexity-llama}
	\end{figure}

	\section{Experiments}\label{Experiments}
	We conduct a systematic evaluation of the proposed LRD along three key axes of diversity: (i) \textbf{model architecture}, covering both convolutional (ResNet\citep{he2016deep}) and attention-based (ViT\citep{dosovitskiy2020image}) vision models, as well as language models (LLaMA); (ii) \textbf{task modality}, encompassing vision and language tasks; and (iii) \textbf{training scale}, covering the large-scale training (ImageNet and LLM pre-training from 60M to 1B). This multi-faceted evaluation is designed to rigorously assess the generality, robustness, and scalability of our method.
	\paragraph{Baselines.}
	Our baseline selection is informed by recent systematic evaluations of LRDs. The study \citep{tang2026stepsize} performs comprehensive empirical comparisons of LRDs. It shows that the UBA is a consistently top-performing LRD, and identifies cosine and linear decay as competitive approaches. Accordingly, we include \textbf{UBA} \citep{tang2026stepsize} as a strong latest baseline, and include \textbf{Cosine(CS)} \citep{loshchilov2017sgdr} and \textbf{Linear decay(LD)} \citep{li2019budgeted} as representative classical baselines in our evaluation. To further strengthen the comparison, we also include the \textbf{WSD} \citep{zhai2022scaling,hu2024minicpm} LRD, a widely used and empirically validated strategy in large-scale language model pre-training, as an additional strong baseline representative of modern practices.
	The LRD-specific hyper-parameters and details are provided in Appendix \ref{baselines-appendix} and \ref{LRD-Setting-appendix}.
	\paragraph{Experimental details.} 
	All baselines follow \textbf{the same standard configurations} as established in prior work on LLaMA~\citep{lialin2023stack,zhao2024galore}, ResNet~\citep{wightman2021resnet}, and ViT~\citep{touvron2021training,yu2022metaformer}, ensuring a fair comparison. These configurations are widely adopted defaults that have been empirically validated as the baseline. 
	
	A key advantage of UTA lies in its parameter $\varphi_0$. While an optimal $\varphi_0$ can further improve model performance, we intentionally fix its value across all tasks and architectures to ensure fair evaluation (Appendix \ref{LRD-Setting-appendix}).

	Comprehensive experimental details, such as computational resources, configurations, implementation specifics, are elaborated in Appendix \ref{appendix-implement-details}.
	Our implementation strictly follows the PyTorch official API. It is inherited from the base class
	\verb|torch.optim.lr_LRDr.LRDcheduler|. This design ensures native compatibility with the optimizer in \verb|torch.optim|. 
	Fully functional, production-ready code is publicly released (see Appendix \ref{appendix-implement-details} for link and implement details).

	\subsection{Performance on language task with LLaMA}
	We evaluate on \textbf{four scales} of the LLaMA architecture: 60M, 130M, 350M, and 1B parameters. For each model, we consider \textbf{four distinct training budgets}: 10K, 15K, 20K, and 25K training steps, covering regimes from extremely limited to moderately long training. 
	To provide a granular view of the optimization dynamics, we report the validation perplexity at around 30\%, 60\%, and 100\% of the total budget for each scale (e.g., for 1B model, at 8k, 16k, and final 25k steps). 
	Performance is measured by validation perplexity, which directly reflects the model generative uncertainty and language modeling quality.
	Results are summarized in Tab.~\ref{llama-model-Perplexity}.

	\begin{table}[!ht]
		\centering
		\small
		\caption{Perplexity ($e^{\text{validation loss}}$) on LLaMA 60M, 130M, 350M and 1B models. Lower is better (\textdownarrow).}\label{llama-model-Perplexity}
		
		\setlength{\tabcolsep}{4pt}
		\renewcommand{\arraystretch}{1.1}
		\begin{tabular}{@{}llcccccccccccc@{}}
			\toprule
			\multirow{2}{*}{LRD}& Model & \multicolumn{3}{c}{60M} & \multicolumn{3}{c}{130M} & \multicolumn{3}{c}{350M} & \multicolumn{3}{c}{1B} \\
			\cmidrule(lr){3-5} \cmidrule(lr){6-8} \cmidrule(lr){9-11} \cmidrule(lr){12-14}
			& Step & 3k& 6k & final& 5k  & 10k & final & 7k& 14k & final & 8k& 16k& final\\
			\midrule
			CS &  & 35.79 & 29.91 & 28.03 & 27.12 & 22.71 & 21.76 & 22.03 & \textbf{18.24} & 17.62 & 20.14 & \textbf{16.32} & 15.41 \\
			LD &  & 35.76 & 30.34 & 27.99 & 26.60 & 22.71 & 21.43 & 22.16 & 18.72 & 17.68 & 19.96 & 16.49 & 15.10 \\
			WSD &  & 34.80 & 30.03 & 26.03 & 27.05 & 23.71 & 20.83 & 22.41 & 19.66 & 17.25 & 20.60 & 17.94 & 14.94 \\
			UBA &  & 35.48 & 30.30 & 27.01 & 27.18 & 23.18 & 21.12 & 22.24 & 18.68 & 17.42 & 19.78 & 16.41 & 14.83 \\
			\rowcolor{cyan!8} UTA&  & \textbf{33.72} & \textbf{29.57} & \textbf{25.80} & \textbf{25.74} & \textbf{22.83} & \textbf{20.16} & \textbf{22.14} & 19.58 & \textbf{17.17} & \textbf{19.56} & 16.97 & \textbf{14.53} \\
			\bottomrule
		\end{tabular}
	\end{table}

	UTA consistently outperforms all baselines across model scales and most training checkpoints. This is not merely a marginal improvement at the final step, but a systematic acceleration throughout the entire training process. 
	Firstly, \textbf{UTA exhibits remarkably fast convergence in the early stages}. At the 30\% checkpoint, UTA achieves the lowest perplexity across all model sizes. These early-stage gains demonstrate UTA's ability to establish a proper initial trajectory, which is critical for large-scale training where early decisions profoundly impact final performance.
	Most critically, \textbf{UTA achieves best final perplexity across all four model scales}, with absolute improvements from 0.23 (60M), 0.67 (130M), 0.08 (350M) and 0.30 (1B)  perplexity units over the best baseline. 
	Although the absolute final gains range from 0.23 to 0.67 perplexity, the relative improvement becomes increasingly meaningful as model size grows. Because for large-scale foundation models, even a 0.2-unit reduction in perplexity translates into a statistically significant leap in generation quality and reliability.
	We analyze that these consistent gains may be attributed to (i) near-optimal in \textit{shape} by Theorem~\ref{thm:uba-optimal-shape}, and (ii) \textit{energy-calibrated} in scale by subsection~\ref{Regularized Energy Evolution}. 
	It gives UTA the ability to dynamically \textit{balance exploration and stabilization} during training.

	We also illustrate the perplexity evolution in Fig. \ref{perplexity-llama}. 
	All curves show a characteristic rapid initial descent followed by gradual convergence. UTA consistently outperforms throughout training, demonstrating both faster initial convergence and superior final performance, while WSD and UBA exhibit competitive performance. These patterns align with theoretical expectations in Appendix \ref{Theoretical Analysis} .

	Moreover, we emphasize that the observed performance gains are not simply due to an increased average LR. In our experiments, the average LR of the WSD baseline was notably higher than that of our UTA, yet UTA still achieves better perplexity. The advantage stems from the optimal selection of intrinsic time in UTA, which promotes more effective learning and reduces the accumulation of noise during training, by Theorem \ref{theorem4-2}. This leads to improved generalization, ultimately resulting in superior model performance across different scales.

	\subsection{Performance on vision task ImageNet}
	We further validate the generalization of UTA on the ImageNet classification benchmark using ResNet-50 and ViT-B architectures. This selection tests generalization across model depth, width, and fundamental architectural biases. 
	Fig. \ref{fig:sub1}, \ref{fig:sub2}, \ref{fig:sub3} and \ref{fig:sub4} depict training curves of ResNet-50 and ViT-B on ImageNet under different LRDs. For both architectures, UTA consistently achieves the highest final accuracy.  UBA demonstrates competitive performance,  while WSD and CS show similar but suboptimal convergence. 
	Tab.~\ref{imagenet-tab} reports the test top-1 accuracy. UTA consistently achieves the highest accuracy across both models. 
	Specifically, it improves ResNet-50 accuracy to 78.73\%, outperforming the strongest baseline UBA (78.38\%) by 0.35\%. For ViT-B, UTA reaches 82.30\%, surpassing the competitive UBA LRD (82.05\%) by 0.25\%. 
	These gains are statistically meaningful given the scale and complexity of ImageNet. 
	These consistent outperformance demonstrates that UTA generalizes beyond language modeling to vision tasks. It offers a robust and architecture-agnostic advantage in large-scale, real-world training scenarios, generalizing beyond language modeling to vision tasks.

	\begin{table}[!t]
		\centering
		\small
		\caption{ImageNet test top-1 Acc. (\%) of different LRDs on ResNet50 and ViT-B for 300 epochs. 
			Higher values indicates better performance (\textuparrow). Best results in bold.}\label{imagenet-tab}
		\renewcommand{\arraystretch}{1.15}
		\begin{tabular}{lccccc}
			\toprule
			LRD & CS & LD & WSD & UBA & UTA\\
			\midrule
			ResNet-50& 78.34$_{\pm 0.024}$  & 77.90$_{\pm 0.028}$ & 78.30$_{\pm 0.024}$ & 78.38$_{\pm 0.014}$ & \multicolumn{1}{>{\columncolor{cyan!8}}c}{\textbf{78.73}$_{\pm 0.029}$}\\
			ViT-B& 79.92$_{\pm 0.057}$ & 79.96$_{\pm 0.104}$ & 81.14$_{\pm 0.048}$ & 82.05$_{\pm 0.041}$ & \multicolumn{1}{>{\columncolor{cyan!8}}c}{\textbf{82.30}$_{\pm 0.037}$ }\\
			\bottomrule
		\end{tabular}
	\end{table}

	\subsection{Ablation study}\label{Ablation-Study1234}


	\begin{table*}[t]
		\centering
		\footnotesize
		\setlength{\tabcolsep}{3.5pt}
		\renewcommand{\arraystretch}{1.1}
		\caption{
			Perplexity ($e^{\text{validation loss}}$) of LLaMA-60M and LLaMA-130M under different peak LRs.
			Lower is better. 
		}
		\label{tab:llama-60m-130m-full-lr}
		\begin{tabular}{@{}lcccccccc@{}}
			\toprule
			Model & LRD
			& $5{\times}10^{-4}$ & $1{\times}10^{-3}$
			& $3{\times}10^{-3}$  & $5{\times}10^{-3}$
			& $7{\times}10^{-3}$ & $1{\times}10^{-2}$ & Best from LR Sweep
			\\
			\midrule
			\multirow{5}{*}{60M}
			& CS   & 32.61 & 28.03 & 26.91   & 26.91   & 26.31 & 26.57 &  26.31\\
			& LD   & 32.83 & 27.99 & 26.80  & 26.77   & 26.25 & 26.46 & 26.25 \\
			& WSD  & 29.40 & 26.03 & 26.56   & \textbf{26.34}     & 26.06 & \textbf{26.28} & 26.03\\
			& UBA  & 30.65 & 27.01 &\textbf{ 26.51}   & 26.42     & 26.26 & 26.57 & 26.26\\
			\rowcolor{cyan!8}& UTA  & \textbf{29.28} &\textbf{ 25.80} & 26.86   & 27.00  & \textbf{25.81} & 27.11 & \textbf{25.80} \\
			\midrule
			\multirow{5}{*}{130M}
			& CS   & -- & 21.76 & 21.32   & 20.94   & 21.27 & --  &20.94\\
			& LD   & -- & 21.43 & 21.18  &  20.90    & \textbf{21.24} & --  & 20.90\\
			& WSD  & -- & 20.83 & 21.16   &  \textbf{20.82}   & 21.39 & -- & 20.82\\
			& UBA  & -- & 21.12 & \textbf{20.78}   & 21.25  & 21.92 & --  & 20.78\\
			\rowcolor{cyan!8}& UTA  & -- & \textbf{20.16} & 21.36   & 21.65 &  22.39 & --  &\textbf{20.16}\\
			\bottomrule
		\end{tabular}
	\end{table*}
	
	\paragraph{Peak LR sensitivity analysis.}\label{}
	
	To address the concern that a fixed peak LR may disadvantage baselines with different optimal LRs, we conduct a LR sweep on LLaMA-60M and 130M (Tab.~\ref{tab:llama-60m-130m-full-lr}). As expected, optimal peak LRs vary across LRDs. Nevertheless, under each LRD's optimal LR, \textbf{UTA consistently outperforms all baselines on both scales}. On 60M, UTA achieves a best perplexity of \textbf{25.80}, surpassing the best baseline UBA (26.26) by \(0.46\) and WSD (26.03) by \(0.23\). On 130M, UTA's advantage is more pronounced, achieving \textbf{20.16}, which outperforms WSD (20.82) by \(0.66\) and UBA (20.78) by \(0.62\). These margins, while modest in absolute terms, are meaningful in the context of language modeling validation loss.

	\paragraph{Parameter analysis of $\varphi_0$.}\label{Ablation-Study-d}
	We analyze UTA's sensitivity to $\varphi_0$ (\eqref{varphi_0}) on LLaMA-60M and 130M (Tab.s~\ref{Cross-d-table-appendix1}--\ref{Cross-d-table-appendix2}). UTA exhibits robustness. On 60M, it achieves perplexity below 26.10 for 7 out of 10 tested values, with variation of only 0.24 across a 10-fold range; on 130M, performance remains nearly flat (20.63--20.67) across $\varphi_0 \in [0.15, 0.5]$. Performance improves modestly as $\varphi_0$ decreases, consistent with our theory that smaller $\varphi_0$ yields larger intrinsic steps. Crucially, UTA demonstrates a broader effective range than UBA, which degrades for $\varphi_0 < 0.5$. We attribute this to UTA's adaptive mechanism, which dynamically compensates for suboptimal $\varphi_0$ choices, a capability absent in UBA. These results confirm that UTA is both performant and easy to use.
	
	\paragraph{The effect of final LR in decay.} Regarding the LRD  implementations, some practices use decay to 10\% of the peak LR, while others use 0\%. 
	Thus, we ablate the final LR ratio (10\% vs. 0\%) on LLaMA-60M and 130M (Tab.~\ref{llama-model-Perplexity-appendix-min_lr_ratio_0}). Results show that decaying to 0\% underperforms 10\% across all methods and scales. Specifically, on 130M, LD degrades from 21.43 to 21.89, and UTA from 20.16 to 20.68. This suggests that aggressive decay prematurely saturates optimization under our moderate training budgets. Importantly, UTA remains the top performer under both 0\% and 10\% decay, surpassing the best baseline by 0.16 (60M) and 0.32 (130M). This confirms that UTA is robust to the choice of final LR, and that the default 10\% decay setting does not unfairly advantage UTA.
	
	\paragraph{The role of dynamic state-Aware mechanism.}
	To validate the component efficacy of the dynamic mechanism in our UTA LRD, we conduct an ablation study comparing the full version (Ours-Full) against a static variant (Ours-Static). We conduct the ablation study on the CIFAR100-ResNet34, ImageNet-ResNet50, LLaMA60M and 130M. In the static variant, the dynamic part is removed. All other hyper-parameters and training settings remain identical. 
	As shown in Tab.~\ref{ablation-tab-1-appendix}, UTA (Full) consistently outperforms its static counterpart. Moreover, the improvement is pronounced, suggesting that the dynamic state adaptation mechanism is a crucial component for LRD performance. More details and analyses can be found in Appendix \ref{ablation-Dynamic State-Aware Mechanism-appendix} and \ref{Curvature-alignment-UTA-UBA-appendix}.
	
	\paragraph{Comparison with automatic LR methods.}
	We additionally compare UTA against automatic LR methods, such as D-Adaptation~\citep{defazio2023learning} and Prodigy~\citep{mishchenko2023prodigy}, which adjust LRs dynamically without predefined schedule shapes, in contrast to our main baselines.
	Results on LLaMA-60M, 130M, and 350M reported in Tab.~\ref{tab:vs-auto-lrs-appendix} in Appendix. 
	UTA matches D-Adaptation on 60M and outperforms it on 130M and 350M by 0.49 and 0.47, respectively. Against Prodigy, UTA achieves consistent improvements of 1.57, 1.10, and 0.80 across the three scales. We attribute UTA's advantage to its shape design, which explicitly accounts for the training horizon, unlike methods that rely solely on gradient statistics.

	\paragraph{Parameter-efficient fine-tuning scenario.}
	To demonstrate that our LRD generalizes beyond full-parameter training, we evaluate it in a parameter-efficient fine-tuning (PEFT) scenario. We apply UTA to fine-tune a ViT-B model on CIFAR-100  and OxfordPets , using LoRA\citep{hu2022lora} ($r=16$) as the PEFT method. 
	Tab.~\ref{ablation-tab-lora-appendix} demonstrates UTA effectiveness in PEFT settings. UTA outperforms UBA on CIFAR100 and OxfordPets. 
	These results confirm that UTA generalizes effectively to parameter-constrained optimization (see details in Appendix \ref{PEFT-scenario-appendix}).

	\section{Conclusion} \label{conclusion}

	We introduced \textbf{UTA}, a novel LRD framework that bridges the gap between theoretically principled shape design and practical, state-aware adaptation. 
	Methodologically, UTA extends the ROLR from SGD to  adaptive optimizers, particularly AdamW, overcoming a key limitation of prior work. Through the concept of intrinsic time and energy-calibrated adaptation, UTA automatically determines the LR in response to real-time optimization states.
	Experiments on vision and language models confirm that UTA consistently outperforms existing LRDs with minimal overhead. Comprehensive ablations confirm its robustness to hyper-parameters (peak LR, final decay ratio, $\varphi_0$), the necessity of its dynamic mechanism, and its superiority over automatic LR methods. 
	Overall, UTA not only delivers a plug‑and‑play LRD solution but also advances the broader understanding of state dependent optimization. 
	In the future, potential extensions include adapting the framework to optimizers such as Conda \citep{wang2025conda} and SOAP \citep{vyas2025soap}. Moreover, the optimal amplitude of LR fluctuations remains an open problem. A rigorous analysis could help choosing optimal fluctuation range, further improve model performance.

	\newpage

	\newpage

\bibliography{iclr2027_conference}
\bibliographystyle{iclr2027_conference}

\newpage

\appendix
\addtocontents{toc}{\protect\setcounter{tocdepth}{-1}}  
\tableofcontents
\addtocontents{toc}{\protect\setcounter{tocdepth}{2}}   

\section*{Appendix}

\section{Notation}\label{app:notation}

For ease of reference, we summarize the main notation used throughout the paper.

\paragraph{Optimization and training.}
\begin{center}
	\small
	\setlength{\tabcolsep}{6pt}
	\renewcommand{\arraystretch}{1.15}
	\begin{tabular}{@{}ll@{}}
		\toprule
		Symbol & Meaning \\
		\midrule
		$W \in \mathbb{R}^N$ & model parameters (a single vector) \\
		$t$ & current training step (physical iteration index) \\
		$T$ & total number of training iterations \\
		$\mathcal{T}$ & training horizon, $[0,\mathcal{T}]$ \\
		$\mathcal{L}(\cdot)$ & training loss function \\
		$\xi$ & data sample; $\xi_t$ denotes the sample at step $t$ \\
		$D$ & training dataset \\
		$\mathcal{F}$ & function class (network structures, datasets, etc.) \\
		$f(W,\xi)$ & neural network parameterized by $W$ on data $\xi$ \\
		$\nabla \mathcal{L}(W_t)$ & full gradient at step $t$ \\
		$\nabla \mathcal{L}(W_t,\xi_t)$ & stochastic gradient at step $t$ \\
		$\delta_t$ & gradient noise, $\delta_t := \nabla\mathcal L(W_t,\xi_t) - \nabla\mathcal L(W_t)$ \\
		\bottomrule
	\end{tabular}
\end{center}

\paragraph{Optimizer and learning rate.}
\begin{center}
	\small
	\setlength{\tabcolsep}{6pt}
	\renewcommand{\arraystretch}{1.15}
	\begin{tabular}{@{}ll@{}}
		\toprule
		Symbol & Meaning \\
		\midrule
		$\eta_t$ & learning rate at step $t$ \\
		$\eta(\cdot)$ & continuous-time learning rate \\
		$\eta_{\min},\eta_{\max}$ & minimum and maximum learning rates \\
		$\eta_{\min}^{(k)},\eta_{\max}^{(k)}$ & range parameters on segment $k$ \\
		$\phi(\cdot)$ & learning rate decay (LRD) as a function of physical time \\
		$P(t)$ & optimizer-specific preconditioner (e.g., $P(t)=I$ for SGD) \\
		$m_t, v_t$ & first and second moment estimates in Adam \\
		$\hat m_t, \hat v_t$ & bias-corrected moments in Adam \\
		$\beta_1, \beta_2$ & Adam momentum coefficients \\
		$\epsilon$ & small constant for numerical stability \\
		$\lambda$ & weight decay coefficient \\
		$B$ & batch size \\
		\bottomrule
	\end{tabular}
\end{center}

\paragraph{Segments and intrinsic time.}
\begin{center}
	\small
	\setlength{\tabcolsep}{6pt}
	\renewcommand{\arraystretch}{1.15}
	\begin{tabular}{@{}ll@{}}
		\toprule
		Symbol & Meaning \\
		\midrule
		$K$ & number of segments \\
		$[T_k,T_{k+1}]$ & the $k$-th segment \\
		$\Delta T_k$ & segment length, $\Delta T_k := T_{k+1} - T_k$ \\
		$k$ & segment index; under iteration-wise extension, $k$ coincides with $t$ \\
		$U_k$ & intrinsic time of segment $k$, $U_k := \int_{T_k}^{T_{k+1}} \eta(t)\,\mathrm{d}t$ \\
		$U_t^\ast$ & energy-optimal intrinsic time at step $t$ \\
		$\bar T(\tau)$ & intrinsic-time reparameterization, $\bar T(\tau) := \int_{T_k}^{\tau} \eta(z)\,\mathrm{d}z$ \\
		$\bar T^{-1}(t)$ & inverse intrinsic-time map \\
		$\varphi$ & curvature parameter of the UBA-equivalent family \\
		$\varphi_k, \varphi_t$ & curvature parameter on segment $k$ / at step $t$ \\
		$\varphi_0$ & hyper-parameter in the UTA plug-in estimate \\
		$\psi$ & normalized intrinsic time, $\psi := U_k^\ast / \Delta T_k$ \\
		$\theta_{t,k}$ & phase variable of the UBA-equivalent family \\
		$\theta(t)$ & phase variable in the global UTA LRD \\
		\bottomrule
	\end{tabular}
\end{center}

\paragraph{Energy evolution and curvature.}
\begin{center}
	\small
	\setlength{\tabcolsep}{6pt}
	\renewcommand{\arraystretch}{1.15}
	\begin{tabular}{@{}ll@{}}
		\toprule
		Symbol & Meaning \\
		\midrule
		$\bar W^{(k)}$ & local minimum in segment $k$ \\
		$H_k$ & Hessian at the local minimum, $H_k := H_f^{(k)}$ \\
		$E(t)$ & energy function, $E(t) := \tfrac12 \|e(t)\|^2$ \\
		$e(t)$ & transformed deviation, $e(t) := H_k^{1/2}(W(t)-\bar W^{(k)})$ \\
		$\nu_\ell^{(k)}$ & spectral lower bound of $H_k^{1/2} P(t) H_k^{1/2}$ \\
		$\alpha$ & noise intensity bound \\
		$\Sigma$ & covariance of the gradient noise \\
		$B(t)$ & standard Brownian motion \\
		$\mu(t)$ & integrating factor in the SDE analysis \\
		$\varepsilon$ & target accuracy in the plug-in estimate \\
		\bottomrule
	\end{tabular}
\end{center}

\paragraph{Functional scaling law (FSL).}
\begin{center}
	\small
	\setlength{\tabcolsep}{6pt}
	\renewcommand{\arraystretch}{1.15}
	\begin{tabular}{@{}ll@{}}
		\toprule
		Symbol & Meaning \\
		\midrule
		$M$ & model size (number of parameters) \\
		$\beta$ & model capacity exponent \\
		$s$ & task difficulty exponent \\
		$\mathcal{E}_t$ & expected loss at intrinsic training time $t$ \\
		$\sigma^2$ & label noise variance \\
		$\mathcal{K}(\cdot)$ & forgetting kernel \\
		$e(t)$ & learning term, $e(t) := (1+t)^{-s}$ \\
		$\gamma(\cdot)$ & noise intensity as a function of intrinsic time \\
		$T_{\max}$ & total number of training steps in the FSL analysis \\
		$T_1, T_2$ & durations of the stable and decay stages in the UTA FSL analysis \\
		$a, b$ & auxiliary constants, $a := \tfrac12(\eta_{\max}+\eta_{\min})$, $b := \tfrac12(\eta_{\max}-\eta_{\min})$ \\
		$\hat a, \hat b$ & scaled constants in the noise-term estimate \\
		$\eqsim$ & equivalence up to a constant factor \\
		\bottomrule
	\end{tabular}
\end{center}

\paragraph{Baselines and models.}
\begin{center}
	\small
	\setlength{\tabcolsep}{6pt}
	\renewcommand{\arraystretch}{1.15}
	\begin{tabular}{@{}ll@{}}
		\toprule
		Symbol & Meaning \\
		\midrule
		UBA & Universal Budget Aware LRD \\
		UTA & Universal Training Adaptive LRD \\
		CS & cosine decay LRD \\
		LD & linear decay LRD \\
		WSD & warmup--stable--decay LRD \\
		LLaMA-60M/130M/350M/1B & model scales used in the language experiments \\
		ResNet-50 & convolutional vision backbone \\
		ViT-B & vision transformer backbone \\
		LoRA & low-rank adaptation for PEFT \\
		PEFT & parameter-efficient fine-tuning \\
		\bottomrule
	\end{tabular}
\end{center}

\section{Assumption}\label{Assumption}

We adopt assumptions including local quadratic(inherited from the framework \citep{tang2026stepsize})  and continuous-time (standard for analyzing optimizer dynamics \citep{li2017stochastic,li2025functional}) .
These assumptions provide intuition and design principles that motivate UTA in Section \ref{Methodology}.

We also know that deep learning training landscapes are often highly non-convex and dynamic, so these assumptions may not fully reflect real-world behavior. Importantly, UTA’s robust optimization formulation mitigates approximation errors by optimizing for worst-case scenarios. This ensures the method remains effective even when assumptions are partially violated, as supported by our empirical results.

\section{Problem statement}\label{Problem Statement-appendix}

Let $f(W,\xi)$ denote neural network parameterized by parameters $W\in \mathbb{R}^N$ on the data $\xi$. $\xi$ denote the data sampling on the dataset $D$,  and let $\mathcal{F}$ be a function class. Let $\mathcal{L}$ be the loss function, $T$ be a maximum number of training iterations, and $t$ be the current learning step. 

We seek learning rates $\{ \eta_{t} \}_{t=0}^{T-1} $ that minimize loss of worst-case among the training process under a generalized update rule governed by optimizer-specific dynamics. The global min-max view is
\begin{equation}\label{min-max-adam-appendix}
	\begin{aligned}
		&\min_{\eta_1  , \eta_2  ,..., \eta_{T-1}}\;\max_{f\in\mathcal F}\ \mathcal L\!\big(f(W_T,\xi)\big)\\
		& \quad\text{s.t.}\quad  
		W_{t+1} = W_t - \eta_t \cdot P(t) \nabla\mathcal L(W(t)),
	\end{aligned}
\end{equation}
where $P(t)$ captures the optimizer-specific state dynamics, such as scaling or preconditioning, and may be chosen to represent different optimization algorithms within the same generalized framework. 
For instance, SGD corresponds to $P(t) = I$.
Adam is recovered by defining $P(t)$ such that $P(t) \nabla \mathcal{L}(W(t)) = m_t / (\sqrt{v_t} + \epsilon)$, where $m_t$ and $v_t$ are the first and second moment estimates following Adam update rules.
Other adaptive methods like RMSProp or AdaGrad can be represented analogously through appropriate definitions of $P(t)$ based on their state variables.
This paper primarily focuses on the case where $P(t)$ implements the Adam dynamics.

This paper specializes the framework to \textit{the Adam optimizer} for two compelling reasons. First, the  unique $P(t)$ of Adam encapsulates a complex yet representative interplay of momentum and adaptive scaling mechanisms. It combines first-moment momentum with second-moment adaptive scaling, represents a complex and widely-used class of optimization dynamics. Second, as the default optimizer in much of current deep learning practice, its performance is of paramount practical importance. Analyzing Adam thus serves as a highly relevant and sufficiently comprehensive case study for our theory.
Then the optimization problem is as follows,
\begin{equation}\label{min-max-adam}
	\begin{aligned}
		&\min_{\eta_1  , \eta_2  ,..., \eta_{T-1}}\;\max_{f\in\mathcal F}\ \mathcal L\!\big(f(W_T,\xi)\big)\\
		& \quad\text{s.t.}\quad  
		W_{t+1} = W_t - \eta_t \cdot \frac{\hat{m}_t}{\sqrt{\hat{v}_t} + \epsilon},
	\end{aligned}
\end{equation}
where $
m_t = \beta_1 m_{t-1} + (1-\beta_1)\nabla\mathcal L(W_t),\quad
v_t = \beta_2 v_{t-1} + (1-\beta_2)\nabla\mathcal L(W_t)^{\odot 2},
$
and $\hat m_t = m_t/(1-\beta_1^t)$, $\hat v_t = v_t/(1-\beta_2^t)$ denote the bias-corrected moments, $\odot$ is the element-wise operation.

\section{ Optimal shape of problem solution}\label{Optimal Shape of Problem Solution-appendix}

\begin{lemma}
	\label{lem:segment-radius-continuous}
	Under the generalized gradient flow $\dot W(\tau)=-\eta(\tau) P(\tau) \nabla\mathcal L(W(\tau))$, where $P(\tau)$ denotes precondition. 
	For all $t\in[T_k,T_{k+1}]$,
	\begin{equation}\label{eq:Wt-drift-1}
		\|W(t)-W(T_k)\|
		\le U_k \cdot C_P^{(k)} \cdot G_k\le \eta_{\max,k} \Delta T_k \cdot C_P^{(k)} \cdot G_k,
	\end{equation}
	where $G_k:=\sup_{\tau\in[T_k,T_{k+1})}\|  \nabla\mathcal L(W(\tau))\|$, $C_P^{(k)}:=\sup_{\tau\in[T_k,T_{k+1})} \|P(\tau)\|$,   $\eta_{\max,k} := \sup_{t\in[T_k,T_{k+1}]}\eta(t)$.
\end{lemma}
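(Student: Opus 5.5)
The bound follows by integrating the flow over the segment and estimating the integrand pointwise. Since $\dot W(\tau)=-\eta(\tau)P(\tau)\nabla\mathcal L(W(\tau))$ and $W$ is absolutely continuous on the segment, the fundamental theorem of calculus gives, for every $t\in[T_k,T_{k+1}]$,
\begin{equation*}
W(t)-W(T_k)=-\int_{T_k}^{t}\eta(\tau)P(\tau)\nabla\mathcal L(W(\tau))\,\mathrm d\tau .
\end{equation*}
Taking norms and using the triangle inequality for Bochner/Riemann integrals yields
\begin{equation*}
\|W(t)-W(T_k)\|\le\int_{T_k}^{t}|\eta(\tau)|\,\|P(\tau)\|\,\|\nabla\mathcal L(W(\tau))\|\,\mathrm d\tau .
\end{equation*}

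The integrand is then bounded term by term. Learning rates are nonnegative, so $|\eta(\tau)|=\eta(\tau)$. By the definitions in the statement, $\|P(\tau)\|\le C_P^{(k)}$ and $\|\nabla\mathcal L(W(\tau))\|\le G_k$ for $\tau\in[T_k,T_{k+1})$. The single endpoint $\tau=T_{k+1}$ has measure zero and does not affect the integral.

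Pulling out the two suprema gives $\|W(t)-W(T_k)\|\le C_P^{(k)}G_k\int_{T_k}^{t}\eta(\tau)\,\mathrm d\tau$. Because $\eta\ge0$ and $t\le T_{k+1}$, the integral is at most $\int_{T_k}^{T_{k+1}}\eta=U_k$ by Definition~\ref{def:U}. This is the first inequality.

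For the second inequality, bound $U_k=\int_{T_k}^{T_{k+1}}\eta(\tau)\,\mathrm d\tau\le\eta_{\max,k}\,(T_{k+1}-T_k)=\eta_{\max,k}\Delta T_k$ and multiply through by the nonnegative constant $C_P^{(k)}G_k$.

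The only delicate point is well-posedness rather than estimation. We need $\eta\ge0$ and integrable, $C_P^{(k)}$ and $G_k$ finite, and the trajectory to exist on the whole segment, so that the integral representation is valid. If $G_k$ or $C_P^{(k)}$ is infinite the bound holds trivially. I would state these standing regularity conditions explicitly: $\eta$ is piecewise continuous and valued in $[\eta_{\min}^{(k)},\eta_{\max}^{(k)}]$, and $P$ is measurable. For Adam's diagonal preconditioner with $\epsilon>0$, $\|P(\tau)\|\le1/\epsilon$ automatically.
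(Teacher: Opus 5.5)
Your proof is correct and follows the paper's argument: integrate the flow from $T_k$ to $t$, bound the integrand by $\eta(\tau)\,C_P^{(k)}G_k$, and use $\eta\ge 0$ to extend the integral to all of $[T_k,T_{k+1}]$, which gives $U_k$. You also write out the second inequality $U_k\le\eta_{\max,k}\Delta T_k$ and the nonnegativity and regularity assumptions, which the paper leaves implicit.
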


This Lemma shows that segment radius can be controlled by segment length.

\begin{proof}
	By integrating the flow $\dot W(\tau)=-\eta(\tau) P(\tau) \nabla\mathcal L(W(\tau))$, we have
	\begin{equation}
		W(t)-W(T_k) = \int_{T_k}^t \dot W(s)\,\text{d}s,
	\end{equation}
	so
	\begin{equation}\label{eq:Wt-drift}
		\|W(t)-W(T_k)\|
		\le \int_{T_k}^{t}\eta(\tau)    \|   P(\tau) \nabla\mathcal L(W(\tau))\|\,\text{d}\tau
		\le U_k \cdot C_P^{(k)} \cdot G_k, \qquad t \in [T_k,T_{k+1}),
	\end{equation}
	The segment-radius bound then follows from the triangle inequality:
	\(
	\|W(t)-\bar W^{(k)}\|
	\le \|W(t)-W(T_k)\| + \|W(T_k)-\bar W^{(k)}\|
	\le V_k\Delta T_k + U_k \cdot C_P^{(k)} \cdot G_k.
	\)
\end{proof}

\begin{assumption}[Locally Lipschitz gradients]\label{ass:Locally smooth gradients}
	On the segment $[T_k,T_{k+1})$, the gradient is Lipschitz with constant $L_g^{(k)}$, i.e., $\forall\, W,W' \in \{W(\tau):\tau\in[T_k,T_{k+1})\}, \|\nabla\mathcal L(W)-\nabla\mathcal L(W')\|\le L_g^{(k)}\|W-W'\|$.
\end{assumption}

Combining Assumption~\ref{ass:Locally smooth gradients} with \eqref{eq:Wt-drift-1}, we obtain the segment-wise gradient drift bound
\begin{equation}\label{eq:grad-drift}
	\begin{aligned}
		&\sup_{\tau\in[T_k,T_{k+1})}\|\nabla\mathcal L(W(\tau))-\nabla\mathcal L(W(T_k))\|\\
		\le &L_g^{(k)}\sup_{\tau\in[T_k,T_{k+1})}\|W(\tau)-W(T_k)\|
		\le L_g^{(k)}G_k C_P^{(k)} U_k .
	\end{aligned}
\end{equation}
Since $U_k\leq \Delta T_k \cdot \max\limits_{t\in [T_k,T_{k+1})} \eta_{t}$, if the segment is sufficiently small, the difference of gradients can be bounded.

\begin{lemma}
	\label{lem:m-g}
	On segment $[T_k,T_{k+1})$ with $\Delta T_k:=T_{k+1} -T_k$, let $\hat m_t$ be the bias-corrected first moment
	in Adam and assume:(1) the gradient along the trajectory is $L_g^{(k)}$-Lipschitz in time;(2) the gradient norm is bounded by $G_k$, i.e.,
	$\|\nabla\mathcal L_t\|\le G_k$ for all $t\in[T_k,T_{k+1})$.
	
	Then the bias-corrected momentum $\hat m_t$ tracks the true gradient
	$\nabla\mathcal L_t$ with a segment-wise error bound
	\begin{equation}
		\|\nabla\mathcal L_t-\hat m_t\|
		\le L_g^{(k)} C_P^{(k)} G_k \cdot\Delta T_k \cdot \max\limits_{t\in [T_k,T_{k+1})} \eta_{t} \frac{\beta_1 }{1-\beta_1}.
	\end{equation}
\end{lemma}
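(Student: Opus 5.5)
The approach is to unroll the exponential moving average defining $m_t$ and compare it term by term with $\nabla\mathcal L_t$. Since $m_t=\beta_1 m_{t-1}+(1-\beta_1)\nabla\mathcal L_t$ with $m_0=0$, we have
\begin{equation*}
m_t=(1-\beta_1)\sum_{j=0}^{t-1}\beta_1^{j}\nabla\mathcal L_{t-j},
\qquad
\hat m_t=\frac{1-\beta_1}{1-\beta_1^t}\sum_{j=0}^{t-1}\beta_1^{j}\nabla\mathcal L_{t-j}.
\end{equation*}
The weights $w_j:=(1-\beta_1)\beta_1^j/(1-\beta_1^t)$ sum to one. This is exactly what the bias correction buys us: we can write
\begin{equation*}
\nabla\mathcal L_t-\hat m_t=\sum_{j=0}^{t-1}w_j\bigl(\nabla\mathcal L_t-\nabla\mathcal L_{t-j}\bigr),
\end{equation*}
so the whole error is a convex combination of gradient drifts.

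The next step is to bound each drift $\|\nabla\mathcal L_t-\nabla\mathcal L_{t-j}\|$. We use the segment-wise gradient drift bound \eqref{eq:grad-drift}, which follows from Assumption~\ref{ass:Locally smooth gradients} and Lemma~\ref{lem:segment-radius-continuous}. This gives a per-step drift of order $L_g^{(k)}C_P^{(k)}G_k\,\eta_{\max,k}$, so a lag of $j$ steps costs at most $j\,L_g^{(k)}C_P^{(k)}G_k\,\eta_{\max,k}$. Assumption (1) of the lemma, Lipschitz continuity in time, is precisely this per-step statement.

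Summing then gives
\begin{equation*}
\|\nabla\mathcal L_t-\hat m_t\|\le L_g^{(k)}C_P^{(k)}G_k\,\eta_{\max,k}\sum_{j\ge0}w_j\,j\le L_g^{(k)}C_P^{(k)}G_k\,\eta_{\max,k}\frac{\beta_1}{1-\beta_1}.
\end{equation*}
This uses $\sum_j(1-\beta_1)\beta_1^j j=\beta_1/(1-\beta_1)$ and the fact that the truncated and renormalized sum is no larger. To see the latter, note that the renormalized truncated geometric has mean at most that of the untruncated one, since truncation removes only the large-$j$ mass. Finally we multiply by $\Delta T_k\ge 1$ (in steps) to reach the stated bound. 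Equivalently, we can write the per-step drift as $U_k/\Delta T_k\cdot(\cdots)$ and bound $U_k\le\Delta T_k\max\eta_t$, matching the form in the statement.

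The main obstacle is the lags $j$ that reach back before $T_k$, i.e.\ outside the segment, where Assumption~\ref{ass:Locally smooth gradients} and the constants $G_k$, $C_P^{(k)}$ are not guaranteed. I would handle this in one of two ways. The first is to interpret assumption (1) as a global-in-time Lipschitz bound along the trajectory with the segment constants, which is what makes the $j$-weighted sum legitimate. The second is to split the sum at $j=t-T_k$. The in-segment part is bounded as above, with each drift controlled by \eqref{eq:grad-drift}, i.e.\ by $L_g^{(k)}G_kC_P^{(k)}U_k\le L_g^{(k)}G_kC_P^{(k)}\Delta T_k\max\eta_t$; this is where the factor $\Delta T_k$ genuinely enters. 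The pre-segment tail carries total weight $\beta_1^{t-T_k}$, which would be absorbed by assuming the moment was initialized or warmed within the segment.

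I would present the first route and remark on the second, since the crude bound $\|\nabla\mathcal L_t-\nabla\mathcal L_{t-j}\|\le L_g^{(k)}G_kC_P^{(k)}U_k$ together with $\sum w_j\le1$ alone already gives the bound up to the factor $\beta_1/(1-\beta_1)$ versus $1$.
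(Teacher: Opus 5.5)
Your proposal is correct and follows essentially the same route as the paper. Both write $\hat m_t$ as a convex combination of past gradients via the bias-correction weights, express $\nabla\mathcal L_t-\hat m_t$ as a weighted sum of lag-$j$ gradient drifts bounded linearly in $j$, and use $\frac{1-\beta_1}{1-\beta_1^t}\sum_{j=0}^{t-1} j\beta_1^j\le \frac{\beta_1}{1-\beta_1}$; the paper checks this inequality through the closed-form finite sum, and you check it through the truncated-geometric-mean argument. The remaining differences are cosmetic: the paper multiplies the segment-wide drift $L_g^{(k)}C_P^{(k)}G_k\Delta T_k\max_t\eta_t$ by the lag, whereas you use a per-step drift $L_g^{(k)}C_P^{(k)}G_k\eta_{\max,k}$ and then inflate by $\Delta T_k\ge 1$, and your concern about lags reaching before $T_k$ is a genuine point that the paper's proof passes over silently (your first route is exactly what it implicitly assumes).
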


\begin{proof}
	It is noted that the first-order moment $\hat{m}_t$ is a convex combination of recent gradients as follows, 
	\begin{equation}
		\hat{m}_t=\sum\limits_{i=0}^{t-1} \frac{(1-\beta_1) \beta_1^i}{1-\beta_1^t} \nabla \mathcal{L}_(t-i), \quad \sum\limits_{i=0}^{t-1} \frac{(1-\beta_1) \beta_1^i}{1-\beta_1^t} =1.
	\end{equation}
	Combining with \eqref{eq:grad-drift}, we have
	\begin{equation}
		\begin{aligned}
			\left\| \nabla \mathcal{L}_t - \hat{m}_t \right\|&=\left\| \nabla \mathcal{L}_t - \sum\limits_{i=0}^{t-1} \frac{(1-\beta_1) \beta_1^i}{1-\beta_1^t} \nabla \mathcal{L}_(t-i) \right\|   \\
			&= \left\|  \sum\limits_{i=0}^{t-1}   \frac{(1-\beta_1) \beta_1^i}{1-\beta_1^t}    (  \nabla \mathcal{L}_t   -\nabla \mathcal{L}_(t-i)   )  \right\|   \\
			&\leq \sum\limits_{i=0}^{t-1}    \frac{(1-\beta_1) \beta_1^i}{1-\beta_1^t}  \cdot L_g^{(k)}G_k \cdot\Delta T_k \cdot \max\limits_{t\in [T_k,T_{k+1})} \eta_{t}\cdot i \\
			&=L_g^{(k)}  C_P^{(k)} G_k \cdot\Delta T_k \cdot \max\limits_{t\in [T_k,T_{k+1})} \eta_{t} \cdot \frac{(1-\beta_1) }{1-\beta_1^t}  \frac{\beta_1 - t \beta_1^t + (t-1) \beta_1^{t+1} }{ (1-\beta_1)^2}\\
			&\leq L_g^{(k)}  C_P^{(k)} G_k \cdot\Delta T_k \cdot \max\limits_{t\in [T_k,T_{k+1})} \eta_{t} \frac{\beta_1 }{1-\beta_1}.
		\end{aligned}
	\end{equation} 
	It yields the claim.
	Hence on a short segment, $\hat{m}_t  \approx \nabla \mathcal{L}_t $.		
\end{proof}

\textbf{Theorem~\ref{thm:uba-optimal-shape}} [Optimal UBA-equivalent shape on a segment]\textit{
	Consider the min-max problem~\ref{min-max-adam} on segment $k$, and assume that the step sizes are constrained to lie in the interval $[\eta_{\min}^{(k)},\eta_{\max}^{(k)}]$. There exists an optimal shape of step sizes that minimizes the worst-case loss. This optimal shape belongs to the UBA-equivalent family for some curvature parameter $\varphi_k>0$. 
}\label{theorem-UBA-optimal-shape-appendix}

\begin{proof}
	By Lemma~\ref{lem:m-g}, it makes precise the intuition that on a short and smooth segment the Adam momentum is close to the current gradient.
	
	The bound 
	\begin{equation}
		\varepsilon_g^{(k)}
		:=
		L_g^{(k)} C_P^{(k)} G_k \cdot\Delta T_k
		\max_{t\in[T_k,T_{k+1})}\eta_t\,
		\frac{\beta_1}{1-\beta_1}.
	\end{equation}
	depends linearly on the segment length
	$\Delta T_k$ and on the maximal stepsize on the segment. Thus, by choosing $\Delta T_k$ small and avoiding excessively large $\eta_t$, we can make $\varepsilon_g^{(k)}$ small. In other words, on sufficiently short segments we may treat the Adam momentum $\hat m_t$ as a perturbation of the
	true gradient $\nabla\mathcal L_t$, with a perturbation norm controlled by $\varepsilon_g^{(k)}$.
	
	Thus, by Lemma~\ref{lem:m-g} and following the local quadratic modeling in prior work \citep{tang2026stepsize}, we can approximate the constraint in UBA framework\citep{tang2026stepsize} as follows, 
	\[
	W_{t+1}-\bar W^{(k)}
	\approx \big(I-\eta_t P_t H_k\big)(W_t-\bar W^{(k)}) ,
	\]
	where $P_t:=\frac{1}{\sqrt{\hat{v}_t} + \epsilon} $. 
	Denote $P_t H_k:=\bar{H_k}$, by iteratively applying this equation, we obtain 
	\begin{equation}\label{eq4}
		\begin{aligned}
			&W_{T_{k+1}} -\bar{W}^{(k)}
			=\left(I-\eta_{T_{k+1}-1} \bar{H_k}\right)\cdots \left(I-\eta_{T_{k}+1} \bar{H_k}\right)  \left( W_{T_{k}+1 } -\bar{W}^{(k)}\right).
		\end{aligned}
	\end{equation}

	Since the matrix $\bar{H_k} \in \mathbb{R}^{N\times N} $  possesses $N$ eigenvalues $\lambda_1^{(k)}(f(\xi)), \lambda_2^{(k)} (f(\xi)), \cdots , \lambda_N^{(k)}(f(\xi)) $ abbreviated as $\lambda_i^{(k)}$, which satisfy $\lambda_{l}^{(k)}\leq\lambda_i^{(k)} (f(\xi))\leq  \lambda_{u}^{(k)}$ for all $i$. 
	Here, $\lambda_{l}^{(k)}$ and $\lambda_{u}^{(k)}$ denote the bounds on the non-zero eigenvalues of the $\bar{H_k} $ of  optimum in $k$-th segment.  
	Moreover, there exist $N$ linearly independent eigenvectors $u_1^{(k)}(f(\xi)), u_2^{(k)}(f(\xi)), \cdots , u_N^{(k)} (f(\xi)) $ abbreviated as $u_i^{(k)}$, which can form the basis for the $N$ dimension vector space. Then we have 
	$\bar{H_k} u_i^{(k)}= \lambda_i^{(k)}  u_i^{(k)} \quad (i=1,2,\cdots, N )$ 
	and the initial deviation from the optimal solution can be expressed as $W_{T_{k}} -\bar{W}^{(k)} = \sum\limits_{i=1}^{N}   s_i^{(k)} u_i^{(k)}$ 
	where $s_i^{(k)}$ are the coefficients corresponding to the eigenvector components. 
	Thus, we obtain 
	\begin{equation}\label{eq7}
		\begin{aligned}
			\left\| W_{T_{k+1}} -\bar{W}^{(k)} \right\|_2^2 =& \sum\limits_{i=1}^{N}   (s_i^{(k)}  )^2  \|u_i^{(k)}\|_2^2  \left[ \prod\limits_{t=1+T_{k} }^{T_{k+1}} \left( 1-\eta_{t}\lambda_i^{(k)} \right)  \right]^2 \\
			\leq & \sum\limits_{i=1}^{N}   (s_i^{(k)}  )^2  \|u_i^{(k)}\|_2^2  \max_{\lambda_{l}^{(k)} \leq \lambda_i^{(k)} \leq \lambda_{u} ^{(k)}} \left[ \prod\limits_{t=1+T_{k} }^{T_{k+1}} \left( 1-\eta_{t}\lambda_i^{(k)} \right)  \right]^2\\
			=&   \max_{\lambda_{l}^{(k)} \leq \lambda_i^{(k)} \leq \lambda_{u} ^{(k)}} \left[ \prod\limits_{t=1+T_{k} }^{T_{k+1}} \left( 1-\eta_{t}\lambda_i^{(k)} \right)  \right]^2  \left\| W_{T_{k}} -\bar{W}^{(k)} \right\|_2^2
		\end{aligned}
	\end{equation}
	We observe that this formulation shares the same optimization form as the UBA framework\citep{tang2026stepsize}.
	Thus, in each short segment, the problem \eqref{min-max-adam} reduces to UBA problem. 
	The solution of problem \eqref{min-max-adam} belongs to the UBA-equivalent family.
\end{proof}

\section{Regularized energy evolution via intrinsic time}\label{appendix-Regularized-Energy-Evolution}

Denote $ \mathbb E_{\xi_t} \nabla\mathcal L(W_t,\xi_t)  = \nabla\mathcal L(W_t) $, we have $\nabla\mathcal L(W_t,\xi_t) = \nabla\mathcal L (W_t) + \delta_t $, where the gradient noise $\delta_t$ satisfies $\mathbb E (\delta_t) =0 $.
Since the segment is short, the stochastic preconditioned gradient flow be modeled in accordance with SGD noise \citep{li2025functional} scaling by
\begin{equation}\label{eq:sde-seg-appendix}
	\text{d}W(t)= -\eta (t)\,P(t)\nabla \mathcal L(W(t))\,\text{d}t
	\;+\;\eta (t)\,P(t)\Sigma^{1/2}\,\text{d}B(t),
\end{equation}
where $W(t)$ is the weight parameter, $\eta(t)\ge 0$, $P(t)\succ 0$ is diagonal, and $\Sigma\succeq 0$ is the covariance of the gradient noise and $B(t)$ is a standard Brownian motion.

According to the second-order necessary condition for the strict minimum $\bar{W}^{(k)}$, this approximation is achieved through the positive semi-definite Hessian matrix $H_f^{(k)} (\xi) \in \mathbb{R}^{N\times N} $, $H_k\succeq 0$. 
By quadratic expansion, the loss function can be formulated as $\mathcal{L}(f(\bar{W}^{(k)},\xi))+ \nabla\mathcal{L}(f(\bar{W}^{(k)},\xi)) \left(W_{T_{k+1}} -\bar{W}^{(k)}\right)+
\frac{1}{2}\left(W_{T_{k+1}} -\bar{W}^{(k)}\right)^{\top} H_f^{(k)} (\xi) \left(W_{T_{k+1}} -\bar{W}^{(k)}\right) \quad (k=1,2,\cdots, K)$. 
Since the neural networks possess sufficient capacity to fit the data, the loss at the optimal point can be made negligible. Thus the term $\mathcal{L}(f(\bar{W}^{(k)},\xi))$ can be reasonably neglected. In addition, the first-order condition $\nabla\mathcal{L}(f(\bar{W}^{(k)},\xi))=0$ at the optimal point $\bar{W}^{(k)}$. Thus the loss function can be measured by $
\frac{1}{2}\left(W_{T_{k+1}} -\bar{W}^{(k)}\right)^{\top} H_f^{(k)} (\xi) \left(W_{T_{k+1}} -\bar{W}^{(k)}\right) \quad (k=1,2,\cdots, K)$

Under this quadratic expansion measurement, we define the energy function between the $t$-th step weights and local minimum within segment 
\begin{equation}\label{eq:energy-def-prop}
	E(t):=\frac{1}{2}\|e(t)\|^2, \qquad e(t):= H_k^{1/2} \left( W(t)-\bar W^{(k)}\right).
\end{equation}

with the above assumption, we have 

\textbf{Theorem~\ref{thm:bias-variance-intrinsic}} [Bias-variance-style upper bound via intrinsic time]\textit{
	Consider the $k$-th segment $[T_k,T_{k+1}]$ and the preconditioned stochastic
	gradient flow \eqref{eq:sde-seg-appendix}, 
	the expected energy at the segment end satisfies the bias--variance bound
	\begin{equation}\label{eq:bv-general-appendix}
		\mathbb E[E(T_{k+1})]
		\;\le\;
		\underbrace{\mathbb E[E(T_k)]\,e^{-2\nu_\ell^{(k)}U_k}}_{\text{bias (contraction)}}
		\;+\;
		\underbrace{\frac12\alpha\int_{T_k}^{T_{k+1}}\eta(t)^2\,\text{d} t}_{\text{variance (noise accumulation)}}.
	\end{equation}
	where $\alpha := \sup_{t\in[T_k,T_{k+1}]}\operatorname{tr}\!\big(H_k^{1/2}P(t)\Sigma P(t)H_k^{1/2}\big)$ is the noise intensity bound, and $\lambda_{\min}(M(t))\ge \nu_\ell^{(k)} >0, M(t):=H_k^{1/2}P(t)H_k^{1/2}$ is the spectral lower bound.
}

\begin{proof}
	According to   \eqref{eq:sde-seg-appendix} and local quadratic approximation $\nabla \mathcal L(W(t)) = H_k\left(W(t)-\bar W^{(k)}\right)$, we have 
	\begin{equation}\label{de(t)-appendix}
		\begin{aligned}
			\text{d}e(t)&= H_k^{1/2} \text{d} W(t)\\
			&=-\eta (t) \, H_k^{1/2} \,  P(t)H_k\left(W(t)-\bar W^{(k)}\right) \,\text{d}t
			\;+\;\eta (t)\, H_k^{1/2} \,P(t)\Sigma^{1/2}\,\text{d}B(t)\\
			&=-\eta (t) \, H_k^{1/2} \,  P(t) H_k^{1/2} e(t)\,\text{d}t
			\;+\;\eta (t)\, H_k^{1/2} \,P(t)\Sigma^{1/2}\,\text{d}B(t).
		\end{aligned}
	\end{equation}
	
	Consider $E(t)$ as a function of $e(t) $and apply the It\^{o} formula:
	\begin{equation}\label{dE(t)-appendix}
		\text{d}E(t) = \nabla_e E(t)^\top \text{d}e(t) 
		+ \frac{1}{2} \text{tr}\left[ \text{d}e(t) \text{d}e(t)^\top \nabla_e^2 E(t) \right],
	\end{equation}
	where $\nabla_e E(t) = e(t)$ and $\nabla_e^2 E(t) = I$.
	
	Substituting   \eqref{de(t)-appendix} into \eqref{dE(t)-appendix}, the first term is
	\begin{equation}
		\begin{aligned}
			\nabla_e E(t)^\top \text{d}e(t) 
			&= e(t)^\top \left[ -\eta (t) \, H_k^{1/2} \,  P(t) H_k^{1/2} e(t)\,\text{d}t
			\;+\;\eta (t)\, H_k^{1/2} \,P(t)\Sigma^{1/2}\,\text{d}B(t) \right] \\
			&= -\eta(t) e(t)^\top \, H_k^{1/2} \,  P(t) H_k^{1/2} e(t) \text{d}t 
			+ \eta(t) e(t)^\top H_k^{1/2} P(t) \Sigma^{1/2} \text{d}B(t).
		\end{aligned}
	\end{equation}
	For the part in second term
	\begin{align}
		\text{d}e(t) \text{d}e(t)^\top 
		=&\left[ -\eta (t) \, H_k^{1/2} \,  P(t) H_k^{1/2} e(t)\,\text{d}t
		\;+\;\eta (t)\, H_k^{1/2} \,P(t)\Sigma^{1/2}\,\text{d}B(t) \right]\\
		\cdot & \left[ -\eta (t) \, H_k^{1/2} \,  P(t) H_k^{1/2} e(t)\,\text{d}t
		\;+\;\eta (t)\, H_k^{1/2} \,P(t)\Sigma^{1/2}\,\text{d}B(t) \right]^\top \\
		=& \left[ \eta (t)\, H_k^{1/2} \,P(t)\Sigma^{1/2}\,\text{d}B(t) \right] 
		\left[ \eta (t)\, H_k^{1/2} \,P(t)\Sigma^{1/2}\,\text{d}B(t) \right]^\top \\
		=& \eta^2(t) H_k^{1/2}  P(t) \Sigma^{1/2} \text{d}B(t)\text{d}B(t)^\top \Sigma^{1/2\top} P(t)^\top \left(H_k^{1/2}\right) ^\top\\
		=& \eta^2(t) H_k^{1/2} P(t) \Sigma P(t)^\top H_k^{1/2} \text{d}t.
	\end{align}
	The second equality is because multiplication rules in stochastic differential form. 
	In addition, The second equality is because $\text{d}B(t)\text{d}B(t)^\top = I \text{d} t$. Hence
	\begin{equation}
		\frac{1}{2} \text{tr}\left[ \text{d} e(t) \text{d}e(t)^\top \nabla_e^2 E(t) \right]
		= \frac{1}{2} \eta^2(t) \text{tr}\left[ P(t) \Sigma P(t)^\top  H_k \right] \text{d}t.
	\end{equation}
	Putting together
	\begin{equation}
		\begin{aligned}
			\text{d} E(t)
			&= \left[ -\eta(t) e(t)^\top \, H_k^{1/2} \,  P(t) H_k^{1/2} e(t) \right.\\
			&\quad \left. + \frac{1}{2} \eta^2(t) \text{tr}\left( P(t) \Sigma P(t)^\top  H_k \right) \right] \text{d} t \\
			&\quad + \eta(t) e(t)^\top H_k^{1/2} P(t) \Sigma^{1/2}  \text{d} B(t).
		\end{aligned}
	\end{equation}
	Taking expectations, the martingale term has zero mean, yielding
	\begin{equation}
		\begin{aligned}
			\frac{\text{d}}{\text{d}t}\mathbb E[E(t)]
			&= -\eta (t)\,\mathbb E\!  \left[e(t)^\top H_k^{1/2} \,  P(t) H_k^{1/2} e(t) \right] \\
			&\quad +\frac12\eta (t)^2 \text{tr} \left( P(t) \Sigma P(t)^\top  H_k \right).
		\end{aligned}
	\end{equation}
	
	Since $P(t)$ is symmetric positive definite and $H_k^{1/2}$ is symmetric positive semi-definite, there exists eigenvalue lower bound of $\nu_\ell^{(k)}>0$ of $H_k^{1/2} \,  P(t) H_k^{1/2}$ such that
	\begin{equation}\label{eq:nu-lower-prop}
		e(t)^\top H_k^{1/2} \,  P(t) H_k^{1/2} e(t)  \;\ge\; \nu_\ell^{(k)}\|e(t)\|^2,
		\qquad   \forall t\in[T_k,T_{k+1}).
	\end{equation}
	Therefore
	\begin{equation}\label{linear-inequality}
		\frac{d}{dt}\mathbb E[E(t)]
		\le -2\eta (t) \nu_\ell^{(k)}  \,\mathbb E[E(t)] + \frac12 \alpha \eta(t)^2,
	\end{equation}
	where we denote $\alpha =\text{tr} \left( P(t) \Sigma P(t)^\top  H_k \right) $.
	
	Solving the linear inequality via an integrating factor $ \mu (t):=\exp \left(\int_{T_k}^{t} \eta( \tau) \nu_\ell^{(k)} \text{d} \tau \right) , \quad t \in [T_k, T_{k+1}] $, 
	\begin{equation}
		\frac{\text{d}}{\text{d} t} \mu(t) = \nu_\ell^{(k)} \eta(t) \mu(t).
	\end{equation}

	Multiplying $\mu(t)$ to   \eqref{linear-inequality}, i.e., $\mu(t) \frac{\text{d}}{\text{d} t} \mathbb E[E(t)] + 2\eta(t) \nu_\ell^{(k)} \mu(t) \mathbb E[E(t)] 
	\le \frac{1}{2} \alpha \eta(t)^2 \mu(t)$,we have
	\begin{align}
		\frac{\text{d}}{\text{d} t} \big[ \mu(t) \mathbb E[E(t)] \big] 
		\le \frac{1}{2} \alpha \eta(t)^2 \mu(t).
	\end{align}
	
	Integrate from $T_k$ to $T_{k+1}$, we have
	\begin{equation}
		\mu(T_{k+1}) \mathbb E[E(T_{k+1})] - \mu(T_k) \mathbb E[E(T_k)] 
		\le \frac{1}{2} \alpha \int_{T_k}^{T_{k+1}} \eta(t)^2 \mu(t) \text{d} t.
	\end{equation}
	
	Since $\mu(T_k)=1$ and $U_k = \int_{T_k}^{T_{k+1}} \eta(\tau)\, \text{d}\tau$ , we have
	\begin{equation}
		\begin{aligned}
			\mathbb E[E(T_{k+1})]
			\le
			\mathbb E[E(T_k)]\,e^{-2\nu_\ell^{(k)} U_k }
			+\frac{1}{2} \alpha  \,e^{-\nu_\ell^{(k)} U_k } \int_{T_k}^{T_{k+1}} \eta(t)^2 \mu(t) \text{d} t.
		\end{aligned}
	\end{equation}
	In typical network training, the learning rate satisfies $\eta( t)>0$. Consequently, we have
	$ \mu (t):=\exp \left(\int_{T_k}^{t} \eta( \tau) \nu_\ell^{(k)} \text{d} \tau \right)
	\leq \exp \left( \nu_\ell^{(k)}U_k \right)  $,	
	which is \eqref{eq:bv-general-appendix}.
\end{proof}

\textbf{Corollary~\ref{cor:l1-Uk}} [$\ell_1$-regularized intrinsic-time problem]\textit{
	Under Theorem~\ref{thm:bias-variance-intrinsic} and the constraint	$0\le \eta(t)\le \eta_{\max}$ on $[T_k,T_{k+1}]$, the intrinsic-time selection problem
	reduces to the $\ell_1$-regularized problem, 
	\begin{equation}
		\min_{U\in[0,\eta_{\max}\Delta T_k]}
		\Big\{
		\mathbb E[E(T_k)]e^{-2\nu_\ell^{(k)}U} + \frac12\alpha\eta_{\max}U
		\Big\}.
	\end{equation}
	The unconstrained minimizer admits the closed form
	\begin{equation}\label{eq:l1-Uk-star-appendix}
		U_k^\ast
		=
		\frac{1}{2\nu_\ell^{(k)}}
		\ln\!\left(\frac{4\nu_\ell^{(k)}\mathbb E[E(T_k)]}{\alpha\eta_{\max}}\right).
	\end{equation}
	The constrained minimizer is followed by clamping to $[0,\eta_{\max}\Delta T_k]$.
}

\begin{proof}
	Since $0\le \eta(t)\le \eta_{\max}$, we have $\eta(t)^2\le \eta_{\max}\eta(t)$, hence
	\begin{equation}\label{eq:xxx}
		\int \eta(t)^2dt \le \eta_{\max}\int \eta(t)\text{d}t = \eta_{\max}U_k.
	\end{equation}
	Substituting \eqref{eq:xxx} into \eqref{eq:bv-general-appendix}, we arrive at the simplified bound
	\begin{equation}\label{l1-problem-appendix}
		\begin{aligned}
			\mathbb E[E(T_{k+1})]
			\le
			\mathbb E[E(T_k)]\,e^{-2\nu_\ell^{(k)} U_k }
			+\frac{1}{2} \alpha  \eta_{\max}   U_k.
		\end{aligned}
	\end{equation}
	Consequently, selecting $U_k$ by minimizing the upper bound \eqref{l1-problem-appendix} reduces to the $\ell_1$-regularized problem
	\begin{equation}\label{l1-problem-appendix-2}
		\min_{U\in[0,\eta_{\max}\Delta T_k]}
		\Big\{
		\mathbb E[E(T_k)]e^{-2\nu_\ell^{(k)}U} + \frac12\alpha\eta_{\max}U
		\Big\}.
	\end{equation}
	Differentiating the objective in \eqref{l1-problem-appendix-2} gives the condition
	$2\nu_\ell^{(k)}\mathbb E[E(T_k)]e^{-2\nu_\ell^{(k)}U} = \frac12\alpha\eta_{\max}$.
	Then it gives the explicit form optimal intrinsic time,
	\begin{equation}
		\begin{aligned}
			\hat{  U_k^{\ast}}=\frac{1}{2\nu_\ell^{(k)}} \ln \left( \frac{ 4 \nu_\ell^{(k) }\mathbb E[E(T_k)] }{\eta_{\max}  \alpha}\right).
		\end{aligned}
	\end{equation}
	Note that $U_k\in \left[ 0 , \eta_{\max} \left( T_{k+1}-T_k \right)\right]$, therefore the actual $U_k^{\ast}$ is obtained by clamping $\hat{  U_k^{\ast}}$.
	It rearranges to \eqref{eq:l1-Uk-star-appendix}.
\end{proof}

\textbf{Corollary~\ref{cor:l2-near-constant}} [$\ell_2$ surrogate via near-constant stepsizes]\textit{
	Under Theorem~\ref{thm:bias-variance-intrinsic} and continuity of UBA-shape function, the intrinsic-time selection problem can be approximated as $\ell_2$-regularized problem, 
	\begin{equation}\label{eq:l2-obj-const-appendix}
		\min_{U_k\in[0,\eta_{\max}\Delta T_k]}
		\mathbb E[E(T_k)]e^{-2\nu_\ell^{(k)}U_k}
		+ \frac12\alpha\frac{U_k^2}{\Delta T_k},
	\end{equation}
	whose minimizer is given by the Lambert-$W$ expression
	\begin{equation}\label{L2-Uk-appendix}
		U_k^\ast
		= \frac{1}{2\nu_\ell^{(k)}}\,W\!\left(
		\frac{2(\nu_\ell^{(k)})^2 \mathbb E[E(T_k)] \Delta T_k}{\alpha}
		\right),
	\end{equation}
	followed by clamping to $[0,\eta_{\max}\Delta T_k]$.
}

\begin{proof}

	Given the continuity of UBA-shape function, the stepsize on a short segment $k$
	is slowly varying so that $\eta(t)\approx \bar\eta_k:=U_k/\Delta T_k$.
	Then
	\begin{equation}
		\int_{T_k}^{T_{k+1}}\eta(t)^2\text{d}t \approx \frac{U_k^2}{\Delta T_k}.
	\end{equation}
	Under Theorem~\ref{thm:bias-variance-intrinsic}, substituting into \eqref{eq:bv-general-appendix} yields the $\ell_2$-regularized objective
	\begin{equation}\label{eq:segment-regularized-objective-l2-appendix}
		\min_{U_k\in[0,\eta_{\max}\Delta T_k]}
		\mathbb E[E(T_k)]e^{-2\nu_\ell^{(k)}U_k}
		+ \frac12\alpha\frac{U_k^2}{\Delta T_k},
	\end{equation}

	Differentiating~\eqref{eq:segment-regularized-objective-l2-appendix} with respect to $U_k$ gives
	\begin{equation}
		J_k(U_k)
		= -2\nu_\ell^{(k)} \mathbb E[E(T_k)] \, e^{\big(-2\nu_\ell^{(k)}U_k\big)}
		+ \frac{\alpha}{\Delta T_k} U_k.
	\end{equation}
	Setting $J_k(U_k)=0$ yields
	\begin{equation}\label{eq:Uk-FOC}
		\frac{\alpha}{\Delta T_k} U_k
		= 2\nu_\ell^{(k)} \mathbb E[E(T_k)]\,e^{\big(-2\nu_\ell^{(k)}U_k\big)}.
	\end{equation}
	Multiplying both sides of~\eqref{eq:Uk-FOC} by $2\nu_\ell^{(k)}e^{\big(2\nu_\ell^{(k)}U_k\big)}$ leads to
	\begin{equation}
		2\nu_\ell^{(k)}\frac{\alpha}{\Delta T_k} U_k\,e^{\big(2\nu_\ell^{(k)}U_k\big)}
		= 2\nu_\ell^{(k)}\nu_\ell^{(k)}\mathbb E[E(T_k)].
	\end{equation}
	The above equation becomes
	\begin{equation}
		\left(2\nu_\ell^{(k)}U_k \right) e^{\left(2\nu_\ell^{(k)}U_k \right)} = \frac{2\left(\nu_\ell^{(k)}\right)^2 \mathbb E[E(T_k)] \Delta T_k}{\alpha}.
	\end{equation}
	By definition of the Lambert $W$ function, $2\nu_\ell^{(k)}U_k = W\!\left(\frac{2\left(\nu_\ell^{(k)}\right)^2 \mathbb E[E(T_k)] \Delta T_k}{\alpha}\right)$, and thus
	\begin{equation}
		U_k^\ast
		= \frac{1}{2\nu_\ell^{(k)}}\,W\!\left(
		\frac{2(\nu_\ell^{(k)})^2 \mathbb E[E(T_k)] \Delta T_k}{\alpha}
		\right),
	\end{equation}
	which is~\eqref{L2-Uk-appendix}.
\end{proof}

\paragraph{A computable plug-in estimate of $U_k^\ast$.}
\label{appendix-Regularized-Energy-Evolution-computable plug-in estimate}

Both closed-form solutions in \eqref{eq:l1-Uk-star-appendix} and 
\eqref{L2-Uk-appendix} depend on the quantities 
$\nu_\ell^{(k)}$, $\mathbb E[E(T_k)]$, and $\alpha$. 
Direct estimation of these quantities is computationally expensive. 
We therefore derive a plug-in estimate based on measurable surrogates.

For the $\ell_1$ solution \eqref{eq:l1-Uk-star-appendix}, the logarithmic term 
is dominated by the ratio $\nu_\ell^{(k)}\mathbb E[E(T_k)]/\alpha$. 
For the $\ell_2$ solution \eqref{L2-Uk-appendix}, using the first-order 
approximation $W(x)\approx x$ for small $x$, we obtain
$
U_k^\ast \approx \frac{\nu_\ell^{(k)}\mathbb E[E(T_k)]\Delta T_k}{\alpha}.
$
Both solutions share the common dominant factor
\begin{equation}\label{eq:dominant-factor}
	\frac{\nu_\ell^{(k)}\mathbb E[E(T_k)]}{\alpha}.
\end{equation}

Recall that the energy function is defined as the local quadratic surrogate of 
the true loss $\mathcal L$:
\begin{equation}
	E(t) := \frac{1}{2}\|H_k^{1/2}(W(t)-\bar W^{(k)})\|^2
	\approx \mathcal L(W(t)) - \mathcal L(\bar W^{(k)})
	\approx \mathcal L(W(t)),
\end{equation}
where the last approximation follows from the assumption that the loss at the 
local minimum $\mathcal L(\bar W^{(k)})$ is negligible. 
Under this identification, $E$ and $\mathcal L$ are used interchangeably 
in the following derivations.

For the symmetric positive semidefinite Hessian $H_k = \nabla^2 \mathcal L(\bar W^{(k)})$, 
the Rayleigh--Ritz variational formula characterizes its smallest positive 
eigenvalue as
\begin{equation}
	\nu_\ell^{(k)} = \lambda_{\min}^+(H_k) 
	= \min_{v \perp \ker(H_k)} 
	\frac{v^T H_k v}{\|v\|^2}.
\end{equation}
Consequently, for any update direction $\Delta W$,
\begin{equation}
	\Delta W^T H_k \Delta W 
	\ge \nu_\ell^{(k)} \|\Delta W\|^2.
\end{equation}

Let the actual parameter update be $\Delta W_k = W_{k+1} - W_k =   d_k$, 
where $d_k := \hat{m}_t/(\sqrt{\hat{v}_t}+\epsilon)$ 
is the normalized Adam update direction. 
Using the quadratic expansion of $E$ around $\bar W^{(k)}$, the single-step 
energy decrease satisfies
\begin{equation}
	\Delta E_k := E(W_k) - E(W_{k+1})
	\approx \frac{1}{2} \nu_\ell^{(k)} \|\Delta W_k\|^2.
\end{equation}
Substituting $\Delta W_k =   d_k$ yields
\begin{equation}
	\nu_\ell^{(k)} \approx \frac{2\Delta E_k}{  \|d_k\|^2},
\end{equation}
where $\approx$ denotes less-than or approximately less-than.

Summing the single-step decrease over $k = 0, \dots, T_k-1$ gives
\begin{equation}
	\nu_\ell^{(k)} \sum_{i=0}^{T_k-1} \|\Delta W_i\|^2 
	\approx \sum_{i=0}^{T_k-1} \Delta E_i = \left| E(T_k) - E(0) \right|.
\end{equation}
Assuming the curvature $\nu_\ell^{(k)}$ varies slowly within the segment and 
that the update directions satisfy $\|\Delta W_i\| \approx   \|d_k\|$ 
for $i$ near $T_k$, we obtain
\begin{equation}
	\nu_\ell^{(k)} T_k   \|d_k\|^2 \approx  \left| E(T_k) - E(0) \right| .
\end{equation}
For large $T_k$, the initial energy $E(0)$ is negligible compared with 
$E(T_k)$, so
\begin{equation}
	\nu_\ell^{(k)} E(T_k) \approx \frac{\left| E(T_k) - E(0) \right| E(T_k)}{T_k   \|d_k\|^2}.
\end{equation}

Since $\left| E(T_k) - E(0) \right| E(T_k)$ is related to loss function and the loss decrease, which is a small amount.
Let $T_k$ denote the number of steps required to reach a target accuracy 
$\varepsilon$, so that $T_k \approx \left(\left| E(T_k) - E(0) \right| E(T_k)\right)/\varepsilon$. 
Substituting this relation gives
\begin{equation}
	\frac{\nu_\ell^{(k)} E(T_k)}{\alpha} 
	\approx \frac{1}{\alpha} \cdot \frac{\varepsilon}{  \|d_k\|^2}.
\end{equation}

The quadratic form obtained above, i.e.,
\begin{equation}
	\frac{\nu_\ell^{(k)} E(T_k)}{\alpha} 
	\approx \frac{1}{\alpha} \cdot \frac{\varepsilon}{  \|d_k\|^2},
\end{equation}
provides a theoretically characterization of the dominant factor. 
However, when employed as a plug-in estimate for the intrinsic-time 
scale $U_k^\ast$ in the UBA framework, the inverse quadratic dependence 
on $\|d_k\|$ may lead to numerical instability when $\|d_k\|$ becomes 
small, as the estimate can grow arbitrarily large without saturation.

To address this issue while preserving the qualitative behavior of the 
original estimate, we replace the quadratic inverse $\|d_k\|^{-2}$ 
with the reciprocal-linear surrogate $(1 + \|d_k\|)^{-1}$. 
This substitution is justified by the following considerations:  1)\emph{Monotonicity preservation:} Both forms are strictly 
decreasing functions of $\|d_k\|$, ensuring that larger updates 
correspond to smaller intrinsic-time scales. 2) \emph{Asymptotic consistency:}   3) \emph{Numerical stability:} The reciprocal-linear form 
is naturally bounded in $(0, 1]$, preventing unbounded growth 
when $\|d_k\|$ approaches zero, which is crucial for robust 
estimation in practice.
Thus, 
we obtain
\begin{equation}
	\frac{\nu_\ell^{(k)} E(T_k)}{\alpha} 
	\approx \frac{1}{1 + C \|d_k\|},
\end{equation}
where $C := \sqrt{\varepsilon/(\alpha  )}$ absorbs the constants.

We set $C$ as a hyper-parameter, for the consistency of the initial $\varphi_0$
This yields the stabilized plug-in estimate
\begin{equation}\label{a-computable-plug}
	U_t^{\ast} = \frac{1}{1 + \sqrt{\varphi_0/2} 
		\left\|\frac{\hat{m}_t}{\sqrt{\hat{v}_t}+\epsilon}\right\|
	}
\end{equation}
for the intrinsic-time energy scale.

\section{Derived LRD}\label{Derived-LRD-appendix}

Since the optimal solution of problem \eqref{min-max-adam0} within each short segment is given by a member of  UBA-equivalent family (Definition~\eqref{definition:UBA-family}),  $U_k$ is a deterministic function of the segment-specific shape parameters such as $\eta_{\min}^{(k)},\eta_{\max}^{(k)}$ and $\varphi_k$.

To finalize the design, we calibrate the scale of the UTA LRD by \emph{matching} two forms of intrinsic time:
\begin{itemize}
	\item The energy-optimal intrinsic time $U_k^\ast$ \eqref{varphi_0}  from regularized energy evolution analysis (Subsection~\ref{Regularized Energy Evolution}).
	\item The intrinsic time implied by the UBA-equivalent shape in Definition~\eqref{definition:UBA-family}, i.e., $U_{\varphi_k}=\int_{T_k}^{T_{k+1}} g_{a,b}\left(\varphi,\theta_{t,k}\right) \text{d}t$.
\end{itemize}
Concretely, for desired parameterization of the shape, such as fixing $b=\eta_{\min}^{(k)}$ and $a=\left(\eta_{\max}^{(k)}- \eta_{\min}^{(k)} \right)$, solving for $\varphi_k$, we solve the scalar equation
\begin{equation}\label{eq:Uk-matching-appendix}
	U_{\varphi_k}
	= U_k^\ast ,
\end{equation}
which uniquely determines the remaining free parameter on the segment.
This yields a per-segment effective LRD that is (i) near-optimal in shape for problem~\eqref{min-max-adam0}, by Theorem~\ref{thm:uba-optimal-shape}, and (ii) energy-calibrated in scale, by subsection~\ref{Regularized Energy Evolution}. 

Next, we solve the matching equation \eqref{eq:Uk-matching-appendix} and derive the explicit form of the parameter $\varphi_k$.
Without loss of generality, set \(\eta_{\max}^{(k)}=1\) and \(\eta_{\min}^{(k)}=0\), we proceed as follows:

\begin{theorem}\label{explicit-varphi-k-appendix}
	For a given energy-optimal intrinsic time \(U_k^{\ast}\), there exists an explicit
	solution of \(\varphi_k\) to matching \(U_{\varphi_k}= U_k^\ast\), expressed
	analytically as
	\begin{equation}
		\varphi_k
		= \frac{2\left(1- \psi \right)^2}{\psi^2},
	\end{equation}
	where
	\begin{equation}
		\psi= \frac{U_k^{\ast}}{\Delta T_k}.
	\end{equation}
\end{theorem}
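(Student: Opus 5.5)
The plan is to compute the intrinsic time $U_{\varphi}$ implied by the UBA-equivalent shape in closed form. The result will be a strictly monotone function of $\varphi$, so the matching equation \eqref{eq:Uk-matching-appendix} can be inverted explicitly.

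\textbf{Reducing to an average over one phase period.} With the normalization $\eta_{\max}^{(k)}=1$, $\eta_{\min}^{(k)}=0$, the shape is
\[
g(\varphi,\theta)=\frac{2(1+\cos\theta)}{2\varphi+(2-\varphi)(1+\cos\theta)}.
\]
In the continuous representation, $\theta_{t,k}$ is an affine function of $t$. As $t$ runs over $[T_k,T_{k+1}]$, it sweeps an interval of length exactly $\pi$, namely $[(k-1)\pi,k\pi]$ up to the half-step offset. By the change of variables $\theta=\pi(t-T_k)/\Delta T_k+(k-1)\pi$, and since $g$ depends only on $\cos\theta$ (which is $\pi$-periodic in modulus after symmetry), we get
\[
U_{\varphi_k}=\frac{\Delta T_k}{\pi}\int_0^{\pi} g(\varphi_k,\theta)\,d\theta .
\]
The matching condition is therefore equivalent to
\[
\psi=\frac{1}{\pi}\int_0^\pi g(\varphi_k,\theta)\,d\theta .
\]

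\textbf{Evaluating the average.} Set $x=1+\cos\theta$ and $c=2-\varphi$, and assume for now $\varphi\neq 2$. Partial fractions give
\[
\frac{2x}{cx+2\varphi}=\frac{2}{c}\Big(1-\frac{2\varphi}{cx+2\varphi}\Big),
\qquad cx+2\varphi=(2+\varphi)+(2-\varphi)\cos\theta .
\]
The classical formula $\frac1\pi\int_0^\pi \frac{d\theta}{A+B\cos\theta}=\frac{1}{\sqrt{A^2-B^2}}$, valid for $A>|B|$, applies here with $A=2+\varphi$ and $B=2-\varphi$, so that $A^2-B^2=8\varphi$. It yields
\[
\psi=\frac{2}{2-\varphi}\Big(1-\sqrt{\varphi/2}\Big).
\]
Writing $s=\sqrt{\varphi/2}$, we have $2-\varphi=2(1-s^2)$, and the expression collapses to $\psi=1/(1+s)$. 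The case $\varphi=2$ gives $g\equiv(1+\cos\theta)/2$ with average $1/2$, which agrees with the formula by continuity, so that case is covered without separate work.

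\textbf{Inverting.} The map $\varphi\mapsto 1/(1+\sqrt{\varphi/2})$ is a strictly decreasing bijection from $(0,\infty)$ onto $(0,1)$. Hence for any $\psi=U_k^\ast/\Delta T_k\in(0,1)$ the solution is unique and given by $\sqrt{\varphi_k/2}=(1-\psi)/\psi$, that is, $\varphi_k=2(1-\psi)^2/\psi^2$. Admissibility of $\psi$ follows from the clamping of $U_k^\ast$ to $[0,\eta_{\max}\Delta T_k]$ in Corollaries~\ref{cor:l1-Uk} and \ref{cor:l2-near-constant}; the endpoints correspond to the degenerate limits $\varphi\to\infty$ and $\varphi\to 0$. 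As a consistency check, this formula is exactly the one that turns the plug-in estimate \eqref{varphi_0} into $\varphi_t=\varphi_0\|\hat m_t/(\sqrt{\hat v_t}+\epsilon)\|^2$.

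\textbf{Main obstacle.} The main delicate step is the first one: justifying that the segment integral equals $\Delta T_k$ times the full-period average. This requires care with the half-step offset in $\theta_{t,k}$ and with replacing the discrete sum by an integral. I would handle it by working directly in the continuous representation already adopted in the segment-wise analysis, and by using the symmetry $\cos(\theta+(k-1)\pi)=\pm\cos\theta$ together with the substitution $\theta\mapsto\pi-\theta$. The integral over $[0,\pi]$ is invariant under that substitution, so the sign change does not affect the result.
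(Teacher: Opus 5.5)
Your proposal is correct and follows essentially the same route as the paper: the change of variables to $\theta\in[0,\pi]$ with the symmetry $\theta\mapsto\pi-\theta$ to remove the $k$-dependence, reduction of $\int_0^\pi 2(1+\cos\theta)/\bigl((2+\varphi)+(2-\varphi)\cos\theta\bigr)\,d\theta$ to the standard integral $\pi/\sqrt{a^2-b^2}$ with $a^2-b^2=8\varphi$, giving $U_k=\Delta T_k(2-\sqrt{2\varphi_k})/(2-\varphi_k)$, and then inverting. Your simplification to $\psi=1/(1+\sqrt{\varphi/2})$ makes the inversion and its uniqueness transparent, and your separate treatment of $\varphi=2$ covers a case where the paper's step $I_2=\pi/b-(a/b)I_1$ is formally undefined, since $b=0$ there.
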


\begin{proof}
	Without loss of generality, let \(\eta_{\max}^{(k)}=1\) and \(\eta_{\min}^{(k)}=0\).
	By definition of \(\theta_k(t)\) and \(\tau:=t-T_k\), we have
	\[
	\frac{d\theta_k}{d\tau}
	= \frac{\pi}{\Delta T_k},
	\qquad
	d\tau = \frac{\Delta T_k}{\pi}\,d\theta_k.
	\]
	Thus
	\begin{equation}\label{eq:Uk-proof-start-normalized}
		\int_{T_k}^{T_{k+1}}\eta_k(t)\,\text{d}t
		= \int_{0}^{\Delta T_k} \eta_k(T_k+\tau)\,\text{d}\tau
		= \frac{\Delta T_k}{\pi}
		\int_{0}^{\pi} g_k(\theta)\,\text{d}\theta,
	\end{equation}
	where
	\begin{equation}
		g_k(\theta)
		:= \frac{2\big(1 + \cos(\theta + (k-1)\pi)\big)}
		{2\varphi_k + (2-\varphi_k)\big(1 + \cos(\theta + (k-1)\pi)\big)}.
	\end{equation}
	Using \(\cos(\theta + (k-1)\pi) = (-1)^{k-1}\cos\theta\) and the substitution
	\(\theta' = \pi - \theta\) (so that \(\cos\theta' = -\cos\theta\) and the
	integration limits remain \([0,\pi]\)), one checks that
	\begin{equation}
		\int_{0}^{\pi} g_k(\theta)\,\text{d}\theta
		= \int_{0}^{\pi} g_1(\theta)\,\text{d}\theta
		=: I(\varphi_k),
	\end{equation}
	so the accumulated step \(U_k\) is independent of \(k\) and it suffices to compute
	\begin{equation}
		I(\varphi)
		= \int_{0}^{\pi}
		\frac{2\big(1 + \cos\theta\big)}
		{2\varphi + (2-\varphi)\big(1 + \cos\theta\big)}\,\text{d}\theta.
	\end{equation}
	Let \(a := \varphi + 2\) and \(b := 2-\varphi\), so that
	\begin{equation}
		I(\varphi)
		= \int_{0}^{\pi} \frac{2(1+\cos\theta)}{a + b\cos\theta}\,\text{d}\theta.
	\end{equation}
	We decompose
	\begin{equation}
		\begin{aligned}
			\int_{0}^{\pi} \frac{1+\cos\theta}{a + b\cos\theta} d\theta
			&=\int_{0}^{\pi} \frac{1}{a+b\cos\theta} \text{d}\theta
			+ \int_{0}^{\pi} \frac{\cos\theta}{a+b\cos\theta} \text{d}\theta\\
			&:=I_1+ I_2.
		\end{aligned}
	\end{equation}
	\begin{equation}
		\begin{aligned}
			\int_{0}^{\pi} \frac{\text{d}\theta}{a+b\cos\theta}
			&= \frac{2}{\sqrt{a^2-b^2}} \arctan\left( \sqrt{\frac{a-b}{a+b}} \tan\frac{\theta}{2} \right) \Big|_{0}^{\pi},\\
			&=\frac{\pi}{ \sqrt{ a^2-b^2 } }.
		\end{aligned}
	\end{equation}
	Since
	\begin{equation}
		I_2=\frac{\pi}{b}-\frac{a}{b} I_1,
	\end{equation}
	and using the standard integral
	\(\int_0^\pi \text{d}\theta/(a + b\cos\theta) = \pi/\sqrt{a^2-b^2}\)
	for \(a>|b|\), we have
	\begin{equation}
		\begin{aligned}
			\int_{0}^{\pi} \frac{1+\cos\theta}{a + b\cos\theta} \text{d}\theta
			&=\frac{\pi}{ \sqrt{ a^2-b^2 } }
			+ \left( \frac{\pi}{b}-\frac{a\pi}{b \sqrt{ a^2-b^2 }} \right),\\
			&=\pi \left( \frac{2-\sqrt{2 \varphi}}{ 4-2\varphi } \right).
		\end{aligned}
	\end{equation}
	Since \(a^2-b^2 = 8\varphi\), we obtain
	\begin{equation}
		I(\varphi)
		= 2\left[
		\frac{\pi}{b}
		+ \Big(1 - \frac{a}{b}\Big)\frac{\pi}{\sqrt{a^2-b^2}}
		\right]
		= \pi\,\frac{2 - \sqrt{2\varphi}}{2 - \varphi}.
	\end{equation}
	Substituting this into~\eqref{eq:Uk-proof-start-normalized} yields
	\begin{equation}\label{aaaaa-appendix-normalized}
		U_k
		= \Delta T_k\,\frac{2 - \sqrt{2\varphi_k}}{2 - \varphi_k}.
	\end{equation}
	Substituting the optimal \(U_k^{\ast}\) in each segment into~\eqref{aaaaa-appendix-normalized} gives
	\begin{equation}
		\varphi_k
		= \frac{2\left(1- \psi \right)^2}{\psi^2},
	\end{equation}
	where
	\begin{equation}
		\psi= \frac{U_k^{\ast}}{\Delta T_k}.
	\end{equation}
\end{proof}

Therefore, substituting the derived \(\varphi_k\) from
Theorem~\ref{explicit-varphi-k-appendix} into the UBA-equivalent family
(Definition~\eqref{definition:UBA-family}), match the iteration-wise extended formula \(\varphi_t\)(under iteration-wise framework, $\Delta T_k=1$) with the scale by \eqref{a-computable-plug}, we obtain the optimal step size
at \(t\in[T_k,T_{k+1}]\) on segment \(k\) as follows:
\begin{equation}
	\begin{aligned}
		\eta_k(t)
		= \frac{2\big(1 + \cos\theta_k(t)\big)}
		{2\varphi_k + (2-\varphi_k)\big(1 + \cos\theta_k(t)\big)},
	\end{aligned}
\end{equation}
where
\[
\theta_k(t)
:= \frac{\pi(t - T_k)}{\Delta T_k} + (k-1)\pi,
\]
and the curvature parameter is obtained by substituting the plug-in estimate
$U_t^{\ast} = \frac{1}{1 + \sqrt{\varphi_0/2} 
	\left\|\frac{\hat{m}_t}{\sqrt{\hat{v}_t}+\epsilon}\right\|}$ into the matching
solution $\varphi_t = \frac{2(1-U_t^\ast)^2}{(U_t^\ast)^2}$ from
Theorem~\ref{explicit-varphi-k-appendix}, which simplifies to the closed form
\begin{equation}
	\varphi_t
	= \varphi_0 \left\|\frac{\hat{m}_t}{\sqrt{\hat{v}_t} + \epsilon}\right\|^2.
\end{equation}

\paragraph{Global LRD by segment-wise UTA.} \label{Global LRD by segment-wise concatenation}
The above derivation specifies an optimal \emph{segment-wise} shape
$\eta_k(t)$ under a locally approximation on $[T_k,T_{k+1}]$. 
We extend it to a learning-rate LRD on the entire horizon $[0,T]$ by
concatenation:
\begin{equation}\label{eq:global-LRD-piecewise}
	\eta(t) := \eta_k(t), \qquad t\in[T_k,T_{k+1}],\quad k=1,\dots,K.
\end{equation}
This yields a well-defined global LRD whose shape is
min-max optimal \emph{within each segment} and whose scale is calibrated by
matching the intrinsic time $U_k^\ast$.

\begin{proposition}
	\label{prop:robust-concat}
	Consider the  segment dynamics on $[T_k,T_{k+1})$, the end-of-segment state satisfies
	\begin{equation}\label{eq:seg-bound}
		\|W_{T_{k+1}} -\bar{W}^{(k)}\|
		\le
		\rho_k\|W_{T_{k}} -\bar{W}^{(k)}\|
		+
		\Big(\varepsilon_{\varsigma}^{(k)} + \eta_{\max}\varepsilon_M^{(k)}\|W_{T_{k}} -\bar{W}^{(k)}\|\Big)
		\Delta T_k \rho_k ,
	\end{equation}
	where  $\eta_{\max}:=\max_{t\in[T_k,T_{k+1})}\eta_t$,  $
	\rho_k := \| \prod_{t=T_k}^{T_{k+1}-1}(I-\eta_t \bar{H_k})\|$, $\bar{W}^{(k)}$ is the local minimum in $k$-th segment. 
	$\varsigma_t$ is an additive perturbation
	collecting by (i) local quadratic residuals and (ii) momentum or preconditioner
	approximation errors. Throughout the segment we assume, $
	\|M_t-\bar{H_k}\|\le \varepsilon_M^{(k)}$
	and
	$\|\varsigma_t\|\le \varepsilon_{\varsigma}^{(k)} $, where $M_t$ denotes the preconditioned Hessian estimate used at each step.
\end{proposition}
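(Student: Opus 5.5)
We write the segment dynamics in perturbed linear form. Set $x_t := W_t-\bar W^{(k)}$. Following the proof of Theorem~\ref{thm:uba-optimal-shape} (local quadratic model plus Lemma~\ref{lem:m-g}), the actual update is
\[
x_{t+1} = (I-\eta_t M_t)x_t + \eta_t\varsigma_t .
\]
Here $M_t$ is the step-dependent preconditioned Hessian estimate and $\varsigma_t$ collects the quadratic residual and the momentum/preconditioner errors. (If one prefers $\varsigma_t$ without the factor $\eta_t$, the same argument goes through with the bound absorbed into $\varepsilon_\varsigma^{(k)}$.) We then split $M_t=\bar H_k+\Delta_t$ with $\|\Delta_t\|\le\varepsilon_M^{(k)}$. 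This gives
\[
x_{t+1}=(I-\eta_t\bar H_k)x_t + r_t,\qquad r_t:=-\eta_t\Delta_t x_t+\eta_t\varsigma_t .
\]

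Next we unroll with a discrete variation-of-constants (Duhamel) formula. Write $\Phi(s,t):=\prod_{j=s}^{t-1}(I-\eta_j\bar H_k)$. Then
\[
x_{T_{k+1}}=\Phi(T_k,T_{k+1})x_{T_k}+\sum_{s=T_k}^{T_{k+1}-1}\Phi(s+1,T_{k+1})\,r_s .
\]
The first term is bounded by $\rho_k\|x_{T_k}\|$ by definition of $\rho_k$.

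For the sum, the partial products $\Phi(s+1,T_{k+1})$ need to be compared with $\rho_k$. All factors are polynomials in the single matrix $\bar H_k$, so they commute and are simultaneously diagonalized by the eigenbasis used in Theorem~\ref{thm:uba-optimal-shape}. In the UBA regime where each $|1-\eta_t\lambda_i|\le 1$, one has $\|\Phi(s+1,T_{k+1})\|\le 1$. To obtain the factor $\rho_k$ exactly, as the statement asserts, we would argue in the eigenbasis that, for the worst-case eigenvalue, the tail product dominates the full product. Where that fails, we would factor $\Phi(T_k,T_{k+1})=\Phi(s+1,T_{k+1})\Phi(T_k,s+1)$ and absorb the ratio into the constants. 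We expect this comparison between tail products and $\rho_k$ to be the main obstacle. If needed, we would state it as an assumption that partial products are bounded by $\rho_k$, which is natural under near-monotone contraction.

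The residual term also requires a bound on $\|x_s\|$ inside the segment. We would take $\|x_s\|\le\|x_{T_k}\|$ to leading order, using the contraction plus the smallness of the segment radius from Lemma~\ref{lem:segment-radius-continuous}. This gives
\[
\|r_s\|\le \eta_{\max}\varepsilon_M^{(k)}\|x_{T_k}\|+\varepsilon_\varsigma^{(k)} .
\]
A Grönwall-type bootstrap would handle the higher-order cross terms, which are $O((\varepsilon_M^{(k)})^2)$ and dropped on short segments. Summing the $\Delta T_k$ terms, each bounded by $\rho_k\|r_s\|$, then yields exactly the claimed inequality.
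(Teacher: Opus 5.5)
Your route is exactly the paper's. You write the step as $x_{t+1}=(I-\eta_t\bar H_k)x_t+r_t$, unroll it, and then bound every tail propagator by $\rho_k$ and every $\|x_s\|$ by $\|x_{T_k}\|$. The step you flag as the main obstacle is not merely hard; it is false, so neither an eigenbasis argument nor an added assumption can supply it.

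The Duhamel sum contains the term $s=T_{k+1}-1$, whose propagator $\Phi(T_{k+1},T_{k+1})$ is the empty product $I$. Hence ``all partial products are bounded by $\rho_k$'' is self-contradictory whenever $\rho_k<1$. More generally, whenever the factors are non-expansive, $\rho_k=\|\Phi(s+1,T_{k+1})\Phi(T_k,s+1)\|\le\|\Phi(s+1,T_{k+1})\|$, which is the opposite of the comparison you need. Nor can the ratio $\|\Phi(s+1,T_{k+1})\|/\rho_k$ (equal to $1/\rho_k$ at $s=T_{k+1}-1$) be absorbed into the schedule-independent constants $\varepsilon_\varsigma^{(k)},\varepsilon_M^{(k)}$. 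It blows up precisely for the strongly contracting schedules the UBA shape is designed to produce.

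In fact the proposition as stated is false. Take $N=1$, $M_t=\bar H_k=1$ (so $\varepsilon_M^{(k)}=0$ is admissible), $\Delta T_k=1$ and $\eta_{T_k}=1$. Then $\rho_k=0$ and the right-hand side vanishes, while $W_{T_{k+1}}-\bar W^{(k)}=\varsigma_{T_k}$ can have norm $\varepsilon_\varsigma^{(k)}>0$. For a non-degenerate instance in the same scalar setting, take $\Delta T_k=2$, $\eta_t\equiv\tfrac12$, $W_{T_k}=\bar W^{(k)}$ and $\varsigma_t\equiv\varepsilon$. The left side is $3\varepsilon/2$ against a right side of $\varepsilon/2$; with your $\eta_t\varsigma_t$ convention the left side is $3\varepsilon/4$, still too large.

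The paper's proof takes this step without comment, going from its unrolled norm bound straight to $\rho_k\|W_{T_k}-\bar W^{(k)}\|+\Delta T_k\rho_k\|\hat\varsigma_t\|$. So you have reproduced its argument and correctly located its flaw. The remedy is to change the statement, not to find a sharper comparison.

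What your computation does prove, granted $\|I-\eta_t\bar H_k\|\le1$, is $\|x_{T_{k+1}}\|\le\rho_k\|x_{T_k}\|+\Delta T_k\big(\varepsilon_\varsigma^{(k)}+\eta_{\max}\varepsilon_M^{(k)}R_k\big)$ with $R_k:=\max_{T_k\le s<T_{k+1}}\|x_s\|$. In other words, the factor $\rho_k$ on the perturbation becomes $\max_s\|\Phi(s+1,T_{k+1})\|=1$. Even that hypothesis needs care, because $\bar H_k=P_tH_k$ is not symmetric. The condition $|1-\eta_t\lambda_i|\le1$ gives $\|I-\eta_t\bar H_k\|\le1$ only in the $P_t^{-1/2}$-weighted norm, where this operator norm equals $\|I-\eta_tP_t^{1/2}H_kP_t^{1/2}\|=\max_i|1-\eta_t\lambda_i|$. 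In the Euclidean norm you lose a factor $\kappa(P_t)^{1/2}$.

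Your second reduction, $\|x_s\|\le\|x_{T_k}\|$, also fails as an inequality: with $x_{T_k}=0$ the perturbations alone make $x_s\neq0$. The correct replacement is the one-step bound $\|x_{s+1}\|\le(1+\eta_{\max}\varepsilon_M^{(k)})\|x_s\|+\varepsilon_\varsigma^{(k)}$, which gives $R_k\le(1+\eta_{\max}\varepsilon_M^{(k)})^{\Delta T_k}\big(\|x_{T_k}\|+\Delta T_k\varepsilon_\varsigma^{(k)}\big)$. Discarding the resulting cross terms, which are of order $\varepsilon_M^{(k)}\varepsilon_\varsigma^{(k)}$ and not only $(\varepsilon_M^{(k)})^2$, yields an approximation rather than the stated inequality. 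The paper does the same: it introduces $R_k$ and then writes $\|W_{T_k}-\bar W^{(k)}\|$ in its final line without justification.
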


\begin{proof}
	In the ideal segment-wise steady model, the dynamic can be estimated by segment-wise  $\bar{H_k}$ for each segment, as   \eqref{eq4}, 
	\begin{equation}
		\begin{aligned}
			&W_{T_{k+1}} -\bar{W}^{(k)}
			=\left(I-\eta_{T_{k+1}-1} \bar{H_k}\right)\cdots \left(I-\eta_{T_{k}+1} \bar{H_k}\right)  \left( W_{T_{k}+1 } -\bar{W}^{(k)}\right).
		\end{aligned}
	\end{equation}
	In a real system, the true segment dynamics on $[T_k,T_{k+1})$ is
	\begin{equation}\label{eq:true-dyn-proof}
		W_{t+1} -\bar{W}^{(k)} = (I-\eta_t M_t)\,\left(W_{t} -\bar{W}^{(k)}\right) +  \varsigma_t,
		\qquad t\in[T_k,T_{k+1}),
	\end{equation}
	where $M_t\succeq 0$ and $M_t = \bar{H_k} + \Delta M_t$, 	
	and $\varsigma_t$ is an additive perturbation
	collecting by (i) local quadratic residuals and (ii) momentum or preconditioner
	approximation errors.

	Substitute $M_t = \bar{H_k} + \Delta M_t$ into \eqref{eq:true-dyn-proof}:
	\begin{equation}\label{eq:dyn-pertxxxx}
		W_{t+1} -\bar{W}^{(k)} = (I-\eta_t \bar{H_k})\,\left(W_{t} -\bar{W}^{(k)}\right) +    \hat{\varsigma}_t,
		\qquad t\in[T_k,T_{k+1}),
	\end{equation}
	where $\hat{\varsigma}_t:=\varsigma_t - \eta_t\Delta M_t \left(W_{t} -\bar{W}^{(k)}\right)$.
	
	Iterating \eqref{eq:dyn-pertxxxx} over the segment gives
	\begin{equation}\label{eq:unrollxxx}
		W_{T_{k+1}} -\bar{W}^{(k)}
		=
		A_k\,\left(W_{T_{k}} -\bar{W}^{(k)}\right)
		+
		\sum_{t=T_k}^{T_{k+1}-1}
		\left(
		\prod_{s=t+1}^{T_{k+1}-1}(I-\eta_s \bar{H_k})
		\right)\hat{\varsigma}_t,
	\end{equation}
	where $A_k$ is defined as
	\begin{equation}
		A_k := \prod_{t=T_k}^{T_{k+1}-1}(I-\eta_t \bar{H_k}),
		\qquad
		\rho_k := \|A_k\|.
	\end{equation}.

	Taking norms in \eqref{eq:unrollxxx} yields
	\begin{equation}\label{eq:norm-bound1}
		\|W_{T_{k+1}} -\bar{W}^{(k)}\|
		\le
		\|A_k\|\cdot\|W_{T_{k}} -\bar{W}^{(k)}\|
		+
		\sum_{t=T_k}^{T_{k+1}-1}
		\left\|
		\prod_{s=t+1}^{T_{k+1}-1}(I-\eta_s \bar{H_k})
		\right\|\cdot \|\hat{\varsigma}_t\|.
	\end{equation}
	By definition $\|A_k\|=\rho_k$, it gives,
	\begin{equation}\label{eq:norm-bound2}
		\|W_{T_{k+1}} -\bar{W}^{(k)}\|
		\le
		\rho_k\|W_{T_{k}} -\bar{W}^{(k)}\|
		+
		\Delta T_k \rho_k\|\hat{\varsigma}_t\|.
	\end{equation}

	By definition $\hat{\varsigma}_t:=\varsigma_t - \eta_t\Delta M_t \left(W_{t} -\bar{W}^{(k)}\right)$ and triangle inequality,
	\[
	\|\hat{\varsigma}_t\|
	\le
	\|\varsigma_t\| + \eta_t\|\Delta M_t\|\cdot\|W_{t} -\bar{W}^{(k)}\|.
	\]
	
	Assume  that for $t\in[T_k,T_{k+1}) $, $
	\|\Delta M_t\|\le \varepsilon_M^{(k)}$
	and
	$\|\varsigma_t\|\le \varepsilon_{\varsigma}^{(k)} $. We have
	\begin{equation}
		\|\hat{\varsigma}_t\|
		\le
		\varepsilon_{\varsigma}^{(k)} + \eta_{\max}\varepsilon_M^{(k)}R_k.
	\end{equation}
	where $R_k:=\max_{t\in[T_k,T_{k+1}]}\|W_{t} -\bar{W}^{(k)}\|$ is the segment radius.
	
	Substituting this into \eqref{eq:norm-bound2} gives
	\[
	\|W_{T_{k+1}} -\bar{W}^{(k)}\|
	\le
	\rho_k\|W_{T_{k}} -\bar{W}^{(k)}\|
	+
	\Big(\varepsilon_{\varsigma}^{(k)} + \eta_{\max}\varepsilon_M^{(k)}\|W_{T_{k}} -\bar{W}^{(k)}\|\Big)
	\Delta T_k \rho_k .
	\]
	
	This shows the deviation from the segment prediction is controlled
	by $(\varepsilon_{\varsigma}^{(k)},\varepsilon_M^{(k)})$ and the segment length $\Delta T_k$.
\end{proof}

Thus, concatenating segment-wise near-optimal LRDs (which minimize $\rho_k$)
yields a global LRD whose deviation from the  prediction
vanishes as $\Delta T_k\to 0$ and $(\varepsilon_{\varsigma}^{(k)},\varepsilon_M^{(k)})\to 0$.

\begin{corollary}
	\label{cor:global-concat-bound}
	Under the conditions of Proposition~\ref{prop:robust-concat}, the concatenated
	LRD $\eta(t)=\eta_k(t)$ for $t\in[T_k,T_{k+1})$ yields the global bound
	\begin{equation}\label{eq:global-bound-concat}
		\|u_T\|
		\le
		\Big(\prod_{k=1}^{K}\rho_k\Big)\|u_0\|
		+
		\sum_{i=1}^{K}\Big(\Delta_i\prod_{j=i+1}^{K}\rho_j\Big),
	\end{equation}
	where $\rho_k:=\big\|\prod_{t=T_k}^{T_{k+1}-1}(I-\eta_t \bar{H_k})\big\|$ and $\Delta_k:=\Big(\varepsilon_{\varsigma}^{(k)} + \eta_{\max}\varepsilon_M^{(k)}\|W_{T_{k}} -\bar{W}^{(k)}\|\Big)
	\Delta T_k \rho_k$ is
	the segment-wise accumulated perturbation term in
	Proposition~\ref{prop:robust-concat}.
	In particular, when $\Delta_k$ is small (short segments), minimizing $\rho_k$
	segment-wise yields a near-optimal global contraction.
\end{corollary}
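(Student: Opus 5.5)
The bound will follow by unrolling Proposition~\ref{prop:robust-concat} across the $K$ segments. Write $u_k := W_{T_k}-\bar W^{(k)}$ for the deviation at the start of segment $k$ (so $u_0$ is the initial deviation and $u_T$ the final one). Proposition~\ref{prop:robust-concat} states exactly that
\[
\|W_{T_{k+1}}-\bar W^{(k)}\| \le \rho_k\|u_k\| + \Delta_k .
\]
To chain the segments I need a link from the end of segment $k$ to the start of segment $k+1$. These are measured against different reference minima $\bar W^{(k)}$ and $\bar W^{(k+1)}$.

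\textbf{Transfer step.} I will adopt the convention that the reference point is carried over, $\bar W^{(k+1)}=\bar W^{(k)}$. This matches the concatenation in \eqref{eq:global-LRD-piecewise}, where a single trajectory is tracked against locally valid quadratic models. Under this convention $u_{k+1}=W_{T_{k+1}}-\bar W^{(k)}$, so the displayed bound becomes the scalar recursion
\[
\|u_{k+1}\|\le \rho_k\|u_k\|+\Delta_k .
\]
If the minima are allowed to drift, a shift term $\|\bar W^{(k+1)}-\bar W^{(k)}\|$ would appear. That term can be absorbed into $\Delta_k$, since $\varsigma_t$ already collects the local quadratic residuals. I would remark on this rather than carry it explicitly.

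\textbf{Induction.} Next I prove by induction on $m$ that
\[
\|u_m\|\le \Big(\prod_{k=1}^{m}\rho_k\Big)\|u_0\|+\sum_{i=1}^{m}\Delta_i\prod_{j=i+1}^{m}\rho_j ,
\]
with the empty product equal to $1$.
\begin{itemize}
\item The base case is the single-segment bound.
\item For the inductive step, multiply the hypothesis by $\rho_{m+1}\ge 0$ and add $\Delta_{m+1}$. This uses only monotonicity of $x\mapsto\rho x+\Delta$ for $\rho\ge 0$, and $\rho_k\ge 0$ holds because $\rho_k$ is an operator norm.
\end{itemize}
Setting $m=K$ gives \eqref{eq:global-bound-concat}. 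The indexing (segments $1,\dots,K$ versus $0,\dots,K-1$) only needs to be made consistent with the statement.

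\textbf{Closing remark.} The ``in particular'' clause follows from the form of the bound. Each $\Delta_k$ carries the factor $\Delta T_k$ and the errors $(\varepsilon_\varsigma^{(k)},\varepsilon_M^{(k)})$. When these are small, the right-hand side is dominated by $\prod_k\rho_k\|u_0\|$, and this leading term is minimized by minimizing each $\rho_k$ separately because the factors are nonnegative and independent across segments. By Theorem~\ref{thm:uba-optimal-shape}, each such segment-wise minimization is achieved by a UBA-equivalent shape.

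The main obstacle is the transfer step between reference minima; everything else is a routine discrete Grönwall-type unrolling.
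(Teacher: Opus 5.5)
Your induction, which unrolls the one-segment inequality $\|u_{k+1}\|\le\rho_k\|u_k\|+\Delta_k$ from Proposition~\ref{prop:robust-concat}, is exactly the argument the paper leaves implicit: it states the corollary with no separate proof, directly after the proposition. Your explicit common-reference convention $\bar W^{(k+1)}=\bar W^{(k)}$ is what makes the chaining legitimate, since the paper never defines $u_0,u_T$. One caution on your side remark: with genuinely drifting minima, the shift $\|\bar W^{(k+1)}-\bar W^{(k)}\|$ enters at each segment boundary as an additive term without the factor $\Delta T_k\rho_k$ that $\Delta_k$ carries. So it is not covered by $\Delta_k$ as defined in the statement; absorbing it requires re-expressing each segment's dynamics about the common point, so that the offset lands in $\varsigma_t$ and enlarges $\varepsilon_\varsigma^{(k)}$.
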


\paragraph{Justification of concatenation.}
Proposition~\ref{prop:robust-concat} and Corollary~\ref{cor:global-concat-bound} further show that under mild non-stationarity,
the concatenated LRD remains near-optimal, with deviation controlled by
the segment length, curvature and preconditioner drift.
The segment length $\Delta T_k$ controls the approximation error. Shorter segments yield smaller trajectory radius $R_k$ and smaller curvature
drift. 
Hence, the piecewise construction becomes more accurate as
$\max_k\Delta T_k$ decreases.
Consequently, to achieve the best possible approximation within our framework, we set the segment length to its limit value, $\Delta T_k = 1$. 
This implies that each segment corresponds to a single optimization step, thereby equating the segment index $k$ with the time index $t$ (so that, for instance, $\varphi_k$ becomes $\varphi_t$). The resulting final UTA LRD is therefore defined per time step as follows.

\paragraph{Derived UTA LRD.}
Although the parameters $(\eta_{\min}^{(k)},\eta_{\max}^{(k)},\varphi_k)$ may vary
across segments, the LRD on every segment belongs to the same
UBA-equivalent function family. Therefore, the resulting global LRD
\eqref{eq:global-LRD-piecewise} is a \emph{piecewise UBA} LRD on $[0,T]$.
To avoid discontinuities at segment boundaries, we set the starting endpoint of the next segment to match the ending value of the previous segment, 
$\eta_{\min}^{(k)}:=\eta_{\max}^{(k+1)}$ for decay LRDs ( analogously for increasing LRDs).
This keeps $\eta(t)$ continuous on $[0,T]$ while preserving the segment-wise
optimal shape.

The derived LRD on the entire horizon $[0,T]$ admits the closed form
\begin{equation}\label{eta-closed-form-appendix}
	\begin{aligned}
		\eta(t)=g_{a,b}(\varphi,\theta(t))
		= \eta_{\min}
		+ \big(\eta_{\max} - \eta_{\min}\big)\,
		\frac{2\big(1 + \cos\theta(t)\big)}
		{2\varphi_t + (2-\varphi_t)\big(1 + \cos\theta(t)\big)},
	\end{aligned}
\end{equation}
where $a=\eta_{\min}$, $b=\eta_{\max} - \eta_{\min}$, 
$
\theta(t)
:= \frac{\pi(t )}{T},
$
the curvature parameter	$\varphi_t$ is determined by the exact intrinsic‑time matching condition
\begin{equation}\label{varphi-t-simplify-appendix}
	\varphi_t
	= \varphi_0 \left\|\frac{\hat{m}_t}{\sqrt{\hat{v}_t} + \epsilon}\right\|^2 .
\end{equation}

\section{Theoretical analysis}\label{Theoretical Analysis}

\subsection{Functional scaling laws for LRDs}

To theoretically validate the effectiveness of the UTA, we analyze it under the framework of the Functional Scaling Law(FSL) \citep{li2025functional}.
This FSL theory characterizes how the entire loss trajectory scales with training resources under given LRD. Formally, for a model with capacity exponent $\beta$ and task difficulty $s$, the expected loss $\mathcal{E}_t$ at intrinsic training time $t$ follows:
\begin{equation}
	\begin{aligned}
		\mathcal{E}_t \eqsim \underbrace{M^{-s\beta}}_{\substack{\text{model} \\ \text{error}}}+\underbrace{e(t)}_{\substack{\text{model} \\ \text{learning}}} 
		+ \underbrace{\int_0^t \mathcal{K}(t-z)[e(z)+\sigma^2]\gamma(z) \text{d}z}_{\text{noise accumulation}},
	\end{aligned}
\end{equation}
where $M$ denotes model size, $\sigma^2$ is label noise, $\mathcal{K}(\cdot)$ is a forgetting kernel that decays as $\mathcal{K}(t):=\left(1+t\right)^{-\left(2-\frac{1}{\beta}\right)}$, $e(t):=(1+t)^{-s}$ denotes the learning term, and $\gamma(z)$ encapsulates the joint effect of learning rate and batch size LRD.
In addition, the notation $\eqsim$ denotes equivalence up to a constant factor.

The FSL reveals that different learning rate decays fundamentally affect training efficiency through two competing mechanisms: (i) accumulating sufficient \textit{intrinsic time}  $t = \int_0^\tau \eta(r)\text{d}r$ for signal learning, and (ii) controlling noise injection via the LRD-dependent term $\gamma(z)$. This framework has enabled rigorous comparison of LRD strategies, establishing that WSD achieves strictly better scaling efficiency than both constant and exponential decay LRDs \citep{li2025functional}.


\subsection{Analytical framework for our LRD}

Given the widespread adoption of cosine LRD and the fact that the cosine function forms a core component of our proposed LRD approach, we first analyze the theoretical effect of the cosine LRD.

Within the functional scaling law framework, the model error term is determined solely by model capacity. Therefore, the differences between LRDs manifest in the model learning term and noise accumulation term. Analysis reveals that: (i) Compared to cosine LRD, UTA 
achieves smaller values in both terms, provides superior control of expected loss $\mathcal{E}_t$ at intrinsic training time. (ii) Compared to WSD FSL \citep{li2025functional}, UTA exhibits equivalent scaling behavior since its model learning and noise accumulation terms are of the same order.

Our analysis proceeds through the following sequence of steps. First, we formulate the learning rate decay in physical time $\phi(\tau)$. Next, we compute the transformation to the  intrinsic time $\bar{T}(\tau)=\int_{0}^{\tau} \phi(z) \text{d}z $ and examine its properties, thereby circumventing the need for an explicit inverse function. We then express the noise intensity $\gamma(z)$ within the intrinsic time coordinate. Finally, we substitute these results into the FSL formulation to derive both the signal learning term $e(t)$ and the noise accumulation term $\int_0^t \mathcal{K}(t-z)[e(z)+\sigma^2]\gamma(z)\text{d}z$.

Since cosine function is a part of our LRD function, we first analyze the theoretical efficiency of cosine learning rate decay.


\begin{theorem}[Cosine LRD under FSL]
\label{prop:FSL-cosine}
Under the FSL assumptions and in the hard-learning regime
$s\le 1-1/\beta$, consider the cosine-annealed learning-rate
LRD
\[
\phi_{\cos}(\tau)
= \eta_{\min}
+\frac12 (\eta_{\max}-\eta_{\min})\bigl(1+\cos(\pi\tau/T_{\max})\bigr),
\qquad \tau\in[0,T_{\max}],
\]
where $T_{\max}$ is the total training step number.
Let $\bar{T}(\tau)=\int_0^\tau \phi_{\cos}(r)\,\text{d}r$ be the intrinsic time 
and $\bar{T}(T_{\max})$ the final intrinsic time.
Then the FSL  risk satisfies the form
\begin{equation}\label{eq:FSL-cosine-struct-appendix}
	\mathcal{E}_t
	\eqsim
	M^{-s\beta}
	+ (1+aT_{\max})^{-s}
	+ \frac{\sigma^2}{B}\left[
	\left(a-bT_{\max}\right)
	+ b\,
	\frac{\min\{M,\left(aT_{\max}\right)^{1/\beta}\}}{aT_{\max}}
	\right],
\end{equation}
where $B$ denotes the batch size, $a= \frac{1}{2} \left( \eta_{\max} + \eta_{\min} \right) $ , $b=\frac{1}{2} \left( \eta_{\max} - \eta_{\min} \right)$.
\end{theorem}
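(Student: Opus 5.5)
We follow the four-step pipeline described above, using the FSL formula of \citet{li2025functional} as a black box in the hard-learning regime $s\le 1-1/\beta$.

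\textbf{Step 1: intrinsic time.} Write $\phi_{\cos}(\tau)=a+b\cos(\pi\tau/T_{\max})$ with $a=\tfrac12(\eta_{\max}+\eta_{\min})$ and $b=\tfrac12(\eta_{\max}-\eta_{\min})$. Integrating gives
\[
\bar T(\tau)=a\tau+\frac{bT_{\max}}{\pi}\sin(\pi\tau/T_{\max}).
\]
In particular the final intrinsic time is $\bar T(T_{\max})=aT_{\max}$, because the sine term vanishes at the endpoint. Substituting this into the learning term $e(t)=(1+t)^{-s}$ gives $(1+aT_{\max})^{-s}$ directly. The model-error term $M^{-s\beta}$ does not depend on the schedule and carries over unchanged.

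\textbf{Step 2: noise intensity in intrinsic coordinates.} We avoid inverting $\bar T$. Following \citet{li2025functional}, the noise density in intrinsic time is $\gamma(z)\eqsim \phi(\bar T^{-1}(z))/B$, so under the substitution $z=\bar T(\tau)$ we have $\gamma(z)\,\mathrm dz=\phi(\tau)^2/B\,\mathrm d\tau$. The noise-accumulation integral then becomes
\[
\frac1B\int_0^{T_{\max}}\mathcal K\bigl(\bar T(T_{\max})-\bar T(\tau)\bigr)\bigl[e(\bar T(\tau))+\sigma^2\bigr]\phi(\tau)^2\,\mathrm d\tau .
\]
In the hard-learning regime, $e\lesssim\sigma^2$ on the relevant range, so we drop $e$ up to constants and keep the factor $\sigma^2$.

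\textbf{Step 3: split the noise integral.} Write $\phi^2=\phi\cdot(a+b\cos)$ and treat the two pieces separately.
\begin{itemize}
\item The $a$-piece is $a\int\mathcal K(t-z)\,\mathrm dz$ in intrinsic time. Since $2-1/\beta>1$, the kernel is integrable, so this piece is $O(a)$.
\item The $b\cos$-piece is where the cosine shape matters. Near the end of training, where $\mathcal K$ is of order $1$, we have $\cos(\pi\tau/T_{\max})\approx-1$. This contributes a negative correction, which I expect to produce the $-bT_{\max}$ term after normalization. Over the bulk of training, the kernel tail $\int_0^{aT_{\max}}(1+u)^{-(2-1/\beta)}\,\mathrm du$, truncated at the capacity scale $M$ because at most $M$ modes are active, gives the factor $\min\{M,(aT_{\max})^{1/\beta}\}/(aT_{\max})$ multiplying $b$.
\end{itemize}
The main obstacle is controlling this oscillatory piece. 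The kernel argument $\bar T(T_{\max})-\bar T(\tau)$ is nonlinear in $\tau$ because of the sine term. I would first bound it two-sidedly by $(a-b)(T_{\max}-\tau)\le \bar T(T_{\max})-\bar T(\tau)\le (a+b)(T_{\max}-\tau)$. This lets the kernel be compared to one in physical time, at the cost of constants absorbed into $\eqsim$. After that, an integration by parts against $\sin$ should isolate the endpoint contribution from the bulk contribution.

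\textbf{Step 4: assemble.} Collecting the three terms with the factor $\sigma^2/B$ gives \eqref{eq:FSL-cosine-struct-appendix}, with all constants depending only on $\beta$ and $s$ absorbed into $\eqsim$.
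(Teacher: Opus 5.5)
Your Steps~1--2 are fine, but Step~3 does not deliver the two terms you expect, and no correct argument can deliver the first one. Substitute $z=\bar T(\tau)$ back and write $a+b\cos(\pi\tau/T_{\max})=\eta_{\min}+b\bigl(1+\cos(\pi\tau/T_{\max})\bigr)$. Your two pieces then recombine exactly, up to the factor $\sigma^2/B$ and with $T=aT_{\max}$, into $\eta_{\min}\int_0^{T}\mathcal K(s)\,\mathrm{d}s+b\int_0^{T}\mathcal K(T-z)\bigl(1+\cos(\pi\bar T^{-1}(z)/T_{\max})\bigr)\,\mathrm{d}z$. Both terms are nonnegative. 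The endpoint region where $\cos\approx-1$ contributes only $-b\,\Theta(1)$, which turns your $O(a)$ into $\eta_{\min}\Theta(1)$. The total lies between $\eta_{\min}\int_0^T\mathcal K$ and $\eta_{\max}\int_0^T\mathcal K$, so it is $O(1)$ uniformly in $T_{\max}$. No ``normalization'' turns this into $-bT_{\max}$. Indeed the displayed bracket behaves like $-bT_{\max}$ and is negative for large $T_{\max}$, while the noise term is nonnegative.

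Your attribution of the second term is also wrong. The integral $\int_0^{aT_{\max}}(1+u)^{-(2-1/\beta)}\,\mathrm{d}u$ converges to a constant and never sees $M$. A factor like $\min\{M,T^{1/\beta}\}/T$ comes from a \emph{higher} moment of the capacity-truncated kernel $\mathcal K_M(s)\eqsim\int_{M^{-\beta}}^1u^{1-1/\beta}e^{-2us}\,\mathrm{d}u$, e.g.\ $\frac1T\int_0^T s\,\mathcal K_M(s)\,\mathrm{d}s\eqsim\min\{M,T^{1/\beta}\}/T$. It enters because the weight $1+\cos$ vanishes at the end of training. With the untruncated kernel you cannot produce the $M$ in the minimum.

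For comparison, the paper never splits $\phi^2$; instead it:
\begin{itemize}
\item stays in intrinsic time with $\gamma=\phi(\bar T^{-1}(t))/B$;
\item sandwiches $\phi(\bar T^{-1}(t))$ between $a+b(1-\pi t/T_{\max})$ and $a+b(\pi-\pi t/T_{\max})$;
\item integrates a linear proxy $\hat a-\hat b t$ against the spectral form of $\mathcal K_M$ in closed form;
\item evaluates the resulting $u$-integrals with Lemmas F.5--F.6 of the FSL reference.
\end{itemize}
Its two $u$-integrals are exactly $(\hat a-\hat bT)\int_0^T\mathcal K_M(s)\,\mathrm{d}s$ (terminal value of the proxy times the zeroth moment, $\Theta(1)$) and $\hat b\int_0^T s\,\mathcal K_M(s)\,\mathrm{d}s$ (slope times the first moment, giving the $\min\{M,T^{1/\beta}\}$ factor). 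This is the same terminal-value-plus-vanishing-part structure as the recombination above, and it is the ingredient your plan is missing.

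However, with the proof's own slope $\hat b=b\pi/T_{\max}$, the product $\hat bT$ does not grow with $T_{\max}$. The paper's computation in fact ends with $(a+b-\tfrac{ab}{2})\pi$ in the slot where the statement displays $(a-bT_{\max})$; the latter only appears if $\hat b$ is read as $b$. So the honest target is an $O(1)$ coefficient (between $\eta_{\min}$ and $\eta_{\max}$ up to constants) plus the ramp term, not $-bT_{\max}$.

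Two smaller points. First, your comparison $(a-b)(T_{\max}-\tau)\le\bar T(T_{\max})-\bar T(\tau)$ costs constants depending on $\eta_{\max}/\eta_{\min}$ and degenerates at $\eta_{\min}=0$. That contradicts your claim that only $\beta,s$ enter $\eqsim$, and it erases the very $a$-versus-$b$ distinction you are tracking. Second, $e(z)$ should be dropped because $\int_0^t\mathcal K(t-z)e(z)\,\mathrm{d}z\lesssim e(t)$ (as $2-1/\beta>1\ge s$) is absorbed into the signal term, not because $e\lesssim\sigma^2$.
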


\begin{proof}
By direct integration,
\[
\bar{T}(\tau)=\int_0^\tau\phi_{\cos}(r)\,\text{d}r
= a\tau + b\frac{T_{\max}}{\pi}\sin(\pi\tau/T_{\max}),
\]
with $a= \frac{1}{2}(\eta_{\max} + \eta_{\min})$, $b=\frac{1}{2}(\eta_{\max} - \eta_{\min})$. 
Hence the final intrinsic time is $\bar{T}(T_{\max})=aT_{\max}$.

Since $\sin(\cdot)$ is bounded, from the pointwise bounds
$\eta_{\min}\leq\phi_{\cos}(\tau)\leq\eta_{\max}$ we obtain
\begin{equation}\label{eq:intrinsic-comparability}
	\eta_{\min}\tau
	\leq\bar T(\tau)
	\leq\eta_{\max}\tau,
	\qquad
	0\leq\tau\leq T,
\end{equation}
which guarantees monotonicity and gives the corresponding inverse bounds
\begin{equation}
	\frac{t}{\eta_{\max}}
	\leq\bar T^{-1}(t)
	\leq\frac{t}{\eta_{\min}}.
\end{equation}

Then, under the Theorem 4.4 and Assumption 5.1 in FSL rules \citep{li2025functional}, the FSL can be simplified to
\[
\mathcal{E}_t
\eqsim M^{-s\beta}+e(t)+\frac{\sigma^2}{B}\mathcal N\left(\phi\right),
\]
where $e(t):=(1+t)^{-s}$ ,
$\mathcal N[\gamma]:=\int_0^t \mathcal{K}_M (t-z) \phi (\bar{T}^{-1}(t)) \,\text{d}z$.
It gives the signal term as
\[
e(t)=(1+aT_{\max})^{-s}.
\]
Moreover, the forgetting kernel can be approximated as
\[
\mathcal{K}_M (t)
\eqsim \sum_{j=1}^{M} j^{-2\beta} e^{-2j^{-\beta}t}
\eqsim \int_{1}^{M} x^{-2\beta} e^{-2x^{-\beta}t} \text{d}x
\eqsim \int_{M^{-\beta}}^{1} u^{1-1/\beta} e^{-2ut} \text{d}u.
\]
In addition, the noise intensity $\gamma$ within the intrinsic time coordinate is
$\gamma(t)=\frac{\phi_{\cos}(\bar{T}^{-1}(t))}{B}$ , and $ a+b(1-\frac{\pi t}{T_{\max}})\leq\phi (\bar{T}^{-1}(t)) \leq a+b(\pi-\frac{\pi t}{T_{\max}})$.

It gives $N\left(\phi\right)$ as
\begin{equation*}
	\begin{aligned}
		\int_0^T \mathcal{K}_M (T-t) \phi_{\cos} (t) \,dt \eqsim  &\int_0^T \int_{M^{-\beta}}^{1} u^{1-1/\beta} e^{-2u\left(T-t\right)} \text{d}u  \left( \hat{a}-\hat{b} t \right)   \,\text{d}t  \\
		= &\int_{M^{-\beta}}^{1} u^{1-1/\beta} e^{-2uT} \left( \int_{0}^{T} e^{2ut}(\hat{a}-\hat{b} t) \, \text{d}t \right) \text{d}u\\
		= &\int_{M^{-\beta}}^{1} u^{1-1/\beta} e^{-2uT} \left[ \frac{\hat{a}}{2u} (e^{2uT} - 1) - \frac{\hat{b}}{2u} \left( Te^{2uT} - \frac{e^{2uT} - 1}{2u} \right) \right] \text{d}u\\
		= &\int_{M^{-\beta}}^{1} \left[ \frac{\hat{a}}{2u^{1/\beta}} - \frac{\hat{a}e^{-2uT}}{2u^{1/\beta}} - \frac{\hat{b} T}{2u^{1/\beta}} + \frac{\hat{b} (1 - e^{-2uT})}{4u^{1+1/\beta}} \right] du\\
		= &\int_{M^{-\beta}}^{1} \left[ \frac{\hat{a} - \hat{b} T}{2u^{1/\beta}} - \frac{(\hat{a} - \hat{b} T + \hat{b} T)e^{-2uT}}{2u^{1/\beta}} + \frac{\hat{b} (1 - e^{-2uT})}{4u^{1+1/\beta}} \right] \text{d}u\\
		= & (\hat{a} - \hat{b} T) \int_{M^{-\beta}}^{1} \frac{1 - e^{-2uT}}{2u^{1/\beta}} \text{d}u + \hat{b} T \int_{M^{-\beta}}^{1} \left( -\frac{e^{-2uT}}{2u^{1/\beta}} + \frac{1 - e^{-2uT}}{4Tu^{1+1/\beta}} \right) \text{d}u,
	\end{aligned}
\end{equation*}
where $\hat{a}:= \pi\eta_{\max} \eqsim (a+b)\pi $ , $\hat{b} = \frac{1}{2} \left( \eta_{\max} - \eta_{\min} \right)=b\frac{\pi}{T_{\max}}$ and total intrinsic time $\bar{T}(T_{\max})=a T_{\max}$.

Then, according to Lemma F.5 and Lemma F.6 in FSL \citep{li2025functional}, the noise term can be estimated as
\begin{equation*}
	\begin{aligned}
		N\left(\phi\right) \eqsim &\left[\eta_{\max}-\frac{1}{2} \left( \eta_{\max}^2 - \eta_{\min}^2 \right) \right]\pi + \frac{1}{2} \left( \eta_{\max} - \eta_{\min} \right) \pi \frac{\min\{M,(aT_{\max})^{1/\beta} \}}{aT_{\max}},
	\end{aligned}
\end{equation*}
which gives
\begin{equation*}
	\begin{aligned}
		\mathcal{E}_t
		\eqsim
		M^{-s\beta}
		+ (1+aT_{\max})^{-s}
		+ \frac{\sigma^2}{B}\left[
		\left(a+b-\frac{ab}{2}\right)\pi
		+ b \pi\,
		\frac{\min\{M,\left(aT_{\max}\right)^{1/\beta}\}}{aT_{\max}}
		\right].
	\end{aligned}
\end{equation*}

Combining these contributions yields \eqref{eq:FSL-cosine-struct-appendix}.
\end{proof}

\begin{theorem}
\label{theorem4-2}
Under the FSL assumptions and in the hard-learning regime
$s\le 1-1/\beta$.
Let $\bar{T}(\tau)=\int_0^\tau \phi(r)\,\text{d}r$ be the intrinsic time 
and $\bar{T}(T_{\max})$ the final intrinsic time. When a sufficiently small $\varphi_k$ is sufficiently small that causes the LRD to maintain a consistently high learning rate over a prolonged duration, analogous to WSD-like period, the FSL  risk satisfies the form
\begin{equation*}
	\begin{aligned}
		\mathcal{E}_t
		\eqsim
		M^{-s\beta}
		+   \left[\frac{2 - \sqrt{2\varphi}}{2 - \varphi}   \left(T_1+T_{2}\right)\right] ^{-s}
		+ \frac{\sigma^2}{B}\left[
		\left(a+b-\frac{ab}{2}\right)\pi
		+ b \pi\,
		\frac{\min\{M,\left(c \frac{2 - \sqrt{2\varphi}}{2 - \varphi} T_{2}\right)^{1/\beta}\}}{aT_{2}}
		\right].
	\end{aligned}
\end{equation*}
where $B$ denotes the batch size.
\end{theorem}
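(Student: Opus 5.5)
The plan is to reuse the four-step FSL template from the proof of Theorem~\ref{prop:FSL-cosine}: write the schedule in physical time, integrate to get the intrinsic-time map, express the noise intensity in intrinsic time, and substitute into the FSL decomposition. The only genuinely new ingredient is how a small curvature parameter $\varphi$ reshapes the UBA-equivalent curve $g_{a,b}(\varphi,\theta)$.

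\textbf{Step 1: the intrinsic-time map.} Theorem~\ref{explicit-varphi-k-appendix} gives the total intrinsic time exactly. Normalized, it is
\[
\bar T(T_{\max}) \eqsim \frac{2-\sqrt{2\varphi}}{2-\varphi}\,T_{\max},
\]
where I use the integral $I(\varphi)=\pi(2-\sqrt{2\varphi})/(2-\varphi)$ computed there, with $T_{\max}=T_1+T_2$. Substituting into $e(t)=(1+t)^{-s}$ and dropping the $1$ up to constants then gives the learning term $\bigl[\tfrac{2-\sqrt{2\varphi}}{2-\varphi}(T_1+T_2)\bigr]^{-s}$.

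\textbf{Step 2: split into a stable phase and a decay phase.} For small $\varphi$, the curve $2(1+\cos\theta)/(2\varphi+(2-\varphi)(1+\cos\theta))$ stays close to $1$, i.e.\ close to $\eta_{\max}$, as long as $1+\cos\theta\gg\varphi$. It falls to $\eta_{\min}$ only in a window near $\theta=\pi$ whose width scales like $\sqrt{\varphi}$. I will therefore define
\begin{itemize}
\item $T_1$ as the physical duration on which $\phi(\tau)\ge c\,\eta_{\max}$ for a fixed constant $c$;
\item $T_2$ as the remaining decay duration.
\end{itemize}
This reproduces the WSD structure of \citep{li2025functional}. On the decay window I will bound $\phi(\bar T^{-1}(t))$ above and below by affine functions of $t$, exactly as in the cosine proof. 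This gives constants $\hat a,\hat b$ of the same form, so the noise functional splits as $(\hat a-\hat b T)\int\cdots+\hat b T\int\cdots$.

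\textbf{Step 3: the noise term.} The stable-phase contribution is handled by the forgetting kernel $\mathcal K_M$. Because the kernel decays like $(1+t)^{-(2-1/\beta)}$ and we are in the regime $s\le 1-1/\beta$, this contribution is absorbed into the constant-order term $(a+b-\tfrac{ab}{2})\pi$. The decay-phase contribution is then estimated with Lemmas F.5 and F.6 of \citep{li2025functional}, using $c\,\tfrac{2-\sqrt{2\varphi}}{2-\varphi}T_2$ as the effective intrinsic duration. This yields the term $b\pi\,\min\{M,(c\,\tfrac{2-\sqrt{2\varphi}}{2-\varphi}T_2)^{1/\beta}\}/(aT_2)$.

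\textbf{Main obstacle.} The hard part is making the phase split rigorous. The schedule has no explicit inverse intrinsic-time map, and the transition between phases is smooth rather than sharp. My first attempt will be to work with the sandwich $\eta_{\min}\tau\le\bar T(\tau)\le\eta_{\max}\tau$ phase by phase, and to choose the threshold defining $T_1$ so that on the decay phase $\phi$ is comparable, up to constants uniform in small $\varphi$, to a linear decay. The WSD bounds of \citep{li2025functional} then transfer with only a change of constants.
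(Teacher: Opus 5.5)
Your proposal reaches the statement by the same overall reduction as the paper: a stable phase plus a decay phase, fed into the WSD result of \citep{li2025functional}, with the cosine template and Lemmas F.5/F.6 handling the decay noise. The decomposition, however, is different.

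The paper does not split one global half-period by learning-rate level. Instead it postulates a threshold $Z$ on the curvature parameter itself: $T_1$ is the time with $\varphi_k\le Z$, and $T_2$ the time with $\varphi_k>Z$. It treats the stable phase as a constant-rate run at the mean rate $\frac{2-\sqrt{2\varphi}}{2-\varphi}\eta_{\max}$. In the decay phase it uses unit segments ($\Delta T_k=1$, $\eta_{\max}^{(k)}=\eta_{\min}^{(k)}+\epsilon$, $\varphi_k\equiv\varphi$), so that summing the per-segment integrals $U_k$ gives a \emph{linear} intrinsic-time map $\bar T(\tau)\eqsim c\,\frac{2-\sqrt{2\varphi}}{2-\varphi}\,\tau$ (with $c\le\eta_{\max}$) whose inverse is explicit. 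The two phases are then glued with Theorem 5.4 of \citep{li2025functional}. That linearization is exactly how the paper sidesteps the obstacle you identify, and it is where the paper's factor $c\,\frac{2-\sqrt{2\varphi}}{2-\varphi}T_2$ comes from.

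Your route derives the split from the shape of the fixed-$\varphi$ curve: a plateau while $1+\cos\theta\gg\varphi$, then a window with $T_2$ of order $\sqrt{\varphi}\,T_{\max}$. This is more concrete and checkable than a postulated threshold. But it analyzes the static curve, so your $T_1,T_2$ are not the paper's state-driven phase lengths. Also, in your picture $\frac{2-\sqrt{2\varphi}}{2-\varphi}$ is the average of $g$ over all of $[0,\pi]$, dominated by the plateau, not over the decay window. Your ``effective intrinsic duration'' $c\,\frac{2-\sqrt{2\varphi}}{2-\varphi}T_2$ is therefore justified only up to constants. That suffices under $\eqsim$, because for $0<\varphi<2$ the factor equals $1/(1+\sqrt{\varphi/2})\in(\tfrac12,1)$. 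You should say so explicitly, and not reuse the letter $c$ for your threshold constant.

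Two smaller corrections:
\begin{itemize}
\item Comparability of the decay window to a linear decay, uniformly in $\varphi$, needs the floor $\eta_{\min}>0$. With $\eta_{\min}=0$ the tail is quadratic, $g\approx(\pi-\theta)^2/(2\varphi)$ near $\theta=\pi$.
\item Absorbing the stable-phase noise uses only the integrability of $\mathcal K$ (exponent $2-1/\beta>1$) together with $\gamma\le\eta_{\max}/B$. The condition $s\le 1-1/\beta$ plays no role in that step.
\end{itemize}
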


\begin{proof}

According to   \eqref{eq:Uk-proof-start-normalized}, it yields the integration of our proposed LRD in $k$-th segment,
\begin{equation*}
	U_k
	= \eta_{\min}^{(k)}\Delta T_k
	+ \big(\eta_{\max}^{(k)} - \eta_{\min}^{(k)}\big)\,
	\Delta T_k\,\frac{2 - \sqrt{2\varphi_k}}{2 - \varphi_k}.
\end{equation*}

As $\frac{2 - \sqrt{2\varphi_k}}{2 - \varphi_k}$ decreases from 1 to 0, a sufficiently small $\varphi_k$ causes the LRD to maintain a consistently high learning rate over a prolonged duration, analogous to the stable period observed in WSD \citep{hu2024minicpm}. Assume that when $\varphi_k\leq\mathrm{Z}$, 
the learning rate enters a stable phase, i.e.,$\mathbb{E} (\eta_t)= \frac{2 - \sqrt{2\varphi}}{2 - \varphi}\eta_{\max}$, where $r\leq 1$ is a constant close to 1; 
when $\varphi_k>\mathrm{Z}$, 
the learning rate decay enters a phase of significant decay. Let $T_1:=|T_{\varphi_k\leq \mathrm{Z}}|$ denote the duration of stable stage, $T_2:=T_{\max}-T_{\varphi_k>\mathrm{Z}}$ 
denote the duration of this pronounced decay stage.

For $T_1$ stage, since $\mathbb{E} (\eta_t)=\frac{2 - \sqrt{2\varphi}}{2 - \varphi}\eta_{\max}$, according to FSL of constant LRD,
\begin{equation*}
	\begin{aligned}
		\mathcal{E}_t
		\eqsim
		M^{-s\beta}
		+ (\frac{2 - \sqrt{2\varphi}}{2 - \varphi}\eta_{\max} T_{1})^{-s}
		+ \frac{\frac{2 - \sqrt{2\varphi}}{2 - \varphi}\eta_{\max}\sigma^2}{B}.
	\end{aligned}
\end{equation*}
For $T_2$ stage, we  analyze the case where $\varphi_k$ is invariant with $k$. Denote $\varphi_k \equiv \varphi$ and $K_{\tau}=\{K:\tau\in (T_K,T_{K+1}]\}$. If $\eta_{\max}^{(k)}=\eta_{\min}^{(k)}+\epsilon$ where $\epsilon$ is a very small constant, and $\Delta T_k=1$, it gives
\begin{equation*}
	\begin{aligned}
		\bar{T}(\tau)
		= &\sum_{k=1}^{K_{\tau}} 		\eta_{\min}^{(k)}\Delta T_k
		+ \big(\eta_{\max}^{(k)} - \eta_{\min}^{(k)}\big)\,
		\Delta T_k\,\frac{2 - \sqrt{2\varphi}}{2 - \varphi}\\
		= &\frac{\tau}{2} \left[ \left( \eta_{\min}^{(1)} + \eta_{\min}^{(K_{\tau})}\right)
		+ \left( \eta_{\max}^{(1)} + \eta_{\max}^{(K_{\tau})}-\eta_{\min}^{(1)} + \eta_{\min}^{(K_{\tau})} \right)\,
		\,\frac{2 - \sqrt{2\varphi}}{2 - \varphi} \right]\\
		\eqsim & c \frac{2 - \sqrt{2\varphi}}{2 - \varphi} \tau, 
	\end{aligned}
\end{equation*}
where $c$ is a constant and $c\leq \eta_{\max}$.
Thus we have 
\begin{equation*}
	\begin{aligned}
		\tau =\bar{T}^{-1}(t)
		= & \frac{2 - \varphi}{c\left(2 - \sqrt{2\varphi}\right)} t.
	\end{aligned}
\end{equation*}
Furthermore, the total intrinsic time is
\begin{equation*}
	\begin{aligned}
		\bar{T}(T_{\max})
		\eqsim & c \frac{2 - \sqrt{2\varphi}}{2 - \varphi} T_{\max}.
	\end{aligned}
\end{equation*}
In addition, since 
\begin{equation}
	\begin{aligned}
		&\phi
		= \eta_{\min}^{(k)}
		+ \big(\eta_{\max}^{(k)} - \eta_{\min}^{(k)}\big)\,
		\frac{2\big(1 + \cos\theta_k(t)\big)}
		{2\varphi_k + (2-\varphi_k)\big(1 + \cos\theta_k(t)\big)},
	\end{aligned}
\end{equation}
where 
$
\theta_k(t)
:= \frac{\pi(t - T_k)}{\Delta T_k} + (k-1)\pi,
$
we have the LRD-adjusted function in intrinsic time is given by
\begin{equation*}
	\begin{aligned}
		\phi\left( \bar{T}^{-1}(t)\right) = \eta_{\min}^{(k)}
		+ \big(\eta_{\max}^{(k)} - \eta_{\min}^{(k)}\big)\,
		\frac{2\big(1 + \cos\hat{ \theta}_k(t)\big)}
		{2\varphi_k + (2-\varphi_k)\big(1 + \cos\hat{ \theta}_k(t)\big)} .
	\end{aligned}
\end{equation*}
where 
$
\hat{ \theta}_k(t)
:= \frac{\pi\left(\frac{2 - \varphi}{c\left(2 - \sqrt{2\varphi}\right)} \left(t - T_k\right)\right)}{\Delta T_k} + (k-1)\pi.
$

Similar to the proof in Theorem \ref{prop:FSL-cosine}, the noise intensity $\gamma$ within the intrinsic time coordinate is
$\gamma(t)=\frac{\phi_{\cos}(\bar{T}^{-1}(t))}{B}$ , and $ a+b(1-\frac{2 - \varphi}{c\left(2 - \sqrt{2\varphi}\right)}  \frac{t\pi}{\Delta T_k})\leq\phi (\bar{T}^{-1}(t)) \leq a+b(\pi-\frac{2 - \varphi}{c\left(2 - \sqrt{2\varphi}\right)}\frac{t\pi}{\Delta T_k})$.
The noise term can be estimated as
\begin{equation*}
	\begin{aligned}
		N\left(\phi\right) \eqsim &\left[\eta_{\max}-\frac{1}{2} \left( \eta_{\max}^2 - \eta_{\min}^2 \right) \right]\pi + \frac{1}{2} \left( \eta_{\max} - \eta_{\min} \right) \pi \frac{\min\{M,(c \frac{2 - \sqrt{2\varphi}}{2 - \varphi} T_{2})^{1/\beta} \}}{aT_{2}},
	\end{aligned}
\end{equation*}
which gives
\begin{equation*}
	\begin{aligned}
		\mathcal{E}_t
		\eqsim
		M^{-s\beta}
		+ (1+c \frac{2 - \sqrt{2\varphi}}{2 - \varphi} T_{2})^{-s}
		+ \frac{\sigma^2}{B}\left[
		\left(a+b-\frac{ab}{2}\right)\pi
		+ b \pi\,
		\frac{\min\{M,\left(c \frac{2 - \sqrt{2\varphi}}{2 - \varphi} T_{2}\right)^{1/\beta}\}}{aT_{\max}}
		\right].
	\end{aligned}
\end{equation*}
Following the Theorem 5.4 in FSL \citep{li2025functional}, the FSL risk of entire training dynamic under our proposed LRD is as follow
\begin{equation*}
	\begin{aligned}
		\mathcal{E}_t
		\eqsim
		M^{-s\beta}
		+   \left[\frac{2 - \sqrt{2\varphi}}{2 - \varphi}   \left(T_1+T_{2}\right)\right] ^{-s}
		+ \frac{\sigma^2}{B}\left[
		\left(a+b-\frac{ab}{2}\right)\pi
		+ b \pi\,
		\frac{\min\{M,\left(c \frac{2 - \sqrt{2\varphi}}{2 - \varphi} T_{2}\right)^{1/\beta}\}}{aT_{2}}
		\right].
	\end{aligned}
\end{equation*}

\end{proof}

\section{Experimental implementation details}\label{appendix-implement-details}

\subsection{Baselines}\label{baselines-appendix}

We detail the baselines as follows, 
\begin{itemize}
\item \textbf{Cosine LRD}, originally proposed for SGD with restarts \citep{loshchilov2017sgdr}, has become a de facto standard in modern deep learning and large scale model training due to its smooth transition and robust performance \citep{loshchilov2017sgdr,schaipp2025surprising}. The LRD follows 
\begin{equation*}
	\eta_t = \eta_{min} + \frac{1}{2}(\eta_0 - \eta_{min})(1 + \cos(\frac{t\pi}{T})),
\end{equation*} 
where $T$ is total iterations, $\eta_{0}$ and $\eta_{\min}$ are the initial and final learning rates.
Its widespread adoption in modern deep learning and large language models establishes it as an essential classical baseline.

\item \textbf{Linear decay LRD} represents the simplest form of LRDd annealing, providing a transparent and easily interpretable comparison point. Following the implementation in \citep{li2019budgeted}, we define
\begin{equation*}
	\eta_t = \eta_0 \times (1 - t/T),
\end{equation*} 
where $T$ is total iterations,  
after an optional warmup phase. Despite its simplicity, linear decay remains competitive in resource-constrained scenarios \citep{li2019budgeted} and serves as an important baseline for assessing whether more complex LRDs provide meaningful advantages over straightforward linear reduction.

\item \textbf{WSD LRD} exemplifies modern empirical practices in large-scale model training, particularly for language models \citep{zhai2022scaling,hu2024minicpm}. The LRD comprises three distinct phases, (1)Warmup phase: linear ramp from $\eta_{\text{start}}$ to $\eta_{\text{peak}}$ over $T_{\text{warmup}}$;  (2) Stable phase: maintain $\eta_{\text{peak}}$ for $T_{\text{stable}}$ steps.
(3) Decay phase: Linear or exponential decay to $\eta_{\min}$ (we use the WSD LRDr with 10\% steps in the decay
phase following previous works \citep{hu2024minicpm,hagele2024scaling}).
Its empirical success in large language models makes WSD a critical baseline for evaluating LRDs in realistic large-scale scenarios.

\item \textbf{UBA LRD} is a theoretically principled and empirically strong latest baseline. It derives from a min-max problem on optimization dynamics. The LRD takes the form:
\begin{equation*}
	\eta_{t}= \left(\eta_{\max}-\eta_{\min}\right)\frac{2\left[ 1+\cos  \left(  \frac{ \left[ 2\left(t-T_{k}\right) -1\right] \pi}{2\left(T_{k+1}-T_{k}\right)   }    \right) \right]}{ 2\varphi+ (2-\varphi)  \left[ 1+\cos  \left(  \frac{ \left[ 2\left(t-T_{k}\right) -1\right] \pi}{2\left(T_{k+1}-T_{k}\right)   }    \right) \right]  } + \eta_{\min},
\end{equation*}
where $\varphi$ is a hyper-parameter. We use the default setting \citep{tang2026stepsize}, fixing the $\varphi=0.5$. 
Recent systematic evaluations \citep{tang2026stepsize} rank UBA among the top-performing LRDs across diverse architectures and tasks, making it a particularly strong benchmark for theoretical LRDs.

\end{itemize}

\subsection{LRD setting}\label{LRD-Setting-appendix}

For our UTA LRD, we fix the unique hyper-parameter $\varphi_0=0.15$. 
For UBA LRD, we use the default setting \citep{tang2026stepsize}, fixing the $\varphi=0.5$. 
For WSD LRD, we use 10\% steps in the decay phase following the setting in previous works \citep{hu2024minicpm,hagele2024scaling}.

\subsection{Model architectures and hyper-parameters}\label{Model Architectures and Hyper-parameters}

\paragraph{Settings on LLaMA}

Following \citet{lialin2023stack} and \citet{zhao2024galore}, we pretrain vanilla LLaMA models from scratch on the C4 dataset \citep{raffel2020exploring}. We consider four model scales with 60M, 130M, 350M, and 1B parameters. For each scale, we evaluate four training budgets of 10K, 15K, 20K, and 25K steps. The corresponding architectural hyperparameters and token budgets are summarized in Tab.~\ref{tab:llama-arch}. Across all experiments, we set the batch size to 512 and the maximum sequence length to 1024, and train the models using AdamW with $\beta_1/\beta_2=0.9/0.95$. We use a peak learning rate of 1e-3 for the 60M–350M models, and 5e-4 for the 1B model. For a fair comparison across LRDs, we apply a learning rate warmup over the first 10\% of training steps for all experiments, and enforce a minimum learning rate equal to 10\% of the peak learning rate for all LRDrs. Since UTA has the phase drift mechanism that keeps ascending phase at early stage,  we also preserve its original warmup-like behavior.

\begin{table}[!ht]
\centering
\caption{LLaMA architecture hyper-parameters for evaluation. Data amounts are specified in tokens.}
\label{tab:llama-arch}
\small
\setlength{\tabcolsep}{6pt}
\begin{tabular}{ccccccc}
	\toprule
	Params & Hidden Dim & Intermediate Dim& Heads & Layers & Training Steps & Data amount (Tokens) \\
	\midrule
	60M    & 512 & 1376 & 8 & 8& 10k  &  5.2B    \\
	130M  & 768 & 2048 & 12 & 12 & 15k  &  7.9B    \\
	350M   & 1024 & 2736&  16 & 24 & 20k  & 10.5B     \\
	1 B    & 2048 & 5461 & 24 & 32 & 25k  & 13.1B     \\
	\bottomrule
\end{tabular}
\end{table}

\paragraph{Settings on ResNet.}
For ResNet, we adopt the A2 training recipe introduced by the work \citet{wightman2021resnet}, which employs stronger data augmentation and has been shown to substantially improve ResNet performance. The augmentation pipeline includes random cropping, random horizontal flipping, Mixup ($\alpha=0.1$), CutMix ($\alpha=1.0$) applied with 50\% probability, and RandAugment ($M=7$, $N=2$, $\text{MSTD}=0.5$). For all experiments, We train the model 300 epochs with 2048 batch size. For training loss, we use a binary cross-entropy loss. We use AdamW with a weight decay of 0.05 and $\beta_1/\beta_2 = 0.9/0.999$, and set the peak learning rate to 1e-3. Following Win \citep{zhou2024win}, we incorporate a 5-epoch warmup phase for all baseline LRDs, starting from an initial learning rate of 1e-8. 
For LRDs that natively include an ascending phase (e.g., the WSD), we preserve their original warmup-like behavior during that period. Since UTA has the phase drift mechanism that keeps ascending phase at early stage,  we also preserve its original warmup-like behavior. 
Additionally, we apply a 10-epoch cooldown phase at the end of training, with a minimum learning rate of 1e-5. 

\paragraph{Settings on ViT.}
For ViT, we follow the widely-used official training setting of ViTs described in the work \citep{touvron2021training,yu2022metaformer}. For this setting, data augmentation  consists of  random crop, horizontal flipping with probability, Mixup with parameter 0.8 \citep{zhang2017mixup}, CutMix with parameter
1.0 and probability 0.5 \citep{yun2019cutmix}, RandAugment \citep{cubuk2020randaugment} with M = 9, N = 2
and MSTD = 0.5, and Random Erasing with parameter p = 0.25. For all experiments, We train the models under 300 epochs with 1024 batch size. For training loss, we use cross entropy loss. We use the AdamW optimizer with a weight decay of 0.1 and $\beta_1/\beta_2 = 0.9/0.999$. The peak learning rate is set to 1e-3. Additionally, We adopt the same warmup and cooldown settings for baselines as in the ResNet experiments.

\subsection{Hardware and software environment}

All experiments are conducted on hardware consisting of  NVIDIA H800.

\subsection{Datasets}\label{dataset}

\paragraph{C4.} 
C4 dataset is a colossal, cleaned version of Common Crawl’s web crawl corpus, which is mainly
intended to pre-train language models and word representations \citep{raffel2020exploring}. To best simulate the practical
pre-training scenario, we train without data repetition over
a sufficiently large amount of data.

\paragraph{ImageNet.} 
To assess the scalability of our LRD on more demanding and realistic visual recognition tasks, we perform extensive experiments on the ImageNet dataset. 
Widely recognized as the de-facto benchmark for large-scale image classification, ImageNet comprises approximately 1.28 million training images and 50,000 validation images spanning 1,000 diverse classes, and is accompanied by an official, standardized data split. 

\paragraph{CIFAR100.} CIFAR-100 is a widely recognized benchmark for evaluating the performance and efficiency of deep learning models. The dataset consists of 60,000 color images, each with a resolution of 32$\times$32 pixels, categorized into 100 fine-grained classes. These classes are further organized into 20 super-classes, with each class containing 600 images. Every image is annotated with both a fine label, indicating its specific class, and a coarse label, indicating its super-class).

\subsection{Code and reproducibility}

Our implementation strictly follows the PyTorch official API, i.e. it is inherited from the base class
\begin{verbatim}
torch.optim.lr_LRDr.LRDcheduler.
\end{verbatim} It maintains full compatibility with all \begin{verbatim}
torch.optim
\end{verbatim}
optimizers. The complete production-ready code is publicly available.

The code of UBA LRD is available at \url{https://anonymous.4open.science/r/UTA-LRD-7B39/UTA.py}.

We conduct an extensive evaluation comprising \textbf{40-60 experimental runs}, \textbf{requiring approximately 5000 H800 GPU-hours}. It is a substantial commitment for a single-study paper (context: DeepSeek-V3 consumed ~3M H800 GPU-hours). This computational scale ensures statistically reliable conclusions and validates the robustness and scalability of our findings.  The systematic, large-scale validation on a modern Transformer architecture provides strong evidence for UTA’s practical utility and impact in real-world training scenarios.

\section{Experimental results}

\subsection{Performance on language model-LLaMA}

We evaluate on four scales of the LLaMA architecture: 60M, 130M, 350M, and 1B parameters. 
Performance is measured by perplexity on a held-out validation set, which directly reflects the model’s generative uncertainty and language modeling quality.


\begin{figure}[htbp]
\centering
\begin{subfigure}[b]{0.48\textwidth}
	\centering
	\includegraphics[width=\textwidth]{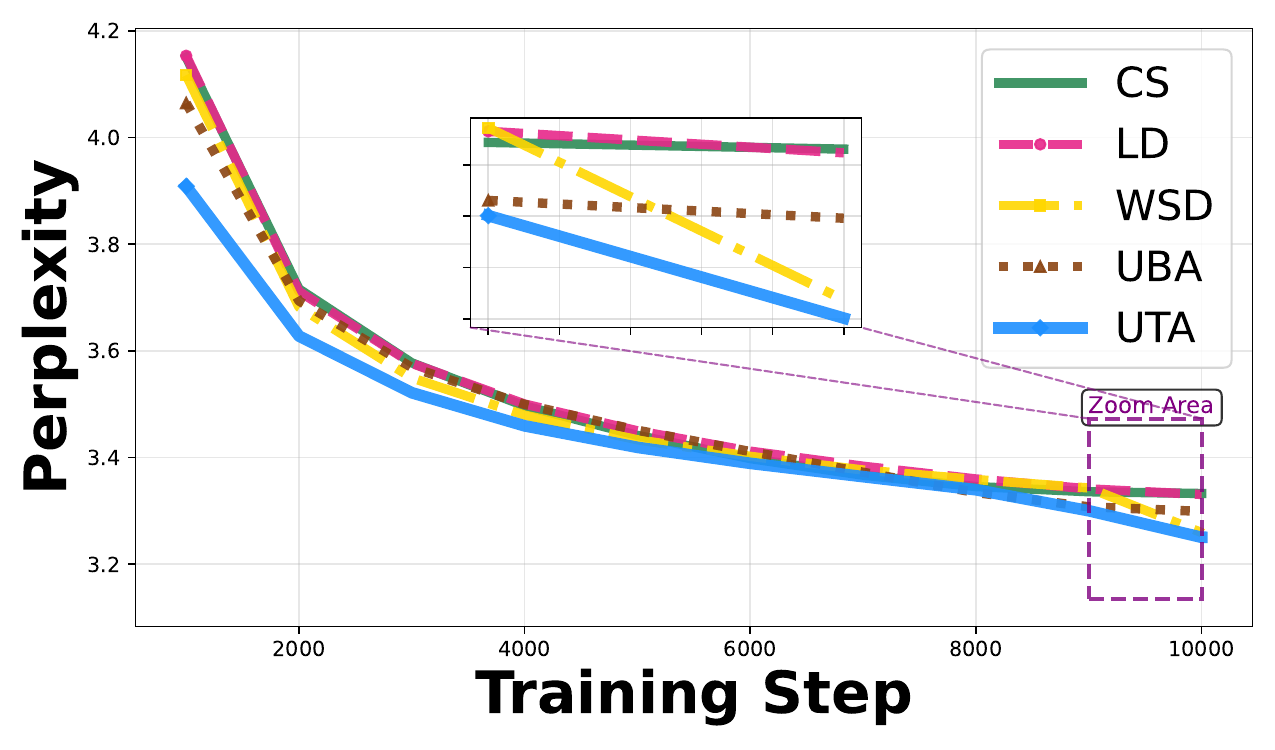}
	\caption{Perplexity of LLaMA-60M.}
	\label{Performance on Language Model-LLaMA60M-appendix}
\end{subfigure}
\begin{subfigure}[b]{0.48\textwidth}
	\centering
	\includegraphics[width=\textwidth]{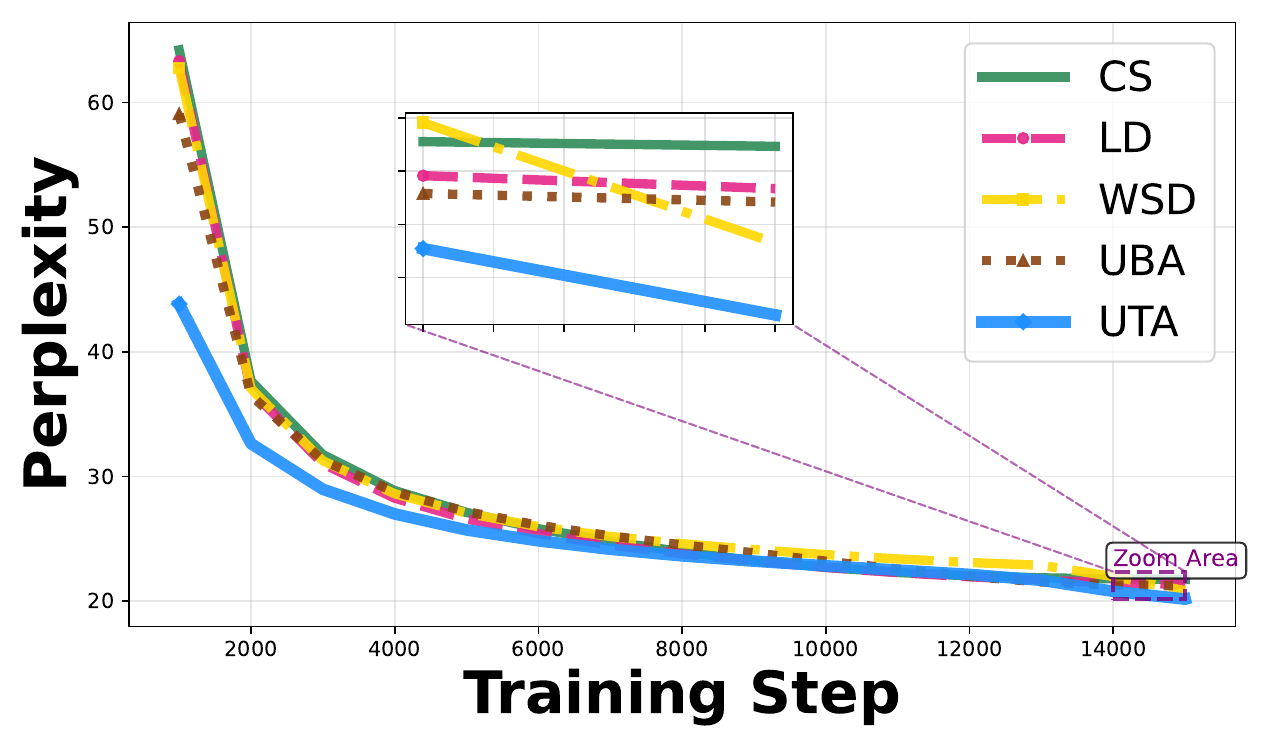}
	\caption{Perplexity of LLaMA-130M.}
	\label{Performance on Language Model-LLaMA130M-appendix}
\end{subfigure}
\begin{subfigure}[b]{0.48\textwidth}
	\centering
	\includegraphics[width=\textwidth]{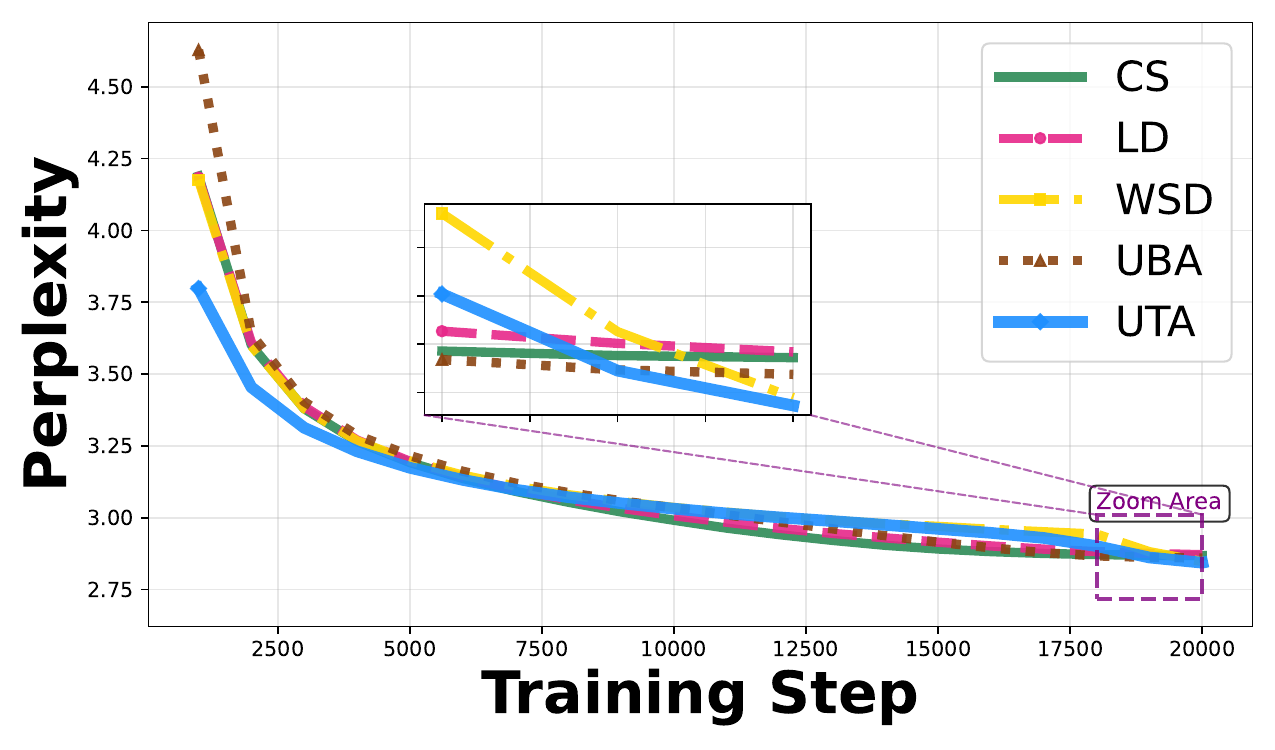}
	\caption{Perplexity of LLaMA-350M.}
	\label{Performance on Language Model-LLaMA350M-appendix}
\end{subfigure}
\begin{subfigure}[b]{0.48\textwidth}
	\centering
	\includegraphics[width=\textwidth]{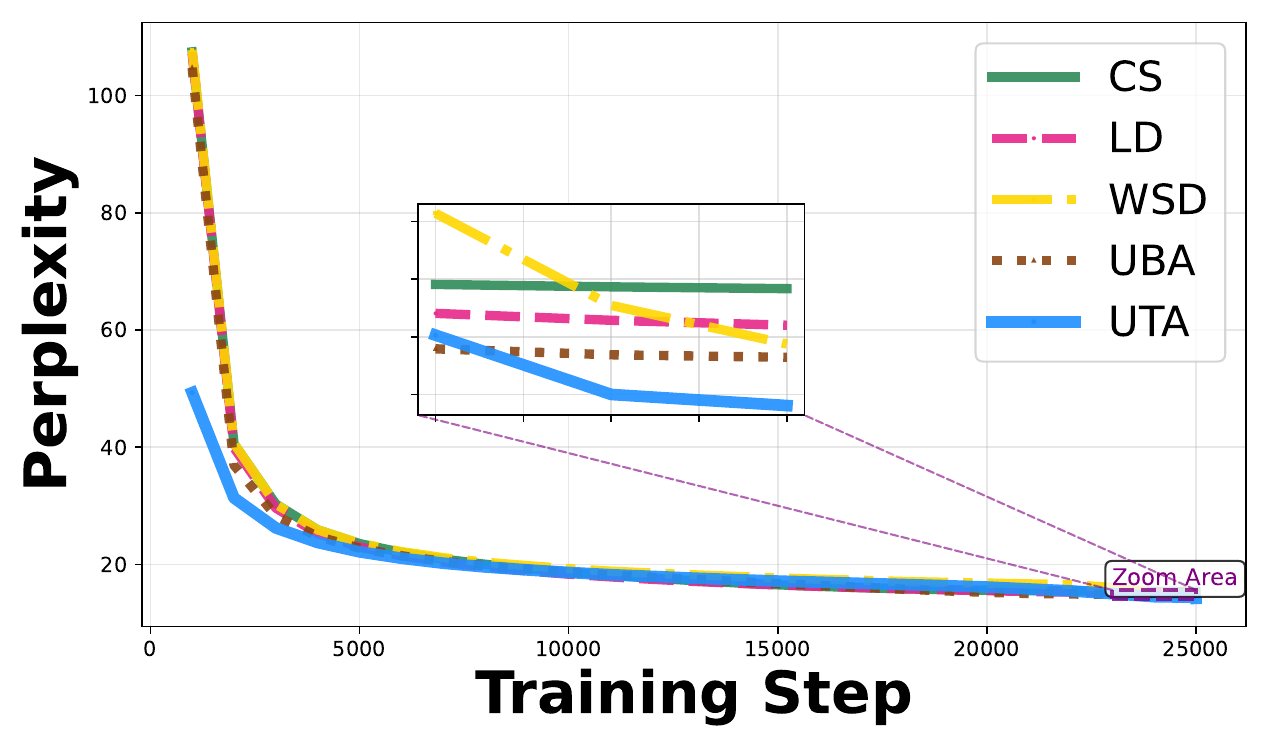}
	\caption{Perplexity of LLaMA-350M.}
	\label{Performance on Language Model-LLaMA1B-appendix}
\end{subfigure}
\caption{Perplexity curves of LLaMA models.}
\label{perplexity-llama-appendix}
\end{figure}

%

\begin{table}[!ht]
\centering
\small
\caption{Perplexity ($e^{\text{validation loss}}$) on LLaMA models. Lower is better (\textdownarrow).  `D-10' and `D-0' denote final learning rates decaying to 10\% and 0\% of the peak rate, respectively. }\label{llama-model-Perplexity-appendix-min_lr_ratio_0}

\setlength{\tabcolsep}{4pt}
\renewcommand{\arraystretch}{1.1}
\rowcolors{5}{gray!8}{white}
\begin{tabular}{@{}lcccc@{}}
	\toprule
	\multirow{2}{*}{LRD} & \multicolumn{4}{c}{LLaMA Model Size} \\
	\cmidrule(lr){2-5}
	& \multicolumn{2}{c}{60M} & \multicolumn{2}{c}{130M}  \\
	& D-10 & D-0 & D-10 & D-0  \\
	\midrule
	CS & 28.03 & 28.27 & 21.76 & 21.99 \\
	LD & 27.99 &   28.21 & 21.43 & 21.89 \\
	WSD & 26.03 &  26.93 & 20.83 &  21.18 \\
	UBA & 27.01 & 27.23  & 21.12 & 21.00 \\
	\rowcolor{cyan!8} UTA (Ours)& \textbf{25.80} & \textbf{26.77}   & \textbf{20.16} & \textbf{20.68} \\
	\bottomrule
\end{tabular}
\end{table}

%

\subsection{Performance on ImageNet}

We further validate the generalization of UTA on the ImageNet classification benchmark using ResNet-50 and ViT-B architectures. This selection tests generalization across model depth, width, and fundamental architectural biases. 
Tab.~\ref{imagenet-tab-appendix} reports the test top-1 accuracy. UTA consistently achieves the highest accuracy across both models. 
Specifically, it improves ResNet-50 accuracy to 78.73\%, outperforming the strongest baseline UBA (78.38\%) by 0.35\%. For ViT-B, UTA reaches 82.30\%, surpassing the competitive UBA LRD (82.05\%) by 0.25\%. 
These gains are statistically meaningful given the scale and complexity of ImageNet
These consistent outperformance demonstrate that UTA generalizes beyond language modeling to vision tasks. It has robust and architecture-agnostic advantage in large-scale, real-world training scenarios. and generalizes beyond language modeling to vision tasks.
\begin{table}[!ht]
\centering
\small
\caption{ImageNet test top-1 Acc. (\%) of different LRDs on ResNet50 and ViT-B under 300 epochs. 
	Higher values indicates better performance (\textuparrow). Best results in bold.}\label{imagenet-tab-appendix}

\renewcommand{\arraystretch}{1.15}
\rowcolors{3}{gray!8}{white}
\begin{tabular}{lcc}
	\toprule
	LRD & ResNet-50 & ViT-B \\
	\midrule
	CS & 78.34$_{\pm 0.024}$ & 79.92$_{\pm 0.057}$ \\
	LD & 77.90$_{\pm 0.028}$ & 79.96$_{\pm 0.104}$ \\
	WSD & 78.30$_{\pm 0.024}$ & 81.14$_{\pm 0.048}$ \\
	UBA & 78.38$_{\pm 0.014}$ & 82.05$_{\pm 0.041}$ \\
	\rowcolor{cyan!8} UTA (Ours) & \textbf{78.73}$_{\pm 0.029}$ & \textbf{82.30}$_{\pm 0.037}$ \\
	\bottomrule
\end{tabular}

\vspace{-0.5em}
\end{table}

We also plot the training loss and validation accuracy across training steps for LRDs on ImageNet using ResNet‑50 and ViT‑B models in Fig. \ref{fig:multi-vision-appendix}. A key and consistent observation is that while our proposed UTA LRDr does not always achieve the lowest training loss, it attains the highest final validation accuracy. 
Specifically, the training loss curves in Fig.  \ref{fig:multi-vision-appendix}(a) and (c) show that, unlike empirically tuned LRDs, such as CS and LD, that aggressively push training loss to very low values, UTA maintains a steady descent accompanied by moderate fluctuations. Its training loss is never the absolute minimum. This is not a sign of inadequate optimization, but a direct consequence of the robust‑optimization design of UTA \eqref{min-max-adam0}. Derived from a min‑max framework that minimizes the learning rate impact under worst-case conditions, UTA implicitly introduces a regularization effect that prevents over‑fitting to the training data. Consequently, although training loss is not minimized as aggressively, UTA consistently surpasses all baselines on validation accuracy as training progresses, demonstrating enhanced generalization rooted in its robust formulation.

Moreover, the learning rate trajectory of UTA exhibits noticeable fine‑grained fluctuations, more pronounced than those of static or predefined LRDs. These fluctuations are not noise. They reflect a state‑aware, active adaptation mechanism that responds to the internal dynamics of training. By dynamically adjusting the learning rate according to local landscape stability, UTA achieves a more robust optimization trajectory, ultimately leading to superior generalization performance.

\begin{figure}[htbp]
\label{Validation Accuracy on ImageNet-ResNet50 appendix}
\centering
\begin{subfigure}[b]{0.48\textwidth}
	\centering
	\includegraphics[width=\textwidth]{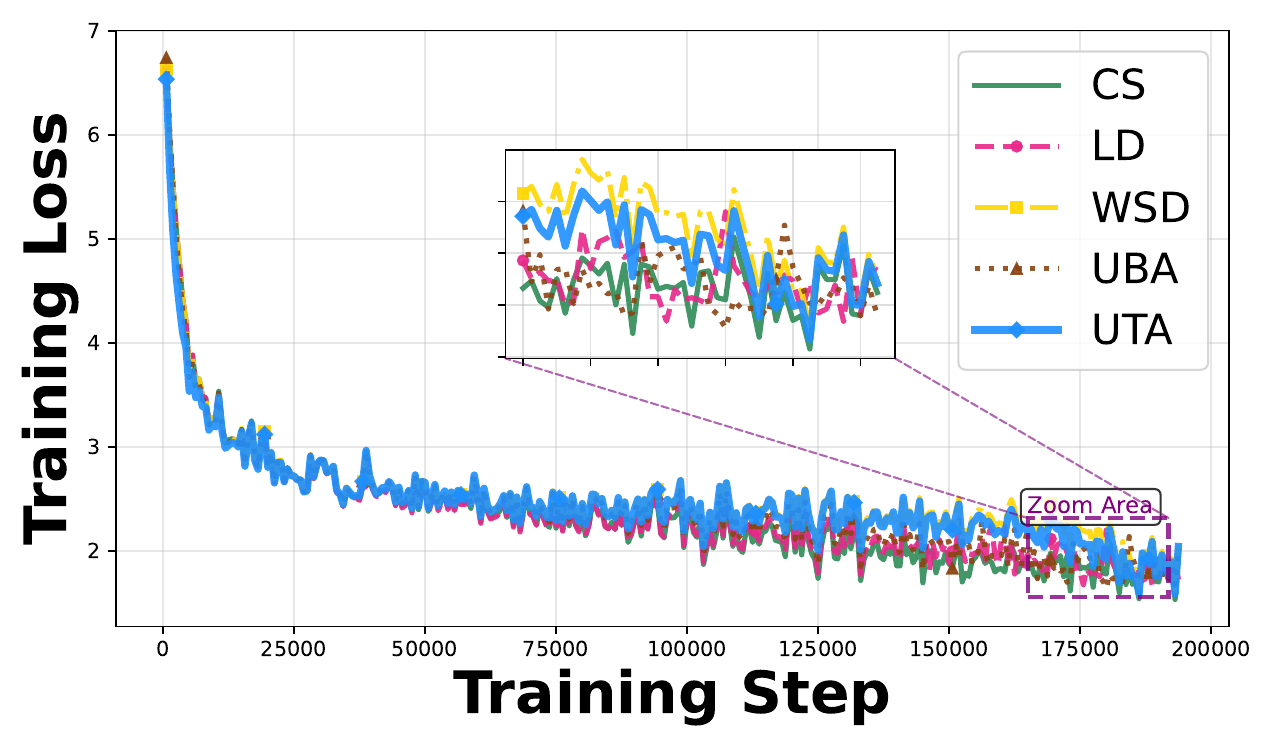}
	\caption{Training Loss on ImageNet-ResNet50}
	\label{fig:sub1-appendix}
\end{subfigure}
\begin{subfigure}[b]{0.48\textwidth}
	\centering
	\includegraphics[width=\textwidth]{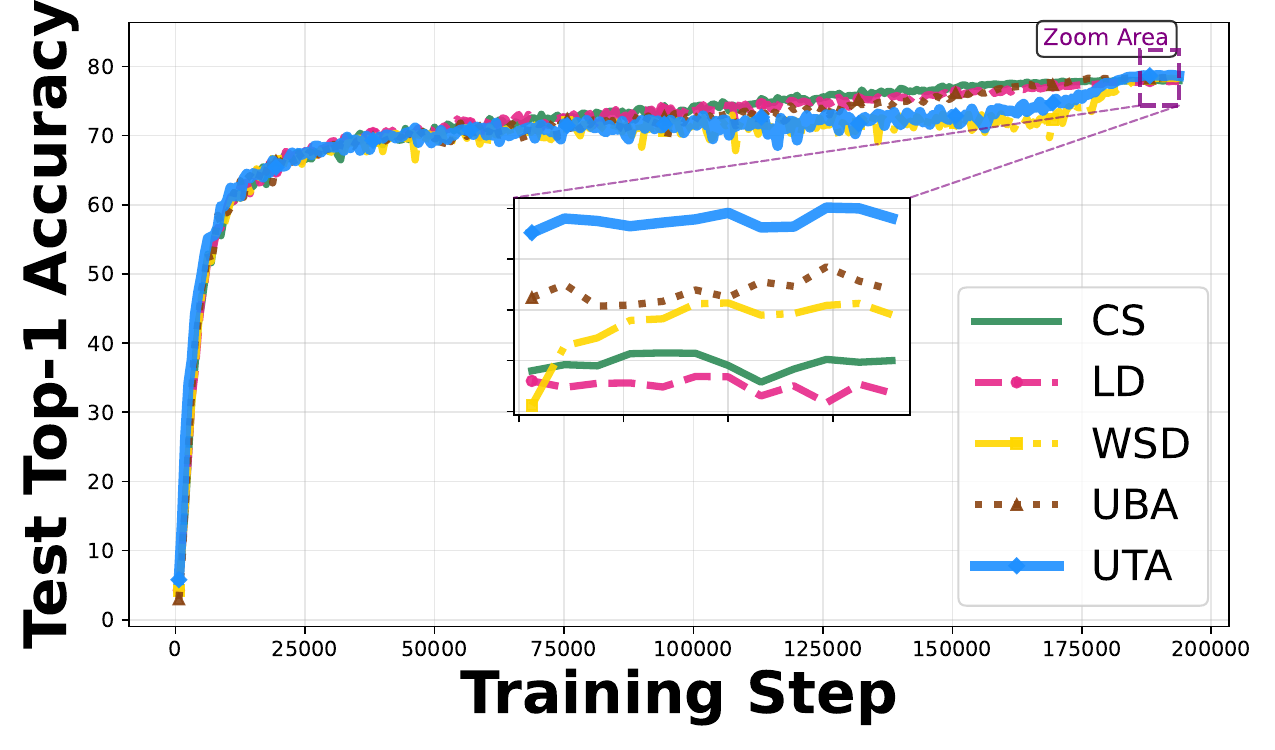}
	\caption{Validation Accuracy on ImageNet-ResNet50.}
	\label{fig:sub2-appendix}
\end{subfigure}
\begin{subfigure}[b]{0.48\textwidth}
	\centering
	\includegraphics[width=\textwidth]{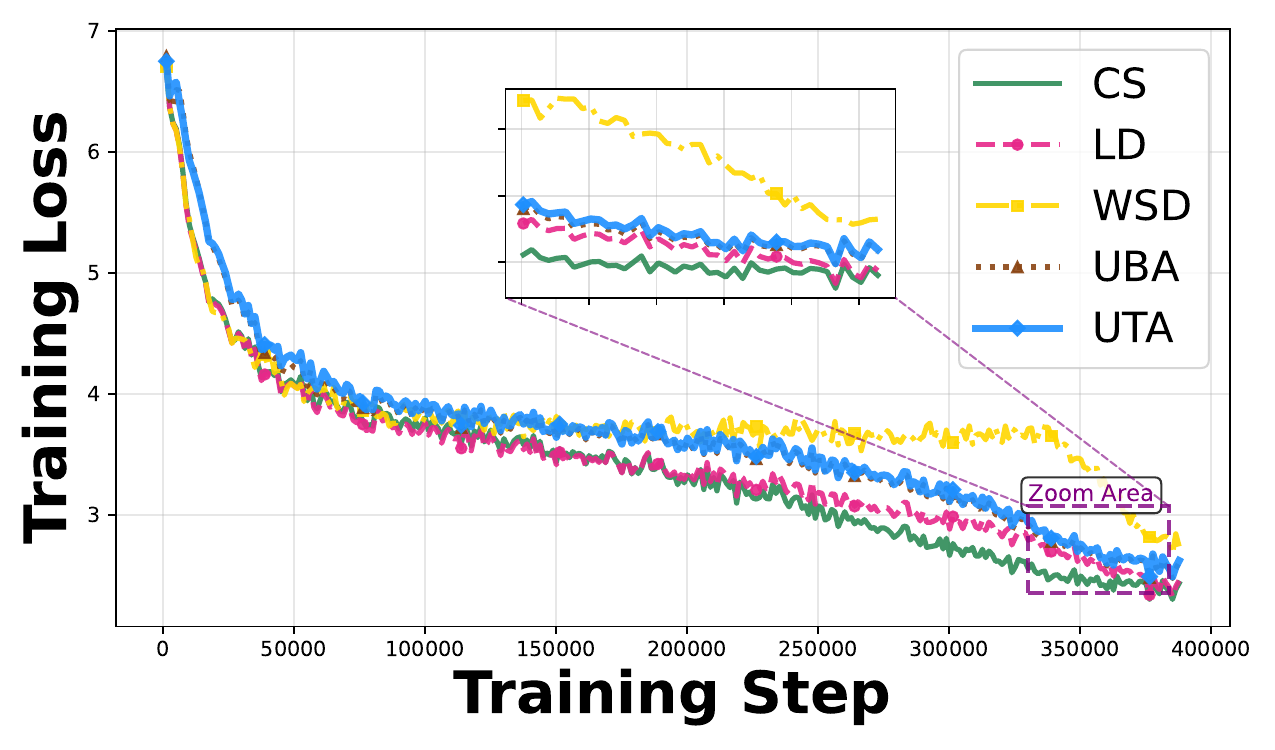}
	\caption{Training Loss on ImageNet-ViT-B.}
	\label{fig:sub3-appendix}
\end{subfigure}
\begin{subfigure}[b]{0.48\textwidth}
	\centering
	\includegraphics[width=\textwidth]{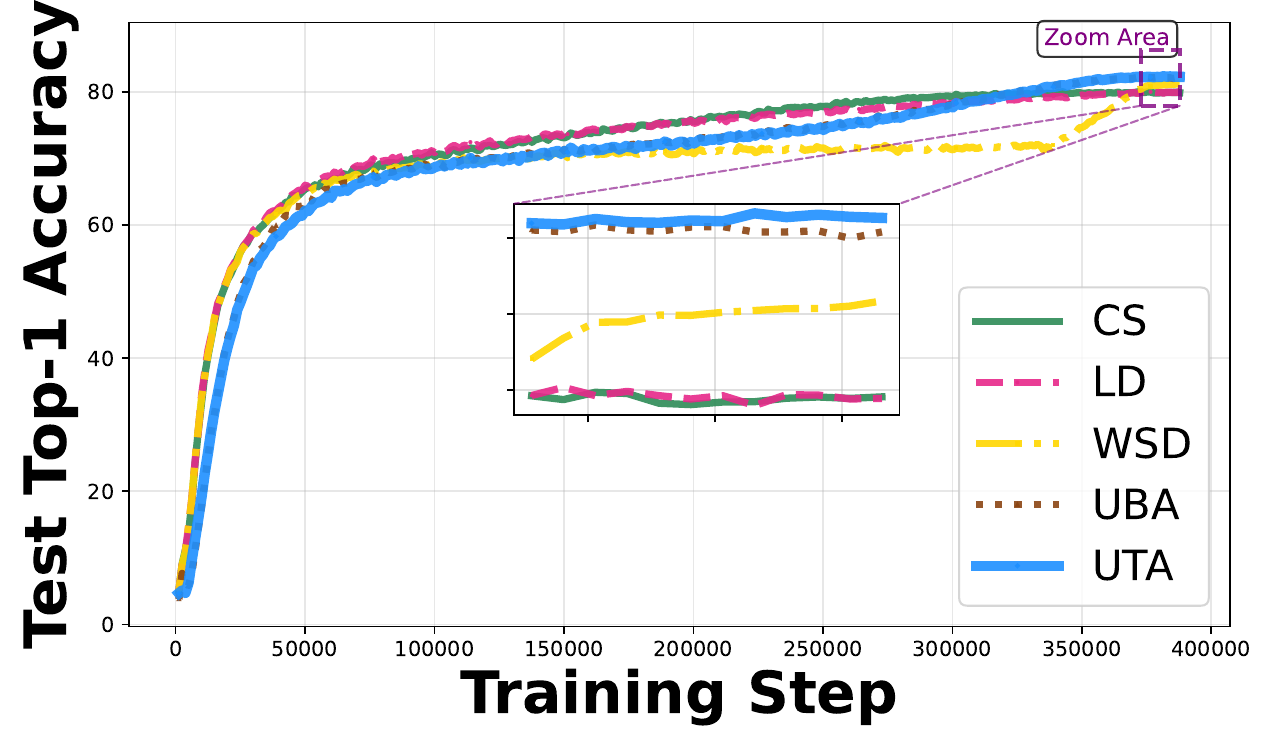}
	\caption{Validation Accuracy on ImageNet-ViT-B.}
	\label{fig:sub4-appendix}
\end{subfigure}
\caption{Training loss and validation accuracy on ImageNet using ResNet50 and ViT-B.}
\label{fig:multi-vision-appendix}
\end{figure}

\subsection{Ablation study}\label{Ablation-Study123}

\paragraph{Peak LR sensitivity analysis.}\label{}

To make a fair comparison, our main experiments used a fixed peak LR across all LRDs. Although these configurations are widely adopted defaults that have been empirically validated by prior works, to prevent peak LR from disadvantaging certain baselines if their optimal peak LRs differ, we conduct a systematic learning rate sweep on LLaMA-60M and 130M models. For 60M, we sweep peak LRs from \(5\times10^{-4}\) to \(1\times10^{-2}\); for 130M, from \(1\times10^{-3}\) to \(7\times10^{-3}\). All LRDs are trained under the same step budgets as in the main experiments, and we report perplexity (\(e^{\text{validation loss}}\)) for direct comparison. Results are summarized in Tab.~\ref{tab:llama-60m-130m-full-lr}.

First, \textbf{Optimal peak LRs differ across LRDs.} As expected, we observe that different LRDs indeed favor different peak learning rates. On LLaMA-60M, CS, LD and UBA achieve their best at \(7\times10^{-3}\), WSD at \(1\times10^{-3}\), while UTA achieves its best jointly at \(1\times10^{-3}\) and \(7\times10^{-3}\) with a perplexity of \(25.80\). On LLaMA-130M, CS and LD peak at \(5\times10^{-3}\), WSD also at \(5\times10^{-3}\), whereas UBA and UTA achieve their best at \(3\times10^{-3}\) and \(1\times10^{-3}\), respectively. This confirms that a fixed LR across all LRDs is not ideal and underscores the necessity of per-LRD tuning for fair comparison.

Second, \textbf{UTA achieves the best overall performance under its optimal LR.} When comparing the best result per LRD (reported in the ``Best from LR Sweep'' column), UTA consistently outperforms all baselines on both model scales. On 60M, UTA reaches a best perplexity of \textbf{25.80}, surpassing the best baseline WSD (26.03) by \(0.23\) and UBA (26.26) by \(0.46\). On 130M, UTA achieves \textbf{20.16}, outperforming WSD (20.82) by \(0.66\) and UBA (20.78) by \(0.62\). These margins, while modest in absolute terms, are consistent and non-trivial in the context of language modeling validation loss.

\paragraph{Parameter analysis of $\varphi_0$}\label{Ablation-Study-d-appendix}

\begin{table}[!t]
\centering
\footnotesize
\caption{Cross-$\varphi_0$ ablation studies: Perplexity of LLaMA-60M using UTA under different $\varphi_0$.}\label{Cross-d-table-appendix1}
\resizebox{\linewidth}{!}{%
	\begin{tabular}{lllllllllll}
		\toprule
		\multirow{2}{*}{Network}&    \multicolumn{10}{c}{ UTA }\\
		\cmidrule(r){2-11}
		&$\varphi_0$=0.01& $\varphi_0$=0.05 &$\varphi_0$=0.1&$\varphi_0$=0.15&$\varphi_0$=0.2 &$\varphi_0$=0.25&  $\varphi_0$=0.5 & $\varphi_0$=1  & $\varphi_0$=2.5 & $\varphi_0$=5 \\
		\midrule
		LLaMA-60M& 26.18 &25.94 & 25.85& 25.81& 25.80 & 25.80 &  25.92 & 26.04  &  26.39 & 26.78\\
		\bottomrule
	\end{tabular}
}
\end{table}

\begin{table}[!t]
\centering
\footnotesize
\caption{Cross-$\varphi_0$ ablation studies: Perplexity of LLaMA-130M using UTA under different $\varphi_0$.}\label{Cross-d-table-appendix2}
	\begin{tabular}{llllll}
		\toprule
		\multirow{2}{*}{Network}&    \multicolumn{5}{c}{ UTA }\\
		\cmidrule(r){2-6}
		&$\varphi_0$=0.15&$\varphi_0$=0.2 &$\varphi_0$=0.25&  $\varphi_0$=0.5  & $\varphi_0$=2.5  \\
		\midrule
		LLaMA-130M& 20.63 &20.65 & 20.66& 20.67& 20.99 \\
		\bottomrule
	\end{tabular}
\end{table}


In this subsection, we perform a sensitivity analysis of the hyper-parameter $\varphi_0 \in \{0.01,0.05,0.1,0.15,0.2 0.25, 0.5, 1.0, 2.5,5\}$ in   \eqref{varphi_0} , which controls the variation speed of learning rate. 
To efficiently identify general parameter trends at low cost, then verify their scalability and robustness on larger models.
We further examine $\varphi_0 \in \{0.1,0.15,0.2 0.25,2.5\}$ on a larger 130M model. We narrow the range of $\varphi_0$ to focus computational resources on the most promising intervals, filtering out ineffective values identified in the initial broad search.
The results on LLaMA-60M listed in Tab. \ref{Cross-d-table-appendix1}. 
The results on LLaMA-130M listed in Tab. \ref{Cross-d-table-appendix2}). 

Our sensitivity analysis reveals three key findings. \textbf{First, UTA exhibits remarkable robustness to $\varphi_0$ over a wide range.} On LLaMA-60M, UTA achieves perplexity below 26.0 for \textbf{7 out of 10} tested $\varphi_0$ values (0.05 through 0.5), and the performance variation across this broad range is remarkably small, spanning only 25.80 to 25.92. Even at the extremes ($\varphi_0=0.01$ and $\varphi_0=5$), the degradation remains modest (26.18 and 26.78, respectively). On LLaMA-130M, UTA maintains perplexity between 20.63 and 20.67 across $\varphi_0 \in [0.15, 0.5]$, which is a variation of merely 0.04, and only degrades to 20.99 at $\varphi_0=2.5$. This \textbf{flat response surface} indicates that UTA's performance is largely insensitive to the choice of $\varphi_0$, a desirable property for practical deployment where extensive hyper-parameter tuning is often computationally prohibitive.

Second, \textbf{the performance trend aligns with our theoretical interpretation.} As $\varphi_0$ decreases from 0.5 to 0.15, performance improves slightly (from 25.92 to 25.80 on 60M, and from 20.67 to 20.63 on 130M). This is consistent with our continuous-time analysis: a smaller $\varphi_0$ corresponds to a larger intrinsic step $U^{\ast}$, enabling more thorough exploration of the loss landscape. However, the gains saturate beyond $\varphi_0=0.15$, suggesting that once the intrinsic step is sufficiently large, further expansion yields diminishing returns—a phenomenon that aligns with the diminishing sensitivity predicted by our theoretical framework.

Third, and most importantly, \textbf{UTA demonstrates significantly superior robustness compared to the baseline UBA.} While both methods share a similar functional form of the learning rate decay, UBA exhibits a much narrower effective range: its performance declines sharply for $\varphi_0 < 0.5$, and further reduction beyond this point brings little improvement or even degradation. In contrast, UTA maintains strong performance across a \textbf{10-fold wider range} of $\varphi_0$ (from 0.05 to 0.5 on 60M), with a performance spread of only 0.24, compared to UBA's much larger degradation outside its narrow optimal region. We attribute this advantage to UTA's learning rate adaptation mechanism, which dynamically adjusts the intrinsic time scale in response to optimization geometry, allowing the optimizer to fully exploit the expanded exploration capacity afforded by larger $U^{\ast}$ while automatically compensating when $\varphi_0$ is suboptimal. This adaptive coupling between the learning rate and the preconditioned gradient norm, which is absent in UBA, ensures that increased intrinsic time translates effectively into improved convergence and generalization, regardless of the specific $\varphi_0$ choice.

Overall, this sensitivity analysis confirms that UTA is not only performant but also \textbf{robust and easy to use}: it achieves near-optimal performance across a broad spectrum of $\varphi_0$ values, substantially reducing the burden of hyper-parameter tuning compared to both UBA and other schedule-based methods.

%

\paragraph{The effect of final LR in decay} Regarding the LRD  implementations, some practices use decay to 10\% of the peak LR, some use 0\%. To investigate whether this choice affects performance. We conduct this ablation on LLaMA-60M and 130M models, comparing both settings for all LRD methods. Results are reported in Tab.~\ref{llama-model-Perplexity-appendix-min_lr_ratio_0} in Appendix. 

We observe that decaying to 0\% consistently yields worse perplexity than decaying to 10\% across nearly all methods and model scales. For instance, on LLaMA-130M, linear decay degrades from 21.43 to 21.89 (0.46), and even our UTA increases from 20.16 to 20.68 (0.52) when using 0\% decay. Aggressively decaying the learning rate to zero prematurely reduces the model's ability to adapt, as the optimizer enters a near-stagnant regime before the model has fully converged. 

Importantly, regardless of the decay ratio, UTA consistently outperforms all baselines. Under the 0\% setting, UTA achieves 26.77 on 60M and 20.68 on 130M, still surpassing the best baselines (WSD at 26.93 and UBA at 21.00) by margins of 0.16 and 0.32, respectively. This confirms that our main conclusions are robust to the choice of final learning rate, and that the default 10\% decay setting does not unfairly advantage UTA.

\paragraph{The role of dynamic state-aware mechanism}\label{ablation-Dynamic State-Aware Mechanism-appendix}

To validate the contribution of the dynamic mechanism in our UTA LRD, we conduct an ablation study comparing the full version (Ours-Full) against a static variant (Ours-Static). We conduct the ablation study on the CIFAR-100 dataset with a ResNet-34 architecture. 
The CIFAR-100 dataset with ResNet-34 provides a balanced testbed. It is sufficiently complex to reveal LRD differences yet computationally tractable for ablation studies. We intentionally keep all other factors identical to isolate the dynamic mechanism's effect.
In the static variant, the dynamic part is removed. All other hyper-parameters and training settings remain identical.

\begin{table}[!ht]
	\centering
	\small
	\caption{Ablation Study: Impact of Dynamic State-Aware Mechanism.}\label{ablation-tab-1-appendix}
	\setlength{\tabcolsep}{4pt}
	\renewcommand{\arraystretch}{1.15}
			\begin{tabular}{cccc}
				\toprule
				& Ours-Static & Ours-Full & Improvement\\
				
				\midrule
				CIFAR100-ResNet34(\textuparrow)   &76.86    &  77.81  &  0.95 \\
				ImageNet-ResNet50(\textuparrow)  &78.23    &  78.55   &   0.32 \\
				LLaMA60M(\textdownarrow) & 27.38 &   26.78  &   0.60 \\
				LLaMA130M(\textdownarrow) & 21.02 &   20.15  &  0.87  \\
				\bottomrule
			\end{tabular}
	\end{table} \vspace{-0.5em}
	
	\paragraph{Comparison with automatic LR methods.}
	While our primary baselines focus on LRDs, another line of LR research is automatic LR methods such as D-Adaptation~\citep{defazio2023learning} and Prodigy~\citep{mishchenko2023prodigy}. These methods dynamically adjust the learning rate during training \textit{without predefined schedules}. Although these methods are not the main focus of our work, we conduct a comparison to position UTA against them.
	
	Specifically, we evaluate D-Adaptation and Prodigy on LLaMA-60M, 130M, and 350M under a fixed 10K-step training budget, following their default hyperparameter configurations. Results are reported in Tab.~\ref{tab:vs-auto-lrs-appendix}. As shown, UTA consistently outperforms both automatic schedulers across all three model scales, achieving average perplexity reductions of 1.2 and 0.9 over D-Adaptation and Prodigy, respectively. We hypothesize that UTA's advantage stems from its schedule-aware shape design, which explicitly accounts for the remaining training horizon, whereas automatic methods rely solely on gradient statistics and may lack such long-term planning capability.
	
	Given the clear superiority of UTA and the high computational cost of scaling automatic methods to larger models, we exclude the 1B model from this comparison and focus our main experiments on carefully designed schedule baselines.
	
	\begin{table}[!ht]
		\centering
		\small
		\caption{Perplexity ($e^{\text{validation loss}}$) comparison between automatic LR methods (D-Adaptation, Prodigy) and our UTA on LLaMA models. Lower is better ($\downarrow$).}
		\setlength{\tabcolsep}{5pt}
		\renewcommand{\arraystretch}{1.15}
		\label{tab:vs-auto-lrs-appendix}
		\begin{tabular}{@{}lccc@{}}
			\toprule
			Method 		& 60M & 130M & 350M \\
			\midrule
			D-Adaptation&  25.80 & 20.65 & 17.64 \\
			Prodigy & 27.37 & 21.26 & 17.97 \\
			UTA (Ours)& \textbf{25.80} & \textbf{20.16} & \textbf{17.17} \\
			\bottomrule
		\end{tabular}
	\end{table}	
	
	\paragraph{Curvature alignment between UTA and UBA.}\label{Curvature-alignment-UTA-UBA-appendix}
	
	We observe that UTA belongs to UBA family, and the effective scale of UTA hyperparameter is modulated by the RMS norm of the preconditioned update (approximately 0.2) \citep{liu2025muon,kosson2023rotational}, whereas UBA’s hyperparameter $\varphi$ acts directly. To isolate and evaluate the contribution of UTA’s state‑adaptive mechanism—independent of hyperparameter scaling—we conduct a parameter‑alignment experiment. This ensures that the underlying trend of the learning‑rate curve is comparable between UTA and UBA, allowing us to directly attribute performance differences to UTA’s dynamic fluctuations.

	Specifically, we align the effective learning‑rate scales by setting $\varphi$ for UBA and $\varphi_0$ for UTA such that $\varphi\approx \varphi_0 \times 0.2 $. We then train models under identical settings and compare both the macroscopic learning rate trajectories and the final performance. The results are listed in Tab. \ref{ablation-tab-2-appendix} and plotted in Fig. \ref{Curvature alignment between UTA and UBA}.
	
	The results confirm that after parameter alignment, UTA and UBA produce LRD with nearly identical macroscopic shapes. However, UTA consistently achieves superior final accuracy. This performance gain can be directly attributed to the fine‑grained, state‑dependent fluctuations unique to UTA, which are absent in the static UBA curve. These fluctuations act as a form of online regularization, allowing the optimizer to navigate the loss landscape more robustly. 
	
	\begin{table}[!ht]
		\centering
		\small
		\caption{Curvature alignment between UTA and UBA.}\label{ablation-tab-2-appendix}
		\setlength{\tabcolsep}{4pt}
		\renewcommand{\arraystretch}{1.15}
				\begin{tabular}{cccc}
					\toprule
					& UBA($\varphi=0.03$) & UTA($\varphi_0=0.15$) & Improvement\\
					
					\midrule
					ImageNet-ResNet50(\textuparrow)  &78.47   &  78.64   &   0.17 \\
					LLaMA130M(\textdownarrow) & 21.08 &   20.15  &  0.93  \\
					LLaMA1B(\textdownarrow) & 14.81 &   14.53  &  0.28  \\
					\bottomrule
				\end{tabular}
		\end{table} \vspace{-0.5em}

		\begin{figure}[htbp]
			\centering
			\begin{subfigure}[b]{0.88\textwidth}
				\centering
				\includegraphics[width=\textwidth]{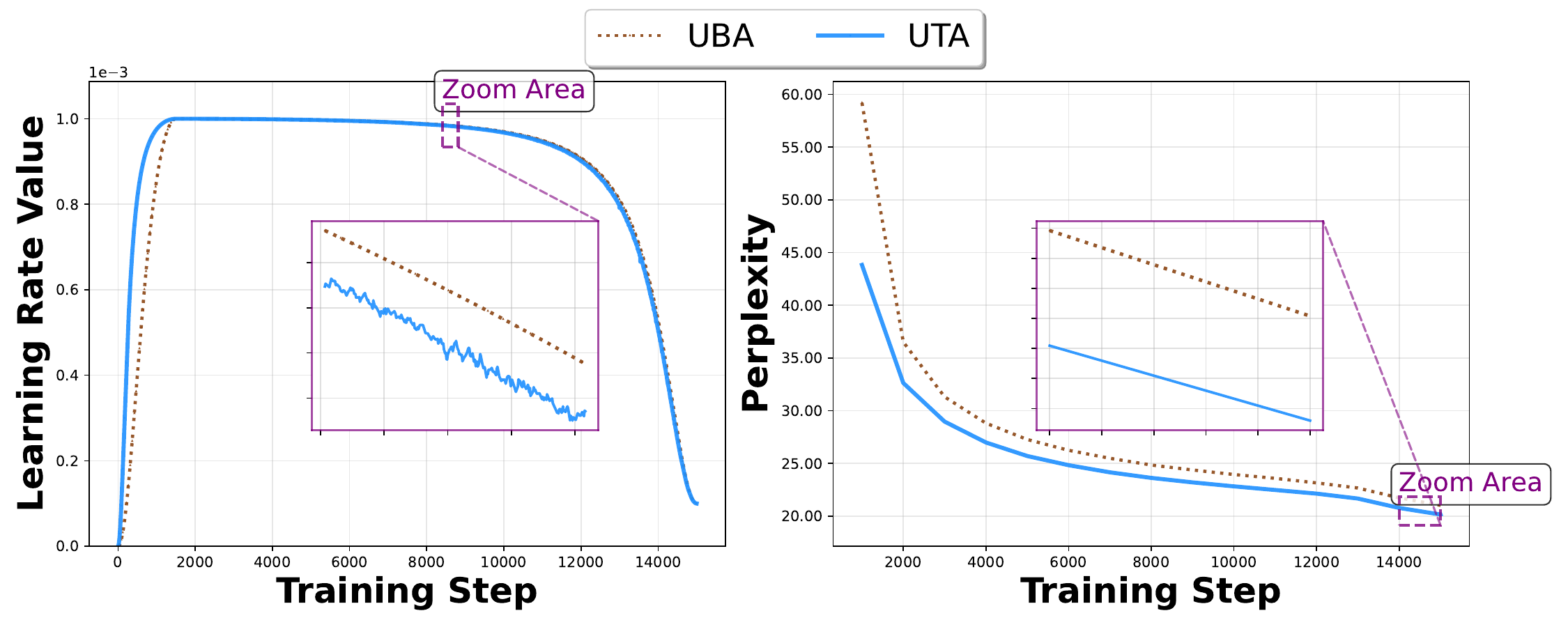}
				\caption{LLaMA-130M.}
				\label{UBA-UTA-130M-3e-3}
			\end{subfigure}
			\begin{subfigure}[b]{0.88\textwidth}
				\centering
				\includegraphics[width=\textwidth]{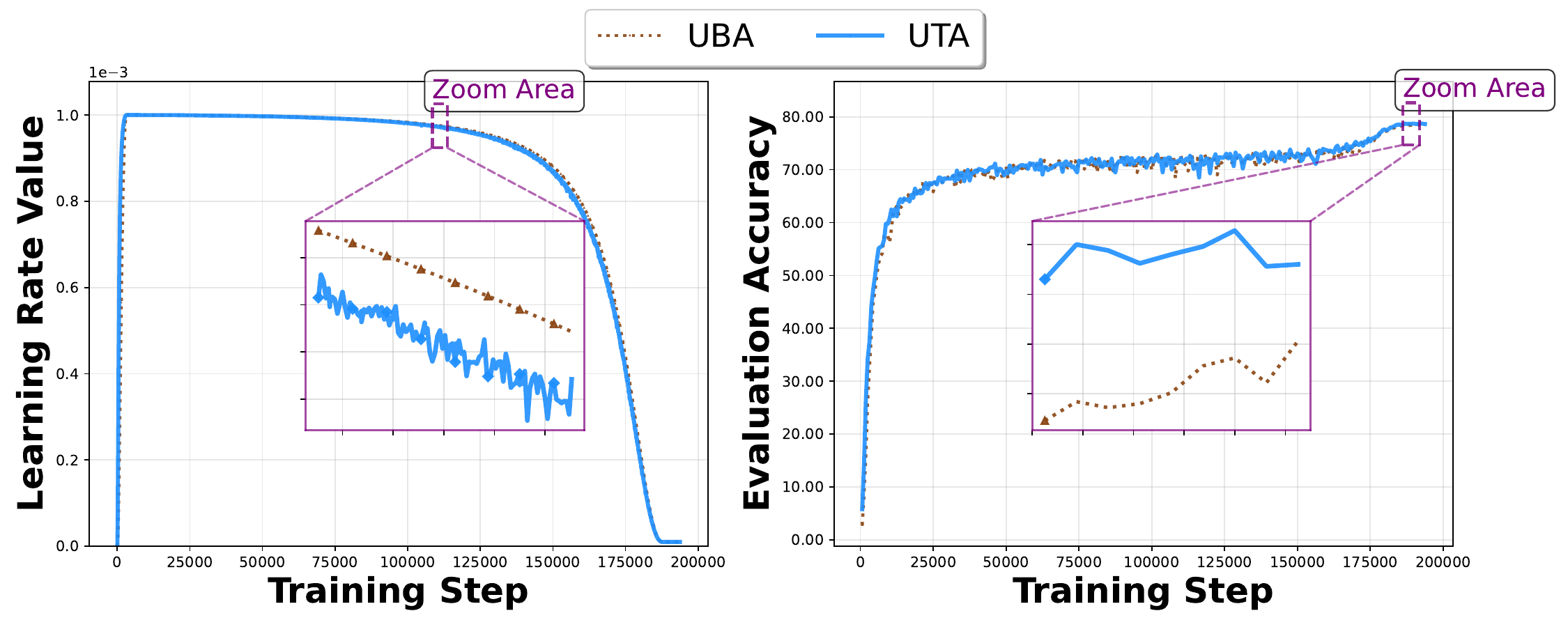}
				\caption{ImageNet-ResNet50.}
				\label{UBA-UTA-imagenet-resnet50-3e-3}
			\end{subfigure}
			\caption{Curvature alignment between UTA and UBA.}
			\label{Curvature alignment between UTA and UBA}
		\end{figure}

		\paragraph{Generalization to parameter-efficient fine-tuning}\label{PEFT-scenario-appendix}
		
		
		To demonstrate the versatility of UTA beyond full-parameter optimization, we evaluate it in parameter-efficient fine-tuning (PEFT) scenarios. PEFT presents distinct challenges: only a small subset of parameters (e.g., LoRA adapters) are updated, creating a different optimization landscape where learning-rate scheduling may have amplified effects.
		
		Experimental Setup. We fine-tune a pre-trained ViT-B model on two datasets: (1) \textbf{CIFAR-100} (generic object classification) and (2) \textbf{OxfordPets} (fine-grained pet breed classification). We employ \textbf{LoRA} with rank $r=16$ applied to the query, key and value projections in attention layers, and dense layer. We compare UTA against the baseline UBA LRD.
		All methods use the same total training budget (10 epochs), batch size, and optimizer AdamW setting such as 0.01 weight decay.

		
		\begin{table}[!ht]
			\centering
			\small
			\caption{UTA for LoRA with ViT-B.}\label{ablation-tab-lora-appendix}
			\setlength{\tabcolsep}{4pt}
			\renewcommand{\arraystretch}{1.15}
			\begin{tabular}{ccccccc}
				\toprule
				\cmidrule{2-5}
				Network & Dataset  & CS & LD & WSD & UBA & UTA \\
				\midrule
				\multirow{2}{*}{ViT-B}& CIFAR100 & 91.93 & 92.45 & 92.62 &  92.58 & \textbf{92.81}   \\
				& OxfordPets   & 92.59 & 92.67& 92.86 &  93.05 &  \textbf{93.43}\\
				\bottomrule
			\end{tabular}
		\end{table} \vspace{-0.5em}

		\subsection{Discussion of UTA characteristics}
		To illustrate the characteristics of UTA in practice, we plot its learning rate trajectory during the training of ImageNet-ResNet50 in Fig.  \ref{learning-rate-figure-appendix}.
		
		Three key observations emerge. 
		
		(i) \textbf{Overall Trajectory.} The general shape of UTA closely resembles that of the widely adopted WSD. It exhibits a prolonged stable‑like phase followed by decay. Notably, however, the duration of the stable phase and and the decay rate are automatically determined by UTA, rather than being preset by hyperparameters. 
		
		(ii) Although the average learning rate of WSD is higher than that of UTA, UTA consistently achieves better performance. This confirms that the performance gains are not merely due to a higher average learning rate, but stem from UTA being near-optimal in both shape and scale, a property grounded in the preceding methodological and theoretical analysis.
		
		(iii) \textbf{Training Adaptive Fluctuations.} Unlike most conventional LRDs that follow a static, predefined curve, UTA introduces fine‑grained fluctuations in the learning rate. These adjustments are driven by real‑time feedback from the training state, reflecting the optimizer’s internal dynamics. This responsiveness demonstrates UTA’s strong adaptive capability in practical training scenarios.	
		
		\begin{figure}[!t]
			\centering
			{
				\includegraphics[width=0.8\linewidth]{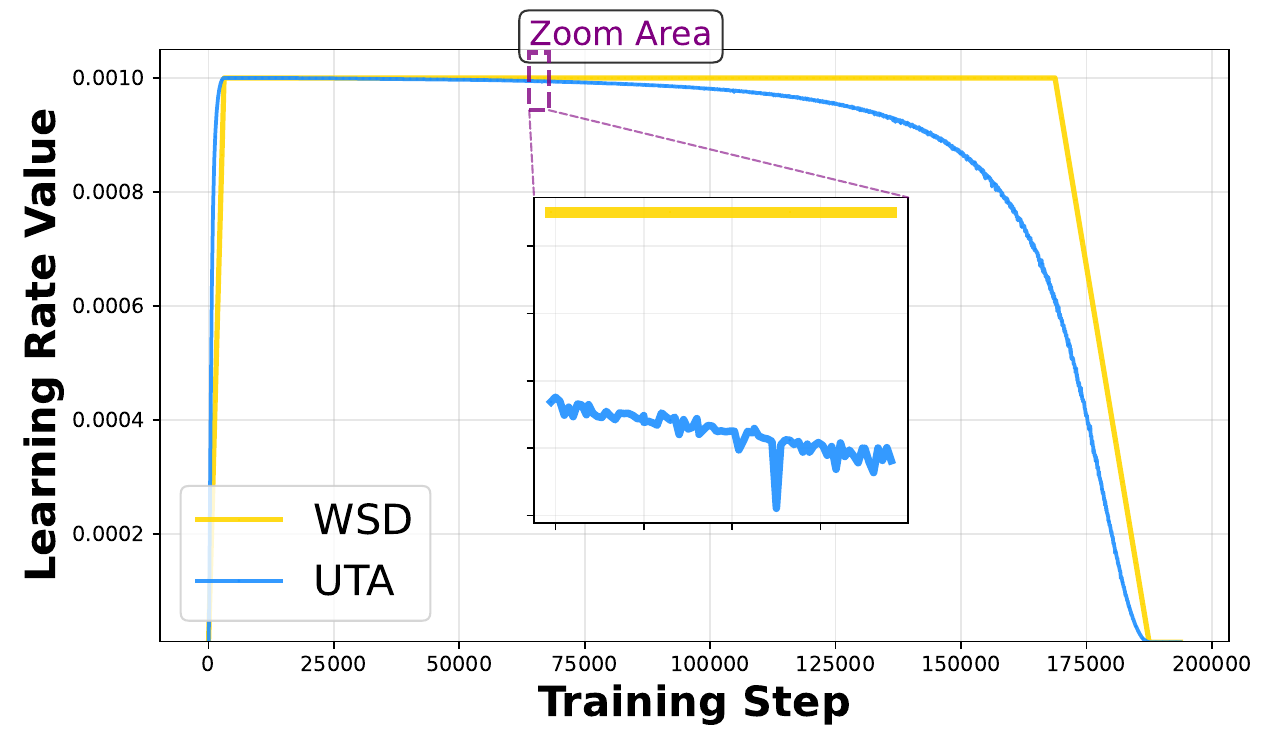}
				
			}
			\caption{The learning rate curve in UTA LRD across training steps.}
			\label{learning-rate-figure-appendix} 
		\end{figure}

		\section{Related work}\label{related-work-appendix}
		
		The LRD plays a crucial role in deep learning, and prior research has investigated a wide range of approaches. 
		Step LRD is a typical instance that reduces the learning rate by a decay factor after a predefined number of epochs \citep{he2016deep}. 
		A common enhancement is the warmup phase, which gradually raises the learning rate early in training to stabilize optimization \citep{he2016deep}.
		Then Loshchilov et al. \citep{loshchilov2017sgdr} propose the cosine LRD, which is widely regarded as the de-facto  standard choice in large-scale model training \citep{loshchilov2017sgdr,schaipp2025surprising}.
		To improve the performance under budgeted training scenarios, some LRDs include the cyclical LRD \citep{smith2017cyclical}, alternative LRD under budgeted training setting \citep{li2019budgeted} and REX LRD \citep{chen2022rex}, yet there is no consensus on the optimal choice.
		
		Recently, research in this area has evolved along two main lines. 
		A line of work focus on designing LRD primarily through empirically driven approaches, and effectively enhance accuracy.
		Pan et al. \citep{pan2022extremebert} introduce elastic step decay LRD to accelerate the pre-training for Berts.
		As an alternative on large model training, Warmup-Stable-Decay (WSD) LRD \citep{zhai2022scaling,hu2024minicpm} uses an extended constant learning rate phase before late-stage decay. It has become popular in LLM pre-training \citep{hagele2024scaling,hu2024minicpm} and is deployed in training industrial-scale LLMs such as DeepSeek-V3 \citep{liu2024deepseek} and Kimi-K2 \citep{team2025kimi}. Yet its mechanism remains unclear, recent works \citep{wen2024understanding,schaipp2025surprising,li2025functional} offer some insights, showing that the advantage of WSD can be explained and quantified through scaling efficiency comparisons with other LRDs.

		Another line of research is more theoretically driven, seeking to establish principled frameworks and scaling laws for designing or analyzing LRD.
		Pan et al. \citep{pan2022eigencurve} propose Eigencurve, and demonstrate that it has minimax-optimal convergence on skewness Hessian quadratic objectives.
		Defazio et al. \citep{defazio2023optimal} establish optimal linear decay LRD and refinements through rigorous mathematical derivation. 
		Some studies \citep{tissue2025scaling,li2025functional,luo2025multi} establish scaling law with LRD, providing theoretical explanation for empirical results observed in this area. 
		Recently, Tang et al. propose a theoretically grounded LRD called UBA LRD \citep{tang2026stepsize}, derives from a constructed UBA framework. This framework adopts a min‑max formulation that explicitly accounts for robustness to variations in loss landscape curvature, thereby guaranteeing performance under worst‑case conditions. The UBA LRD eliminates the need for heuristic design while maintaining strong and consistent performance across diverse tasks, architectures, model scales, and training budgets. Consequently, it removes the costly trial‑and‑error process typically required to select an appropriate LRD for a given network and training setup.

\end{document}